\documentclass{article}

\PassOptionsToPackage{table}{xcolor}
\usepackage{xcolor}

\definecolor{linkblue}{RGB}{0,90,180}
\definecolor{lightlinkblue}{RGB}{70,150,230} 

\usepackage[
    colorlinks=true,
    bookmarksopen=true,
    linkcolor=lightlinkblue,
    citecolor=linkblue,      
    urlcolor=linkblue       
]{hyperref}

\usepackage{xspace}
\usepackage[a4paper, left=3cm, right=3cm, top=3cm, bottom=3cm]{geometry}

\usepackage[T1]{fontenc}
\usepackage[utf8]{inputenc}
\usepackage{lmodern}
\usepackage{libertine}
\usepackage{microtype}

\usepackage{amsmath}
\usepackage{amssymb}
\usepackage{amsthm}
\usepackage{mathrsfs}
\usepackage{mathtools}
\usepackage{bm}
\usepackage{nicefrac}
\usepackage{upgreek}

\usepackage{graphicx} 
\usepackage{subcaption}
\usepackage{booktabs}
\usepackage{makecell}
\usepackage{multirow,multicol}
\usepackage{array}
\usepackage{enumitem}
\usepackage{float}
\usepackage{tikz}
\usepackage{siunitx}
\usepackage{bbm}
\usepackage{comment}
\usepackage{pifont}
\usepackage{lineno}

\usepackage{algorithm}
\usepackage{algpseudocode}
\makeatletter
\providecommand{\theHALG@line}{}
\renewcommand{\theHALG@line}{\thealgorithm.\arabic{ALG@line}}
\makeatother

\usepackage[backend=biber, style=numeric-comp, sorting=none, maxnames=1]{biblatex}
\usepackage[nameinlink]{cleveref}

\theoremstyle{plain}
\newtheorem{theorem}{Theorem}[section]

\newtheorem{cor}[theorem]{Corollary}
\newtheorem{prop}[theorem]{Proposition}

\theoremstyle{definition}

\theoremstyle{remark}

\crefname{thm}{theorem}{theorems}
\Crefname{thm}{Theorem}{Theorems}
\crefname{lem}{lemma}{lemmas}
\Crefname{lem}{Lemma}{Lemmas}
\crefname{cor}{corollary}{corollaries}
\Crefname{cor}{Corollary}{Corollaries}
\crefname{prop}{proposition}{propositions}
\Crefname{prop}{Proposition}{Propositions}
\crefname{defn}{definition}{definitions}
\Crefname{defn}{Definition}{Definitions}
\crefname{ex}{example}{examples}
\Crefname{ex}{Example}{Examples}

\newcommand{\bb}[1]{\mathbb{#1}}
\newcommand{\diff }{\,\mathrm{d}}

\newcommand{\R}{\bb{R}}

\newcommand{\Id}{I_{\mathcal{X}}}
\newcommand{\sigk}{\sigma_\kappa}
\title{Gaussian Mixture Models in Hilbert Spaces via Kernel Methods}

\author{
  Daniel López-Montero \\
  {\small Friedrich-Alexander-Universität Erlangen-Nürnberg}
  \and
Antonio Álvarez-López \\
  {\small Friedrich-Alexander-Universität Erlangen-Nürnberg} \\
  {\small Universidad Autónoma de Madrid}
  \and
  Marcos Matabuena \\
  {\small Mohamed bin Zayed University} \\ 
  {\small of Artificial Intelligence}
}

\date{}

\begin{document}

\maketitle

\begin{abstract}
Modern datasets across many disciplines increasingly consist of time-evolving, potentially infinite-dimensional random objects, such as dynamic functional data, which are naturally modeled in Hilbert spaces. In these settings, characterizing probability measures, for example, through densities, can be ill-defined or technically challenging. Motivated by clustering applications, we propose a Gaussian mixture framework for Hilbert-space-valued data based on kernel mean embeddings and develop efficient optimization algorithms for estimation. We establish theoretical guarantees showing that the proposed algorithm is well defined and that the model yields a dense class of approximations in infinite-dimensional spaces. We evaluate the framework through extensive experiments on diverse structures and data geometries, including $L^2$-functional data and random graphs in Laplacian spaces arising in modern medical applications. 
\end{abstract}

\footnotetext{Code, data and experiments are available at \url{https://github.com/dani2442/rkhs-mixture-models/}.}

\tableofcontents

\section{Introduction}\label{sec:intro}

Technological progress across multiple domains is generating high-dimensional datasets that demand new analytical tools. 
Modern data structures, such as random functions, spatial fields, images, signals, graphs, or trajectories are often more naturally represented as random objects in Hilbert spaces than as finite-dimensional Euclidean vectors in $\mathbb{R}^d$. Analyzing the data directly using Hilbert space methods preserves their intrinsic geometric structure and allows their full potential to be exploited \cite{ramsayFunctionalDataAnalysis2005,ferratyNonparametricFunctionalData2006}.

In many applications, observed data are heterogeneous and may arise from $K$ latent populations, each characterized by a probability law $P_i$, with $i=1,\dots,K$. Mixture models \cite{fraleyModelBasedClusteringDiscriminant2002} provide a flexible framework for representing such data-generating mechanisms and, under suitable regularity conditions, can approximate broad classes of probability measures. This paper proposes a mixture-based framework for probability measures on a separable Hilbert space $\mathcal{X}$, extending the classical idea of Gaussian mixture models (GMM) for finite-dimensional data \cite{dempsterMaximumLikelihoodIncomplete1977}.

Extending GMMs beyond finite-dimensional spaces is challenging. Infinite-dimensional Hilbert spaces do not admit a Lebesgue measure, so densities cannot be defined in the standard way, and likelihood-based methods typically require the choice of a reference measure \cite{dapratoStochasticEquationsInfinite2014,luSequentialMonteCarlo2026}.
We therefore adopt a density-free perspective: we fit Gaussian mixture models by minimizing the Maximum Mean Discrepancy (MMD) \cite{grettonKernelTwoSampleTest2012a} between the kernel mean embedding of the empirical measure and that of the target mixture model. We now formulate the problem.

\paragraph{Model and objective.}
Let $P$ be an unknown probability measure on a separable Hilbert space $\mathcal{X}$. A Gaussian measure on $\mathcal{X}$, denoted by $\mathcal{N}(m,\mathcal{K})$, is uniquely determined by a mean element $m\in\mathcal{X}$ and a covariance operator $\mathcal{K}\colon\mathcal{X}\to\mathcal{X}$, which generalizes the finite-dimensional covariance matrix.  To ensure non-Dirac Gaussian components with finite total variance, $\mathcal{K}$ must be non-zero, self-adjoint, positive semi-definite, and trace-class.

For a fixed number of components $K \in \mathbb{N}$, we approximate $P$ with a finite Gaussian mixture
\begin{equation}\label{eq:mixture_model}
Q_\theta \coloneqq \sum_{k=1}^K \pi_k\,\mathcal{N}(m_k,\mathcal{K}_k),
\qquad
 \pi_k\ge 0\quad \forall k,\qquad \sum_{k=1}^K \pi_k=1,
\end{equation}
parameterized by $\theta \coloneqq (\pi_k, m_k, \mathcal{K}_k)_{k=1}^K \in \Theta_K$, where $\Theta_{K}$ denotes the admissible parameter space.

We fit $Q_\theta$ to $P$ by minimizing the Maximum Mean Discrepancy induced by a measurable positive-definite kernel $\kappa \colon \mathcal{X} \times \mathcal{X} \to \mathbb{R}$:
\begin{equation}\label{eq:objective}
\operatorname{MMD}_\kappa^2(P,Q_\theta) = \mathbb{E}_{X,X'\sim P}[\kappa(X,X')]
-2\,\mathbb{E}_{X\sim P,\;Y\sim Q_\theta}[\kappa(X,Y)]
+\mathbb{E}_{Y,Y'\sim Q_\theta}[\kappa(Y,Y')],
\end{equation}
\noindent where $X$ and $X'$ are independent copies drawn from $P$, $Y$ and $Y'$ are independent copies drawn from $Q_\theta$, and all four random variables are mutually independent. This leads to a density-free fitting criterion that depends only on kernel expectations. When the kernel $\kappa$ is \emph{characteristic}, the MMD is a metric on probability measures. Important examples include Gaussian radial kernels in Euclidean spaces and, more generally, in suitable metric spaces of strong negative type \cite{sriperumbudurUniversalityCharacteristicKernels2011}. The formal definitions and statistical background are deferred to Supplemental Material \Cref{sec:prelim}.

In practice, given $X_1,\dots,X_n \overset{\text{i.i.d.}}{\sim} P$, we replace $P$ by the empirical measure $P_n\coloneqq\frac{1}{n}\sum_{i=1}^{n}\delta_{X_i}$ and solve the problem
\begin{equation}\label{eq:objective_emp}
\min_{\theta\in\Theta_K}\operatorname{MMD}_\kappa^2(P_n,Q_\theta).
\end{equation}

\paragraph{Motivation.}
The motivation for this work is to perform clustering of dynamic, time-varying functional objects \cite{dubey2020functional, dubey2022modeling} that take values in a possibly infinite-dimensional space $\mathcal X$. Figure \ref{fig:data_overview} shows examples of contemporary data structures that arise in this setting.

Let $\mathcal X$ be a Hilbert space, and let $(X(t))_{t\in[0,T]}$ be a stochastic process valued in $\mathcal X$. We observe independent sample trajectories
\[
X_1(t),\dots,X_n(t)
\]
and denote by $P_t$ the law of $X(t)$ at time $t$. We estimate a time-varying mixture model by minimizing a temporal finite-sample approximation of the integrated discrepancy
\[
\int_0^T \operatorname{MMD}_\kappa^2\bigl(P_{n,t},Q_{\theta,t}\bigr)\diff t,
\]
\noindent where $P_{n,t}$ is the empirical law of $X_1(t),\dots,X_n(t)$ at time $t$. In \Cref{sec:glucodensity}, this methodology is applied to continuous glucose monitoring data from a diabetes clinical trial evaluating a new therapy for juvenile diabetes \cite{wadwa2023trial}.

\paragraph{Notation.}
Throughout, $\mathcal{X}$ denotes a separable real Hilbert space with inner product $\langle\cdot,\cdot\rangle_{\mathcal{X}}$, norm $\|\cdot\|_{\mathcal{X}}$, and identity operator $\Id$. We let $\mathcal{L}(\mathcal{X})$ denote the space of bounded linear operators on $\mathcal{X}$. For a trace-class operator $T \in \mathcal{L}(\mathcal{X})$ with eigenvalues $(\lambda_j)_{j\ge 1}$, its Fredholm determinant is $\det_F(\Id + T) \coloneqq \prod_{j=1}^\infty (1 + \lambda_j)$. Let $\mathcal{P}(\mathcal{X})$ denote the set of Borel probability measures on $\mathcal{X}$, and for $p \ge 1$, let $\mathcal{P}_p(\mathcal{X})$ be the subset of measures with finite $p$-th moments. For finite-dimensional spaces, boldface lowercase letters ($\mathbf{x},\mathbf{m}$) denote vectors and boldface uppercase letters ($\mathbf{K},\mathbf{I}$) denote matrices. We write $\mathbf{1}$ for the all-ones vector and $\Delta^{K-1}$ for the probability simplex.

\subsection{Contributions and paper organization}

\begin{figure}[t]
  \centering
   \includegraphics[width=0.15\linewidth]{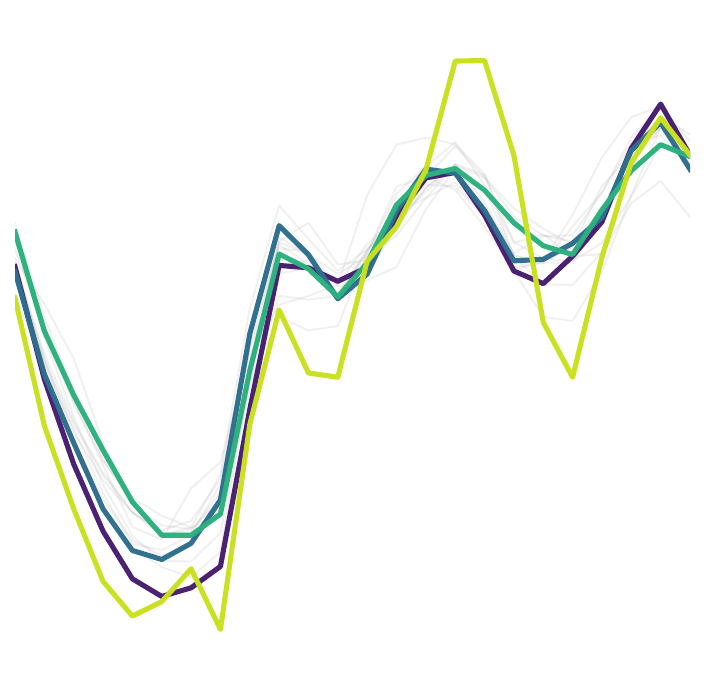}\qquad\qquad
    \includegraphics[width=0.15\linewidth]{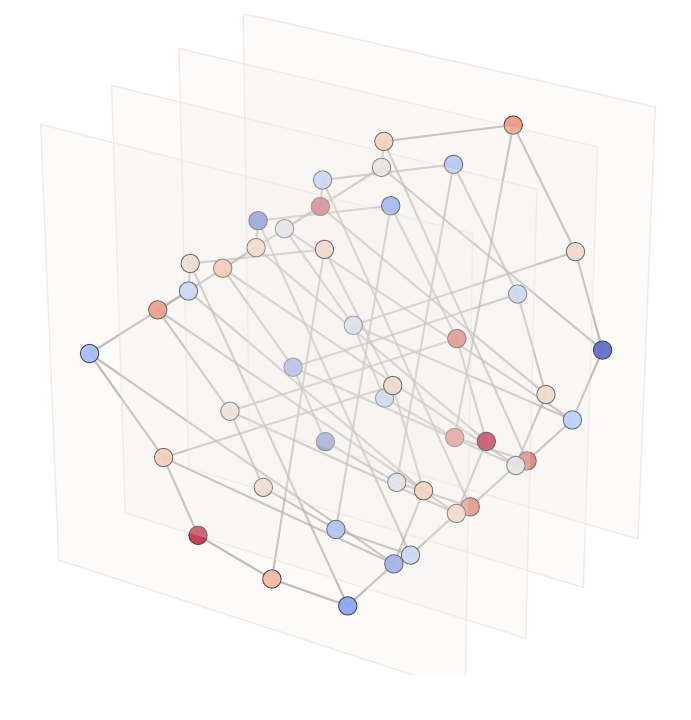}\qquad\qquad
  \includegraphics[width=0.15\linewidth]{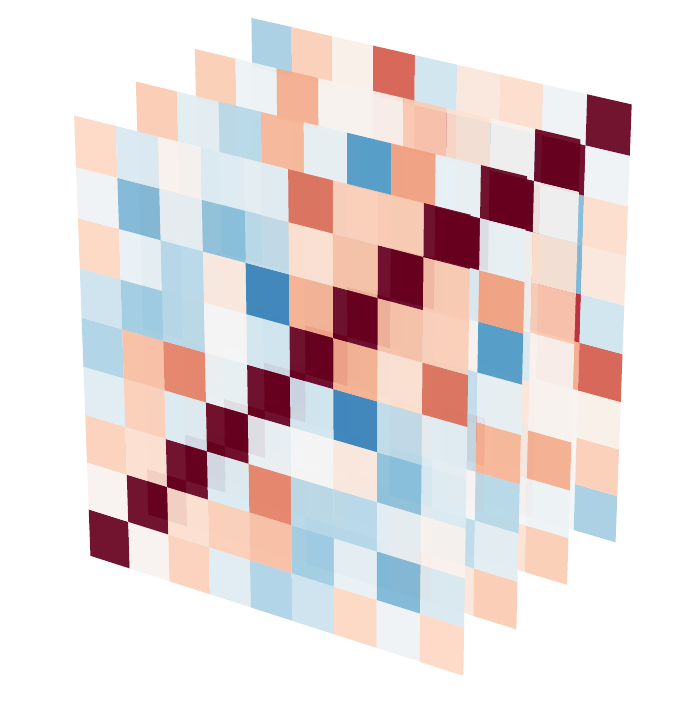}
  \caption{Representative structured data samples. \textbf{Left:} Hourly continuous glucose monitoring paths. \textbf{Center:} Signals on a fixed graph with varying node intensities. \textbf{Right:} Correlation matrices.}
  \label{fig:data_overview}
\end{figure}

The main contributions of this paper are threefold:
\begin{itemize}[leftmargin=*]
\item \textbf{Closed-form MMD fitting for Hilbert-space mixtures.}
We derive exact closed-form expressions for $\operatorname{MMD}_{\kappa}^{2}(P_n,Q_{\theta})$
for Gaussian radial basis and polynomial kernels, extending known
\(\mathbb{R}^d\) formulas to separable Hilbert spaces
\cite{alvarez-lopezContinuousTemporalLearning2025,briolDictionaryClosedFormKernel2025}. Moreover, we prove that, for every \(p \geq 1\), finite Gaussian mixtures of the form
\eqref{eq:mixture_model} are dense in \(\mathcal P_p(\mathcal X)\) under the
Wasserstein distance \(W_p\). 
This result provides theoretical support for the approximation capacity of the proposed model in infinite-dimensional applications.

\item \textbf{Consistent finite-dimensional computation.}
We introduce a spectral projection framework that reduces infinite-dimensional computation to standard linear algebra in \(\mathbb{R}^M\).
We prove that both the projected MMD objective and the posterior probabilities converge to their exact infinite-dimensional counterparts as the number of terms in the finite-dimensional approximation satisfies \(M \to \infty\); see Propositions \ref{prop:projection_consistency} and \ref{prop:projection_responsibilities}.

\item \textbf{Real-world performance.}
We evaluate the empirical performance of the proposed method across several data structures, including random functions in \(L^2\), spatial fields, rotations in \(\mathrm{SO}(3)\), and graph Laplacians.
We compare our method against state-of-the-art approaches on established clustering benchmarks, demonstrating competitive performance; see \Cref{sec:experiments,app:synthetic_experiments}.
To further assess its performance on dynamic functional data, we analyze a clinical trial dataset with continuous glucose monitoring data, aiming to identify subgroups of individuals with heterogeneous responses to therapy.
\end{itemize}

The supplementary material contains all proofs, together with additional analytical results.

\subsection*{Related work}

\emph{Functional data analysis and clustering.}
Functional data analysis (FDA) studies samples naturally viewed as functions, curves, or surfaces, typically using basis expansions, smoothing, and functional principal component analysis \cite{ramsayFunctionalDataAnalysis2005,ferratyNonparametricFunctionalData2006}. 
Our spectral projection framework is related to Karhunen--Lo\`eve expansions for Gaussian measures \cite{ramsayFunctionalDataAnalysis2005,ferratyNonparametricFunctionalData2006,bayKarhunenLoeveDecomposition2019}, but uses projection as a computational approximation to the MMD objective. 
Model-based functional clustering---such as Gaussian mixtures---usually assumes latent groups with group-specific low-dimensional structure \cite{bouveyronModelbasedClusteringTime2011,chiouFunctionalClusteringIdentifying2007,jamesClusteringSparselySampled2003,giacofciWaveletBasedClusteringMixedEffects2013,jacquesFunclustCurvesClustering2013}. 
Such likelihood-based approaches require a reference measure, which is delicate in infinite dimensions \cite{bogachevGaussianMeasures1998,dapratoStochasticEquationsInfinite2014,delaigleDefiningProbabilityDensity2010}. 
Distance-based methods \cite{luzlopezgarciaKmeansAlgorithmsFunctional2015,ferreiraComparisonHierarchicalMethods2009} avoid this issue, but typically lack the explicit probabilistic and temporal interpretation that a generative model provides.

\emph{Kernel mean embeddings and MMD.}
Kernel mean embeddings represent probability measures as RKHS elements \cite{smolaHilbertSpaceEmbedding2007,muandetKernelMeanEmbedding2017,berlinetReproducingKernelHilbert2004}, inducing the Maximum Mean Discrepancy (MMD) \cite{grettonKernelTwoSampleTest2012a}. 
For characteristic kernels, MMD metrizes equality in law \cite{sriperumbudurUniversalityCharacteristicKernels2011,sejdinovicEquivalenceDistancebasedRKHSbased2013}, making it a natural density-free fitting objective. 
MMD fitting has been used in generative modeling \cite{liGenerativeMomentMatching2015,briolStatisticalInferenceGenerative2019}, parametric inference \cite{cherief-abdellatifFiniteSampleProperties2022,tolstikhinMinimaxEstimationKernel2017}, kernel-based clustering \cite{francaKernelKGroupsHartigans2017,matabuenaKernelBiclusteringAlgorithm2025}, distributional reinforcement learning \cite{antonettiAnalysisDistributionalReinforcement2025}, and temporal distributional learning \cite{alvarez-lopezContinuousTemporalLearning2025}. 
These approaches often rely on closed-form kernel mean embeddings for Gaussian distributions under Gaussian and polynomial kernels \cite{briolDictionaryClosedFormKernel2025}; related work also studies MMD testing on function spaces \cite{wynneKernelTwosampleTest2022}. 
We extend these formulas and fitting methods from $\R^d$ to separable Hilbert spaces, using characteristic-kernel MMD to fit finite Gaussian mixtures with closed-form optimization, approximation guarantees, and projection-consistent computation. 

\emph{Mixture models.}
Gaussian mixture models are a classical tool for model-based clustering and density estimation \cite{dempsterMaximumLikelihoodIncomplete1977,fraleyModelBasedClusteringDiscriminant2002}. Extensions beyond Gaussian components have been proposed \cite{titteringtonStatisticalAnalysisFinite1985, tangWassersteinDistributionalLearning2023}, and more recent work applies mixtures of Gaussian processes to functional data \cite{trespMixturesGaussianProcesses2000,huangEstimatingMixtureGaussian2014,mcdowellClusteringGeneExpression2018} and to more general infinite-dimensional settings \cite{luSequentialMonteCarlo2026}. These methods remain largely likelihood- or projected-density-based; in contrast, we estimate Hilbert-space Gaussian mixtures by minimizing MMD, giving a density-free fitting criterion that is more natural in infinite-dimensional regimes.

\emph{Digital health.}
Digital health generates unprecedented amounts of near-continuous data, motivating dynamic, time-varying functional models \cite{alvarez-lopezContinuousTemporalLearning2025,dubey2020functional,dubey2022modeling}. 
Mixture models for these general data structures remain underdeveloped, despite their potential in clinical trials for identifying subgroups of individuals with heterogeneous treatment responses.

\section{MMD fitting}\label{sec:mmd}

To minimize \eqref{eq:objective_emp}, we evaluate kernel expectations under $\nu_k=\mathcal N(m_k,\mathcal K_k)$. Expanding the squared MMD between $P_n=\tfrac1n\sum_{i=1}^n\delta_{X_i}$ and $Q_\theta=\sum_{k=1}^K \pi_k\nu_k$ yields
\begin{equation}\label{eq:mmd_empirical}
\begin{aligned}
\eqref{eq:objective_emp}=
\frac{1}{n^2}\sum_{i,j=1}^{n} \kappa(X_i,X_j)
-\frac{2}{n}\sum_{i=1}^{n}\sum_{k=1}^K
\pi_k\,\mathbb{E}_{Y\sim \nu_k}[\kappa(X_i,Y)]
+
\sum_{k,s=1}^K
\pi_k\pi_s\,
\mathbb{E}_{Y\sim \nu_k,\;Y'\sim \nu_s}[\kappa(Y,Y')].
\end{aligned}
\end{equation}
Because the first data--data term is constant in $\theta$, optimization depends entirely on the data-to-component and component-to-component expectations:
\begin{equation}\label{eq:JI_definitions}
J_{i,k} \coloneqq \mathbb{E}_{Y\sim \nu_k}[\kappa(X_i,Y)],
\qquad
I_{k,s} \coloneqq \mathbb{E}_{Y\sim \nu_k,\;Y'\sim \nu_s}[\kappa(Y,Y')].
\end{equation}
To make the objective a tractable differentiable function of $\theta$, we consider Gaussian and polynomial kernels, for which these expectations admit closed-form expressions (\Cref{app:mmd_closed_forms}):
$$
\kappa_A(x,y) = \exp\!\left(-\frac12\langle x-y,A(x-y)\rangle_{\mathcal X}\right)
\qquad \text{and} \qquad
\kappa(x,y) = (\langle x,y\rangle_{\mathcal X}+c)^p,
$$
where $A\in\mathcal L(\mathcal X)$ is bounded, self-adjoint, and positive semidefinite (often isotropic, $A=\sigma_\kappa^{-2}\Id$). The finite moments condition required by polynomial kernels are automatically satisfied by both $Q_\theta$ and $P_n$.

\subsection{Fitting and optimization}\label{sec:fitting}

The empirical objective \eqref{eq:mmd_empirical}  naturally splits into two parameter groups: mixture weights $\pi \in \Delta^{K-1}$ and  component parameters $\{(m_k,\mathcal{K}_k)\}_{k=1}^K$. This enables an alternating scheme that combines a convex weight update with a nonconvex update of the component means and covariances.

To make the dependence on $\pi$ explicit, define the averaged cross-expectation vector $\mathbf{J} \in \mathbb{R}^K$ with entries $J_k \coloneqq \frac{1}{n}\sum_{i=1}^n J_{i,k}$ and the Gram matrix $\mathbf{I} \in \mathbb{R}^{K \times K}$ with entries $I_{k,s}$. Dropping the (constant) data--data term, minimizing \eqref{eq:mmd_empirical} over $\pi$ for fixed components reduces to the quadratic program
\begin{equation}\label{eq:mmd_quadratic}
    \min_{\pi\in\mathbb{R}^K}\ \pi^\top \mathbf{I}\,\pi - 2\mathbf{J}^\top \pi
    \qquad \text{s.t.} \qquad
    \pi \ge 0, \quad \mathbf{1}^\top \pi = 1.
\end{equation}
Since \(I_{k,s} = \langle \mu_{\nu_k}, \mu_{\nu_s} \rangle_{\mathcal{H}_\kappa}\) is a Gram inner product of kernel mean embeddings, \(\mathbf{I}\) is positive semi-definite and the program is convex. Geometrically, the optimal \(\pi^\star\) projects the empirical mean embedding \(\mu_{P_n}\) onto the convex hull of the component embeddings \(\{\mu_{\nu_k}\}_{k=1}^K\).

Since joint optimization over the full parameter $\theta$ remains nonconvex, we employ an alternating minimization strategy, analogous to EM \cite{dempsterMaximumLikelihoodIncomplete1977}. In each iteration, we first solve \eqref{eq:mmd_quadratic} for $\pi$ with fixed components, and then fix $\pi$ to update the means and covariances via a finite-dimensional gradient step (\Cref{sec:numerics}). Pseudocode and implementation details are deferred to \Cref{app:algorithms}.

\subsection{Numerical computation}\label{sec:numerics}

For practical implementation, we project the Hilbert space objects of \Cref{sec:mmd} onto a finite-dimensional subspace. Fixing an orthonormal basis $(e_j)_{j\ge 1}$ of $\mathcal{X}$ (e.g., cosine or data-driven Karhunen--Loève \cite{ramsayFunctionalDataAnalysis2005,bayKarhunenLoeveDecomposition2019}), we apply the orthogonal projector $\Pi_M:\mathcal{X}\to\mathcal{X}_M \coloneqq \mathrm{span}\{e_1,\dots,e_M\}$. This identifies $\mathcal{X}_M$ with $\mathbb{R}^M$, allowing each object to be represented by its coordinate vector.

For each component $\nu_k = \mathcal{N}(m_k,\mathcal{K}_k)$, let $\mathbf{m}_{k,M} \in \mathbb{R}^M$ and $\mathbf{K}_{k,M} \in \mathbb{R}^{M \times M}$ be the coordinate representations of the projected mean $\Pi_M m_k$ and covariance $\Pi_M\mathcal{K}_k\Pi_M$, and let $\mathbf{x}_{i,M} \in \mathbb{R}^M$ be the coordinates of $\Pi_M X_i$. The pushforward $\nu_{k,M} \coloneqq \nu_k\circ\Pi_M^{-1}$ is then a Gaussian on $\mathbb{R}^M$.
The projected mixture and empirical measure on \(\mathbb{R}^M\) are
\begin{equation*}
    Q_{\theta,M}\coloneqq \sum_{s=1}^K\pi_s\nu_{s,M},
    \qquad
    P_{n,M}\coloneqq \frac{1}{n}\sum_{i=1}^n\delta_{\mathbf{x}_{i,M}}.
\end{equation*}
The projected cross-expectations are
\begin{equation}\label{eq:proj.cross.exp}
	J^{(M)}_{i,k}\coloneqq \mathbb{E}_{Y\sim \nu_k}\big[\kappa(\Pi_M X_i, \Pi_M Y)\big],
	\qquad
	I^{(M)}_{k,s}\coloneqq \mathbb{E}_{Y\sim \nu_k,\,Y'\sim \nu_s}\big[\kappa(\Pi_M Y, \Pi_M Y')\big].
\end{equation}
Computationally, these expectations are evaluated exactly in \(\mathbb{R}^M\) using the coordinate vectors \(\mathbf{x}_{i,M}\) and the finite-dimensional Gaussian measures \(\nu_{k,M}\). Explicit matrix formulas for \eqref{eq:proj.cross.exp} under Gaussian and polynomial kernels are deferred to \Cref{app:explicit_formulas_rm}. The finite-dimensional counterpart of \eqref{eq:mmd_empirical} is
\begin{equation}\label{eq:projected_mmd}
	\operatorname{MMD}_\kappa^2(P_{n,M}, Q_{\theta,M})
	= \frac{1}{n^2}\sum_{i,j=1}^n \kappa(\Pi_M X_i,\Pi_M X_j)
	- \frac{2}{n}\sum_{i=1}^n\sum_{k=1}^K \pi_k J^{(M)}_{i,k}
	+ \sum_{k,s=1}^K \pi_k\pi_s I^{(M)}_{k,s}.
\end{equation}
Disregarding the (constant) data--data term, evaluating \eqref{eq:projected_mmd} per iteration costs $O(K^2 M^3 + n K M^2)$ for both the Gaussian radial and polynomial kernels. Moreover, if we assume that the projected covariances are diagonal, the cost reduces to $O(K^2 M + n K M)$, making the method highly efficient and scalable to large sample sizes and projection dimensions.

\begin{figure}[t]
    \centering
    \begin{subfigure}[t]{0.32\textwidth}
        \centering
        \includegraphics[width=0.85\textwidth]{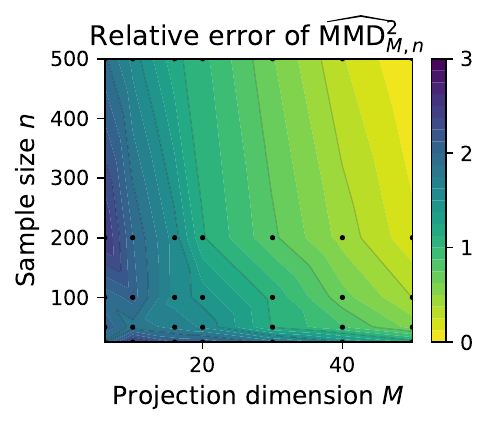}
        \caption{MMD$^2$ versus projection dimension $M$ and sample size $n$.}
        \label{fig:ablation_consistency}
    \end{subfigure}
    \hfill
    \begin{subfigure}[t]{0.32\textwidth}
        \centering
        \includegraphics[width=0.85\textwidth]{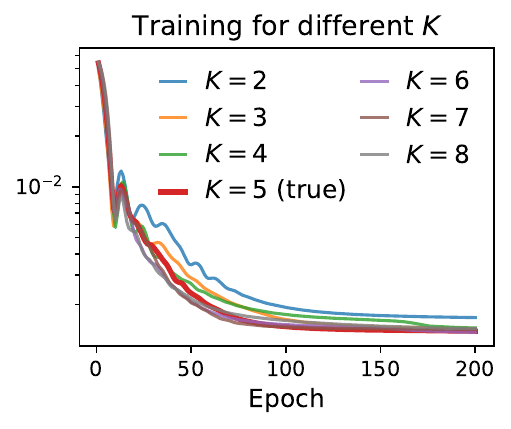}
        \caption{Training convergence for different numbers of components $K$.}
        \label{fig:ablation_loss_K}
    \end{subfigure}
    \hfill
    \begin{subfigure}[t]{0.32\textwidth}
        \centering
        \includegraphics[width=0.85\textwidth]{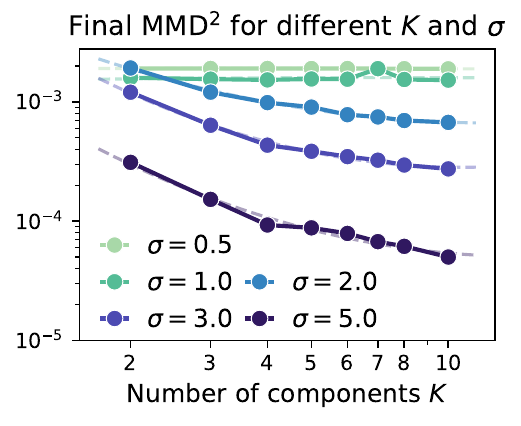}
        \caption{Final MMD$^2$ across component counts $K$ and kernel bandwidths $\sigma$.}
        \label{fig:ablation_k_sigma}
    \end{subfigure}
    \caption{
    Numerical sensitivity of the MMD objective to various hyperparameters and sample sizes.
    }
    \label{fig:numerics_ablation}
\end{figure}

\begin{prop}[Consistency of projected MMD]\label{prop:projection_consistency}
Let $\kappa$ be a continuous kernel on $\mathcal{X}$. Assume there exist constants $C,q\ge 0$ such that
\begin{equation}\label{eq:kernel_bound}
    |\kappa(x,y)| \le C\bigl(1+\|x\|_{\mathcal X}^q\bigr)\bigl(1+\|y\|_{\mathcal X}^q\bigr)
    \qquad\text{for all }x,y\in\mathcal X,
\end{equation}
and that $\int_{\mathcal X}\|x\|_{\mathcal X}^{q}\diff\nu_k(x)<\infty$ for all $k=1,\dots,K$. Then, the projected MMD objective converges to the exact MMD:
\begin{equation}\label{eq:proj.conv.as}
  \operatorname{MMD}_\kappa^2(P_{n,M},Q_{\theta,M}) \xrightarrow{M\to\infty} \operatorname{MMD}_\kappa^2(P_n,Q_\theta) \qquad\text{a.s.}
\end{equation}

\end{prop}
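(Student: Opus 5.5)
We compare \eqref{eq:projected_mmd} with \eqref{eq:mmd_empirical} term by term. Since $K$ and $n$ are fixed, it suffices to show three convergences as $M\to\infty$:
\[
\kappa(\Pi_M X_i,\Pi_M X_j)\to\kappa(X_i,X_j),\qquad J^{(M)}_{i,k}\to J_{i,k},\qquad I^{(M)}_{k,s}\to I_{k,s},
\]
for all indices. The "a.s." qualifier only reflects that the sample $X_1,\dots,X_n$ is random. We argue for every fixed realization $X_i(\omega)\in\mathcal X$, so the convergence in fact holds surely.

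\textbf{Pointwise convergence of the projections.} Because $(e_j)_{j\ge1}$ is an orthonormal basis of the separable space $\mathcal X$, we have $\|\Pi_M x - x\|_{\mathcal X}^2=\sum_{j>M}\langle x,e_j\rangle_{\mathcal X}^2\to0$ for every $x\in\mathcal X$. Moreover, $\|\Pi_M x\|_{\mathcal X}\le\|x\|_{\mathcal X}$, since $\Pi_M$ is an orthogonal projector. Joint continuity of $\kappa$ then immediately gives the data--data term.

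\textbf{Cross-expectations by dominated convergence.} Fix $i$ and $k$. For each $y\in\mathcal X$, continuity of $\kappa$ gives $\kappa(\Pi_M X_i,\Pi_M y)\to\kappa(X_i,y)$. By \eqref{eq:kernel_bound} and the contraction property,
\[
|\kappa(\Pi_M X_i,\Pi_M y)|\le C(1+\|X_i\|_{\mathcal X}^q)(1+\|y\|_{\mathcal X}^q).
\]
This bound is independent of $M$ and is $\nu_k$-integrable by the moment assumption. Dominated convergence yields $J^{(M)}_{i,k}\to J_{i,k}$.

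\textbf{Component-to-component term.} For $I_{k,s}$ we apply dominated convergence on the product space with measure $\nu_k\otimes\nu_s$. The dominating function is $C(1+\|y\|^q)(1+\|y'\|^q)$. Its integral factorizes into the product of the two finite moments $\int(1+\|y\|^q)\diff\nu_k(y)$ and $\int(1+\|y'\|^q)\diff\nu_s(y')$, so it is finite by Fubini--Tonelli.

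\textbf{Conclusion and main obstacle.} Since the mixture weights $\pi_k$ are fixed, summing finitely many convergent terms gives \eqref{eq:proj.conv.as}. The only delicate point is checking that the expectations in \eqref{eq:proj.cross.exp} coincide with the quantities actually computed in $\mathbb R^M$, namely the expectations under the pushforwards $\nu_{k,M}$. This is the change-of-variables formula for image measures, combined with the isometric identification $\mathcal X_M\cong\mathbb R^M$ under which $\kappa$ restricted to $\mathcal X_M$ is evaluated in coordinates. Consequently \eqref{eq:projected_mmd} is exactly the expression analysed above, and no step beyond dominated convergence is needed.
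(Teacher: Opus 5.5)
Your proposal is correct and follows the paper's proof essentially step for step. Both arguments use strong convergence $\Pi_M x\to x$ together with the contraction $\|\Pi_M x\|_{\mathcal X}\le\|x\|_{\mathcal X}$, dominated convergence with the $M$-independent bound from \eqref{eq:kernel_bound} (under $\nu_k$ for $J^{(M)}_{i,k}$ and under $\nu_k\otimes\nu_s$ for $I^{(M)}_{k,s}$), and continuity of $\kappa$ for the data--data term. Your extra remarks are accurate refinements that the paper leaves implicit: the convergence holds for every realization, and the expectations under the pushforwards $\nu_{k,M}$ agree with \eqref{eq:proj.cross.exp} by change of variables.
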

Together with the known empirical stability of MMD that controls the error from replacing \(P\) by \(P_n\) (see
\cite[Proposition~A.1]{tolstikhinMinimaxEstimationKernel2017}), Proposition \ref{prop:projection_consistency} shows that the finite-dimensional surrogate \eqref{eq:projected_mmd} first
approximates the Hilbert-space empirical criterion as $M\to\infty$, and then the
population as $n\to\infty$. This justifies \eqref{eq:projected_mmd} as a
computationally tractable proxy for the infinite-dimensional MMD objective.

\paragraph{Posterior probabilities.}
The mixture admits the standard generative interpretation: draw a latent assignment \(Z \in \{1,\dots,K\}\) with prior \(\mathbb{P}(Z=k)=\pi_k\), then sample \(X\mid\{Z=k\}\sim\nu_k\). The \emph{posterior component probability}, or responsibility, is defined by the Radon--Nikodym derivative
\begin{equation}\label{eq:responsibility}
	\gamma_k(X) \coloneqq \mathbb{P}(Z=k\mid X) = \frac{\diff(\pi_k\nu_k)}{\diff Q_\theta}(X), \qquad Q_\theta\text{-a.s.},
\end{equation}
which is well-defined because $\pi_k\nu_k \ll Q_\theta$ whenever $\pi_k>0$.
In the finite-dimensional setting, whenever the projected covariance $\mathbf{K}_{k,M}$ is nonsingular, 
 $\nu_{k,M}$ admits a Lebesgue density $p_{k,M}$ on $\mathbb{R}^M$ and \eqref{eq:responsibility} reduces to the familiar Bayes formula
\begin{equation}\label{eq:responsibility_density}
	\gamma_k^{(M)}(x) = \frac{\pi_k\,p_{k,M}(x)}{\sum_{s=1}^K \pi_s\,p_{s,M}(x)}.
\end{equation}
As $M$ grows, $\sigma(\Pi_M X)$ accumulates the information of $\sigma(X)$, yielding the following convergence.

\begin{prop}\label{prop:projection_responsibilities}
Let $(Z,X)$ be as above and let $\gamma_k^{(M)}\coloneqq \frac{\diff(\pi_k\nu_{k,M})}{\diff Q_{\theta,M}}$. Then the projected responsibility satisfies the martingale property
\begin{equation}\label{eq:martingale_property}
    \gamma_k^{(M)}(\Pi_M X) = \mathbb{E}[\mathbf{1}_{\{Z=k\}}\mid \sigma(\Pi_M X)] \quad \text{a.s.},
\end{equation}
and $\gamma_k^{(M)}(\Pi_M X) \to \gamma_k(X)$ as $M\to\infty$ both almost surely and in $L^1$.
\end{prop}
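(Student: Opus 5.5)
The proof has two parts. First we verify the martingale identity \eqref{eq:martingale_property} from the definition of conditional expectation. Then we invoke Lévy's upward theorem and identify its limit with $\gamma_k(X)$.

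\textbf{Step 1: the identity \eqref{eq:martingale_property}.} Under the generative model, $(Z,X)$ has joint law $\pi_k\nu_k(\diff x)$ on $\{Z=k\}$. Hence $\Pi_M X$ has law $Q_{\theta,M}=Q_\theta\circ\Pi_M^{-1}$, and $(Z,\Pi_M X)$ has joint law $\pi_k\nu_{k,M}$ on $\{Z=k\}$. Since $\pi_k\nu_{k,M}\le Q_{\theta,M}$, the Radon--Nikodym derivative $\gamma_k^{(M)}$ exists, is Borel, and takes values in $[0,1]$. Every set in $\sigma(\Pi_M X)$ has the form $\{\Pi_M X\in B\}$ with $B$ Borel in $\mathcal X_M$. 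For such a set,
\[
\mathbb E\bigl[\gamma_k^{(M)}(\Pi_M X)\mathbf 1_{B}(\Pi_M X)\bigr]=\int_B\gamma_k^{(M)}\diff Q_{\theta,M}=\pi_k\nu_{k,M}(B)=\mathbb P(Z=k,\ \Pi_M X\in B).
\]
Since $\gamma_k^{(M)}(\Pi_M X)$ is $\sigma(\Pi_M X)$-measurable, this is exactly the defining property of $\mathbb E[\mathbf 1_{\{Z=k\}}\mid\sigma(\Pi_M X)]$. The same computation with $\Pi_M$ replaced by the identity gives $\gamma_k(X)=\mathbb E[\mathbf 1_{\{Z=k\}}\mid\sigma(X)]$ a.s.

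\textbf{Step 2: filtration and Lévy's upward theorem.} Set $\mathcal F_M\coloneqq\sigma(\Pi_M X)$. This is a filtration, because $\Pi_M=\Pi_M\Pi_{M+1}$ makes $\Pi_M X$ a measurable function of $\Pi_{M+1}X$. So $\gamma_k^{(M)}(\Pi_M X)=\mathbb E[\mathbf 1_{\{Z=k\}}\mid\mathcal F_M]$ is a bounded (hence uniformly integrable) martingale. By Lévy's upward theorem it converges a.s. and in $L^1$ to $\mathbb E[\mathbf 1_{\{Z=k\}}\mid\mathcal F_\infty]$, where $\mathcal F_\infty=\sigma\bigl(\bigcup_M\mathcal F_M\bigr)$.

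\textbf{Step 3 (main point): $\mathcal F_\infty=\sigma(X)$ up to null sets.} The inclusion $\mathcal F_\infty\subseteq\sigma(X)$ is clear. For the converse, $\Pi_M X\to X$ pointwise in $\mathcal X$, since $(e_j)$ is an orthonormal basis. Hence $X$ is a pointwise limit of $\mathcal F_\infty$-measurable $\mathcal X$-valued maps, and therefore $\mathcal F_\infty$-measurable. Alternatively, each coordinate $\langle X,e_j\rangle$ is $\mathcal F_j$-measurable, and in a separable Hilbert space the Borel $\sigma$-algebra is generated by the coordinate functionals. Thus $\sigma(X)\subseteq\mathcal F_\infty$.

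Combining Steps 1–3, the limit $\mathbb E[\mathbf 1_{\{Z=k\}}\mid\mathcal F_\infty]$ equals $\mathbb E[\mathbf 1_{\{Z=k\}}\mid\sigma(X)]=\gamma_k(X)$ a.s., which proves the convergence both a.s. and in $L^1$.

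One measure-theoretic subtlety remains: conditional expectations, and hence $\gamma_k$ and $\gamma_k^{(M)}$, are defined only $Q_\theta$-a.s. The statements are therefore to be read as holding for versions, which is harmless. If $\pi_k=0$, every quantity is $0$ and the claim is trivial.
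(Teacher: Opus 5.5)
Your proposal is correct and follows essentially the same route as the paper's proof. Both identify $\gamma_k^{(M)}(\Pi_M X)$ and $\gamma_k(X)$ as conditional expectations of $\mathbf{1}_{\{Z=k\}}$ via the Radon--Nikodym defining identity, show that $\mathcal{F}_M=\sigma(\Pi_M X)$ increases to $\mathcal{F}_\infty=\sigma(X)$ by the pointwise-limit argument, and conclude with L\'evy's upward theorem; your extra remarks on $\gamma_k^{(M)}\in[0,1]$ (from $\pi_k\nu_{k,M}\le Q_{\theta,M}$) and on the trivial case $\pi_k=0$ are harmless refinements.
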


The proofs of Propositions \ref{prop:projection_consistency} and \ref{prop:projection_responsibilities} are deferred to \Cref{sec:proof_projection_consistency} and \Cref{sec:proof_projection_responsibilities}, respectively.

\section{Approximation theory}\label{sec:theory}


In this section we establish theoretical guarantees that justify the use of finite Gaussian mixtures in Hilbert spaces as a flexible approximation class.

\begin{prop}[Wasserstein and MMD density]
\label{prop:approximation}
Let $\mathcal X$ be a separable Hilbert space.

\begin{enumerate}[label=(\roman*), leftmargin=*, topsep=0pt, itemsep=0pt]
\item For every $p\ge 1$, finite mixtures of Gaussian measures on $\mathcal X$ of the form \eqref{eq:mixture_model} are dense in $\mathcal P_p(\mathcal X)$ with respect to the Wasserstein distance $W_p$.

\item Let $\kappa$ be a continuous positive-definite kernel satisfying $\sup_{x\in\mathcal X}\kappa(x,x)\le 1$. Then, for all $P\in\mathcal P(\mathcal X)$ and all $K\ge 1$, there exists $Q_K$ of the form \eqref{eq:mixture_model} such that
\[
\operatorname{MMD}_\kappa(P,Q_K)\le \frac{2}{\sqrt K}.
\]
\end{enumerate}
\end{prop}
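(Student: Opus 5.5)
For part (i), I will argue by successive approximation in $W_p$. First, finitely supported measures are dense in $\mathcal P_p(\mathcal X)$ for separable $\mathcal X$. Given $P\in\mathcal P_p(\mathcal X)$ and $\varepsilon>0$, pick a countable dense set $\{x_j\}$ and a radius $R$ with $\int_{\|x\|>R}\|x\|^p\diff P<\varepsilon$. Map each point to a nearest element (within $\varepsilon$) of a finite subfamily covering a suitable bounded region, and map the tail to $0$. This gives a finitely supported $P'=\sum_k\pi_k\delta_{x_k}$ with $W_p(P,P')\lesssim\varepsilon$. Strictly speaking, the bounded ball is not compact in infinite dimensions. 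I will therefore use the standard fact that empirical measures of $P$ converge to $P$ in $W_p$ almost surely (Varadarajan plus convergence of $p$-th moments by the SLLN). This avoids any covering argument and gives the finitely supported approximant directly.

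Second, I will replace each Dirac $\delta_{x_k}$ by a Gaussian $\mathcal N(x_k,\mathcal K)$ with a common trace-class $\mathcal K=t\,\mathcal K_0$, where $\mathcal K_0$ is any fixed nonzero self-adjoint positive trace-class operator, for instance $\mathcal K_0=\sum_j j^{-2}e_j\otimes e_j$. I will couple $\delta_{x_k}$ with $\mathcal N(x_k,t\mathcal K_0)$ via $Y=x_k+\sqrt t\,G$ with $G\sim\mathcal N(0,\mathcal K_0)$. Gluing these couplings component-wise with the same weights gives
\[
W_p^p\Bigl(\sum_k\pi_k\delta_{x_k},\sum_k\pi_k\mathcal N(x_k,t\mathcal K_0)\Bigr)\le t^{p/2}\,\mathbb E\|G\|_{\mathcal X}^p .
\]
The right-hand side is finite by Fernique's theorem, since Gaussian measures have all moments, and it tends to $0$ as $t\to0$. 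The triangle inequality then concludes part (i). The only delicate point is justifying density of finitely supported measures in $W_p$ on a non-locally-compact space. I expect this to be the main obstacle, and the empirical-measure argument handles it; an alternative is truncation followed by quantization on a countable partition of $\mathcal X$ into sets of diameter at most $\varepsilon$, keeping finitely many cells of large mass.

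For part (ii), I will use a Maurey-type probabilistic argument in the RKHS $\mathcal H_\kappa$. Since $\sup_x\kappa(x,x)\le1$, the mean embedding $\mu_P=\int\kappa(\cdot,x)\diff P(x)$ is well defined (Bochner) for every $P\in\mathcal P(\mathcal X)$, with $\|\kappa(\cdot,x)\|_{\mathcal H_\kappa}\le1$. Draw $X_1,\dots,X_K$ i.i.d. from $P$. Then
\[
\mathbb E\Bigl\|\tfrac1K\sum_{k=1}^K\kappa(\cdot,X_k)-\mu_P\Bigr\|^2_{\mathcal H_\kappa}=\tfrac1K\bigl(\mathbb E\kappa(X,X)-\|\mu_P\|^2\bigr)\le\tfrac1K .
\]
Hence some realization $x_1,\dots,x_K$ gives $\operatorname{MMD}_\kappa(P,\frac1K\sum_k\delta_{x_k})\le 1/\sqrt K$.

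Next I will replace each $\delta_{x_k}$ by $\mathcal N(x_k,t\mathcal K_0)$ as in part (i). The term $\|\mu_{\mathcal N(x_k,t\mathcal K_0)}-\kappa(\cdot,x_k)\|^2_{\mathcal H_\kappa}$ expands into kernel expectations. Because $\kappa$ is continuous and bounded, and $x_k+\sqrt t G\to x_k$ almost surely, dominated convergence sends this term to $0$ as $t\to0$. Choosing $t$ small enough makes the MMD between the Dirac mixture and the Gaussian mixture at most $1/\sqrt K$. The triangle inequality then gives the bound $2/\sqrt K$.
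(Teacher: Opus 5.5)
Your proposal is correct and follows essentially the same route as the paper. For (i), both use density of finitely supported measures in $W_p$, then Gaussian smoothing with covariance $t\mathcal K_0$ through the coupling $Y=x_I+\sqrt t\,G$, with Fernique's theorem bounding $\mathbb E\|G\|_{\mathcal X}^p$. For (ii), both use the probabilistic-method bound $\mathbb E\,\operatorname{MMD}_\kappa^2(P,\widehat P_K)\le 1/K$, then small-covariance Gaussian smoothing and the triangle inequality. The only difference is that you explicitly justify density of finitely supported measures via almost sure $W_p$-convergence of empirical measures (Varadarajan plus the SLLN for $p$-th moments), whereas the paper simply asserts it as a standard consequence of $\mathcal X$ being Polish.
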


Part~(i) shows that the model class is highly expressive under Wasserstein metrics. These metrics are widely used in mathematical statistics to study convergence of estimators, resampling schemes such as the bootstrap, and related distributional properties \cite{panaretosStatisticalAspectsWasserstein2019}. This result also implies MMD density for characteristic bounded continuous kernels. Part~(ii), however, applies to arbitrary probability measures and provides an explicit finite-mixture approximation bound for the MMD discrepancy minimized by our algorithm. The proof of Proposition~\ref{prop:approximation} is deferred to \Cref{sec:proof_density}. 

\section{Numerical experiments}\label{sec:experiments}

The first goal of this section is to assess the empirical clustering performance of our methods against established approaches in both finite-dimensional and infinite-dimensional regimes. 
We then focus on a digital health case study to demonstrate the versatility of our methods for analyzing complex biomarkers that may provide new clinical insights in practice. In particular, we summarize the glucose time series using different representations, including random functions and graphs, which can be viewed as functional dynamic objects. Additional numerical comparisons are reported in \Cref{app:synthetic_experiments}.

\subsection{Clustering benchmark}\label{sec:benchmarks}

We compare the mixture model described in \Cref{sec:numerics} against several baselines:
(i) partitioning and centroid-based methods, including K-Medoids and K-Means
\cite{schubertFastEagerMedoids2021,sculleyWebscaleKmeansClustering2010};
(ii) agglomerative hierarchical methods, including average-linkage, Ward, and BIRCH
\cite{murtaghAlgorithmsHierarchicalClustering2012,wardHierarchicalGroupingOptimize1963,zhangBIRCHEfficientData1996};
(iii) density-based methods, including DBSCAN and OPTICS
\cite{esterDensitybasedAlgorithmDiscovering1996,ankerstOPTICSOrderingPoints1999};
(iv) model-based methods, including Gaussian mixtures
\cite{fraleyModelBasedClusteringDiscriminant2002,luSequentialMonteCarlo2026}; and
(v) graph- and dimension-reduction methods such as spectral clustering
\cite{ngSpectralClusteringAnalysis2001}.

\textbf{Datasets.} We evaluate our approach using representative synthetic and real-world datasets.
\begin{itemize}[leftmargin=*, topsep=0pt, parsep=0pt]
    \item \emph{Synthetic environments:} (i)~\(\R^d\): five standard
    \texttt{sklearn} clustering datasets with \(n=500\), \(d=2\), and
    \(K\in\{2,3\}\)~\cite{scikit-learn}; and (ii)~\(\mathrm{SO}(3)\):
    10 synthetic datasets of rotations under an Euler-angle parameterization,
    projected onto Wigner-D matrix coefficients, with \(n=200\) and \(K=3\).

    \item \emph{Real-world datasets:} (iii)~\(L^2\): five standard functional
    data benchmarks~\cite{ramsayFunctionalDataAnalysis2005,breimanClassificationRegressionTrees2017,ferratyNonparametricFunctionalData2006,predaCategoricalFunctionalData2021,olszewskiGeneralizedFeatureExtraction2001};
    (iv)~CGM: continuous glucose monitoring curves from \(n=98\) individuals with
    \(K=2\), detailed in \Cref{sec:glucodensity}; and (v)~Molecular: MUTAG graph
    signals with \(n=188\) and \(K=2\)~\cite{debnathStructureactivityRelationshipMutagenic1991}.
\end{itemize}
All implementation details and evaluation protocols are reported in \Cref{app:experimental_details}. In particular, clustering performance is evaluated using the Adjusted Rand Index (ARI), a well-established clustering metric that compares unsupervised cluster assignments with available ground-truth labels. We fix~$K$ to the known number of ground-truth classes, as our goal here is to compare clustering~accuracy.

\textbf{Results.}
Although designed for general random objects in Hilbert spaces, our method is competitive on finite-dimensional \(\R^d\) benchmarks and shows better performance on the CGM and \(\mathrm{SO}(3)\) datasets, which fall outside the scope of most existing clustering methods. It also performs well in the infinite-dimensional \(L^2\) setting. In general, the proposed model remains highly competitive and computationally efficient, while preserving a common probabilistic interpretation. This flexibility enables time-dependent interpretations, as illustrated in the digital health case study.

\begin{table*}[!th]
\centering
\caption{
Benchmark of clustering algorithms comparing the operating spaces; the computational complexity of training, inference, and memory as functions of the number of samples \(n\) and clusters \(K\); and clustering performance (ARI, mean$_{\text{std}}$) across several datasets grouped by geometric space.
}
\label{tab:unified_benchmark}
\setlength{\tabcolsep}{4pt}
\resizebox{\textwidth}{!}{
\scriptsize
\begin{tabular}{l l c c c c c c c c}
\toprule
Method & Space & Train & Infer & Memory & $\mathbb{R}^d$ & $L^2$ & CGM. & $\mathrm{SO}(3)$ & Molecular \\
\midrule
MMD GMM (Gaussian) & Hilbert & $O(nK)$ & $O(nK)$ & $O(nK)$ & $0.692_{0.379}$ & $0.412_{0.219}$ & $\mathbf{0.053}_{0.040}$ & $\mathbf{0.867}_{0.140}$ & $0.104_{0.032}$ \\
MMD GMM (Polynomial) & Hilbert & $O(nK)$ & $O(nK)$ & $O(nK)$ & $0.615_{0.334}$ & $0.376_{0.144}$ & $0.051_{0.022}$ & $0.675_{0.217}$ & $\mathbf{0.151}_{0.009}$ \\
Projected GMM (Appendix \ref{sec:likelihood}) & Hilbert & $O(nK)$ & $O(nK)$ & $O(nK)$ & --- & $0.386_{0.276}$ & $0.034_{0.000}$ & $0.828_{0.172}$ & $0.104_{0.059}$ \\
\midrule
K-Medoids~\cite{kaufmanFindingGroupsData1990} & Metric & $O(n^2 K)$ & $O(nK)$ & $O(n^2)$ & $0.569_{0.328}$ & $\mathbf{0.458}_{0.201}$ & $0.020_{0.000}$ & $0.828_{0.109}$ & $-0.005_{0.000}$ \\
Hierarchical (Avg)~\cite{ferreiraComparisonHierarchicalMethods2009, lanceGeneralTheoryClassificatory1967} & Metric & $O(n^2 \log n)$ & --- & $O(n^2)$ & $0.555_{0.320}$ & $0.234_{0.296}$ & $0.002_{0.000}$ & $0.490_{0.315}$ & $0.081_{0.000}$ \\
DBSCAN~\cite{esterDensitybasedAlgorithmDiscovering1996} & Metric & $O(n)$ & --- & $O(n)$ & $0.844_{0.164}$ & $0.000_{0.000}$ & $0.000_{0.000}$ & $0.627_{0.273}$ & $0.074_{0.000}$ \\
HDBSCAN~\cite{campelloHierarchicalDensityEstimates2015} & Metric & $O(n)$ & --- & $O(n)$ & $0.870_{0.171}$ & $0.171_{0.227}$ & $0.000_{0.000}$ & $0.691_{0.227}$ & $0.117_{0.000}$ \\
K-Center~\cite{gonzalezClusteringMinimizeMaximum1985} & Metric & $O(nK)$ & $O(nK)$ & $O(n)$ & $0.392_{0.258}$ & $0.150_{0.115}$ & $0.002_{0.000}$ & $0.188_{0.169}$ & $0.074_{0.000}$ \\
Kernel k-Groups~\cite{francaKernelKGroupsHartigans2017} & Metric & $O(Kn^2)$ & --- & $O(n^2)$ & $0.567_{0.328}$ & $0.438_{0.219}$ & $0.019_{0.000}$ & $0.751_{0.189}$ & $0.125_{0.000}$ \\
\midrule
FDA K-Means~\cite{luzlopezgarciaKmeansAlgorithmsFunctional2015} & $L^2$ & $O(nK)$ & $O(nK)$ & $O(nK)$ & --- & $0.451_{0.196}$ & $0.023_{0.000}$ & --- & --- \\
FDA Fuzzy C-Means~\cite{bezdekPatternRecognitionFuzzy1981} & $L^2$ & $O(nK)$ & $O(nK)$ & $O(nK)$ & --- & $0.396_{0.157}$ & $0.020_{0.000}$ & --- & --- \\
Funclust~\cite{jacquesFunclustCurvesClustering2013} & $L^2$ & $O(nK)$ & $O(nK)$ & $O(nK)$ & --- & $0.451_{0.186}$ & $0.015_{0.000}$ & --- & --- \\
FunHDDC~\cite{bouveyronModelbasedClusteringTime2011} & $L^2$ & $O(nK)$ & $O(nK)$ & $O(nK)$ & --- & $0.419_{0.217}$ & $0.013_{0.000}$ & --- & --- \\
fclust~\cite{jamesClusteringSparselySampled2003} & $L^2$ & $O(nK)$ & $O(nK)$ & $O(nK)$ & --- & $0.430_{0.218}$ & $0.030_{0.000}$ & --- & --- \\
K-Centres~\cite{chiouFunctionalClusteringIdentifying2007} & $L^2$ & $O(nK)$ & $O(nK)$ & $O(nK)$ & --- & $0.361_{0.157}$ & $0.001_{0.000}$ & --- & --- \\
Curvclust~\cite{giacofciWaveletBasedClusteringMixedEffects2013} & $L^2$ & $O(nK)$ & $O(nK)$ & $O(nK)$ & --- & $0.383_{0.250}$ & $0.012_{0.000}$ & --- & --- \\
\midrule
MiniBatch KMeans~\cite{sculleyWebscaleKmeansClustering2010} & $\mathbb{R}^d$ & $O(nK)$ & $O(nK)$ & $O(n)$ & $0.554_{0.327}$ & --- & --- & --- & --- \\
Spectral~\cite{ngSpectralClusteringAnalysis2001} & $\mathbb{R}^d$ & $O(n^3)$ & --- & $O(n^2)$ & $\mathbf{0.949}_{0.074}$ & --- & --- & --- & --- \\
Ward~\cite{wardHierarchicalGroupingOptimize1963} & $\mathbb{R}^d$ & $O(n^2)$ & --- & $O(n^2)$ & $0.613_{0.330}$ & --- & --- & --- & --- \\
BIRCH~\cite{zhangBIRCHEfficientData1996} & $\mathbb{R}^d$ & $O(n)$ & $O(nK)$ & $O(n)$ & $0.525_{0.303}$ & --- & --- & --- & --- \\
Gaussian Mixture (EM)~\cite{fraleyModelBasedClusteringDiscriminant2002} & $\mathbb{R}^d$ & $O(nK)$ & $O(nK)$ & $O(nK)$ & $0.683_{0.387}$ & --- & --- & --- & --- \\
Affinity Propagation~\cite{freyClusteringPassingMessages2007} & $\mathbb{R}^d$ & $O(n^2)$ & --- & $O(n^2)$ & $0.489_{0.308}$ & --- & --- & --- & --- \\
MeanShift~\cite{yizongchengMeanShiftMode1995} & $\mathbb{R}^d$ & $O(n^2)$ & $O(n)$ & $O(n)$ & $0.564_{0.328}$ & --- & --- & --- & --- \\
OPTICS~\cite{ankerstOPTICSOrderingPoints1999} & $\mathbb{R}^d$ & $O(n^2)$ & --- & $O(n)$ & $0.809_{0.219}$ & --- & --- & --- & --- \\
\bottomrule
\end{tabular}}
\end{table*}

\subsection{Case study: different representations of digital health time series}
\label{sec:glucodensity}

We now illustrate the proposed model in a digital health case study based on longitudinal continuous glucose monitoring (CGM) data from a juvenile clinical trial \cite{wadwa2023trial}.  A central question in longitudinal studies is whether and how the distribution of individuals changes over time, for example, in response to a treatment \cite{krouwerReviewStandardsStatistics2010,clarkeStatisticalToolsAnalyze2009}. In the original article \cite{wadwa2023trial}, the authors demonstrate that the new insulin pump system leads to improvements in several glycemic metrics before and after the intervention. Here, we consider these outcomes in a continuous-time formulation and use a dynamic functional representation of glucose trajectories. Digital health technologies make it possible to study this question at high temporal resolution. Starting from multi-day glucose time series, we construct three daily representations: (i) functional observations in $L^2$; (ii) correlation matrices describing the dependence between glucose values at different hours of the day; and (iii) Laplacian graphs encoding similarities in glucose dynamics across hours. Similar representations have been considered as biomarkers in clinical fMRI studies \cite{supekarNetworkAnalysisIntrinsic2008,wuParkinsonsDiseaseRelated2015}.

To analyze temporal changes across these three CGM representations, we extend \eqref{eq:mixture_model} by allowing the mixture weights to vary over time while sharing the component parameters, namely the mean and covariance operator parameters of the Gaussian mixture model. Our formulation is similar to \cite{alvarez-lopezContinuousTemporalLearning2025}. However, in that work the authors propose a specific generative model to model distributions through finite-dimensional random examples over time; here, the approach is more general.


\paragraph{Background and data.}
\label{sec:glucodensity_case_study}
Continuous glucose monitors record interstitial glucose concentration every 5 minutes, producing 288 measurements per day. We represent each individual-day by its intraday glucose curve $f\colon [0,24]\to\mathbb{R}$.
The data come from a randomized controlled diabetes study evaluating the effect of a closed-loop insulin delivery system over a period of more than six months~\cite{tauschmannClosedloopInsulinDelivery2018a}. For each individual, a sliding window of $W$ days with stride 1 produces one observation per window. Within each window, the 288 time slots are averaged across valid days; slots with fewer than $\lceil 0.5\,W\rceil$ valid readings are linearly interpolated from neighboring slots. After preprocessing, the dataset comprises 98 individuals, with 34 in the control group and 64 in the treatment group. 

\paragraph{Time-varying mixture weights.}
\label{sec:time_varying}

Let $X_1,\dots,X_n$ be i.i.d. copies of an $\mathcal{X}$-valued measurable stochastic process $X(t)$. For each $t$, denote $P(t)$ the law of $X(t)$, and define the empirical distribution by $P_n(t) \coloneqq \frac{1}{n}\sum_{i=1}^n \delta_{X_i(t)}$.
We fit the model
\begin{equation}\label{eq:model.timevarying}
Q_\theta(t) \coloneqq \sum_{k=1}^K \pi_k(t)\,\mathcal{N}(m_k, \mathcal{K}_k),
\qquad
\pi(t)\coloneqq \bigl(\pi_1(t),\dots,\pi_K(t)\bigr)\in \Delta^{K-1}.
\end{equation}
Conceptually, the shared components $(m_k, \mathcal{K}_k)$ capture fundamental, stationary data archetypes, such as distinct physiological regimes or individual phenotypes, while the time-varying weight vector $\pi(t)$ describes population-level shifts between these archetypes over time.

Following the kernel $K$-groups framework \cite{francaKernelKGroupsHartigans2017}, we select $K$ by evaluating the within-cluster MMD across candidate values and applying the elbow rule. Model fitting minimizes the integrated MMD$^2$,
\begin{equation}
\label{eq:time_varying_loss}
\min_{\theta(t)=\{\pi_k(t), m_k, \mathcal{K}_k\}_k}
    \frac{1}{T}\int_0^T
    \operatorname{MMD}^2\bigl(P_n(t), Q_\theta(t)\bigr)\diff t.
\end{equation}
We parameterize $\pi(t) = \mathrm{softmax}\bigl(\mathbf{z}(t)\bigr)$, where the logit path $\mathbf{z}\colon[0,T]\to\mathbb{R}^K$ is the solution to a neural ODE \cite{chenNeuralOrdinaryDifferential2019}, initialized at $\mathbf{z}(0)=\log \pi(0)$.
This parameterization introduces a natural bias toward smooth temporal dynamics.
Further details on density and consistency results, the projected $\mathbb{R}^M$ approximation of the integrated objective, and implementation specifics are deferred to \Cref{app:time-varying}.

\begin{figure}[t]
    \centering
    \includegraphics[width=0.9\textwidth]{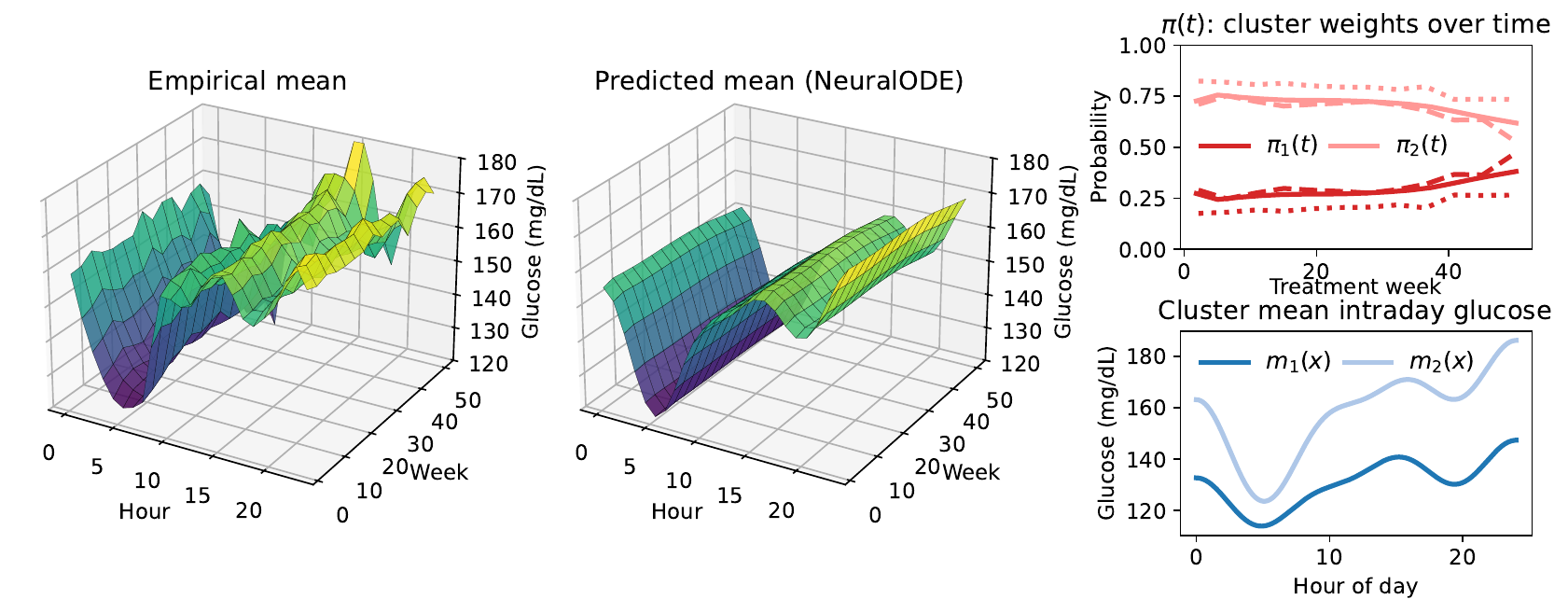}
    \caption{Temporal glucose mixture in $H^1$. \textbf{Left:} Empirical (\textbf{top}) and model-predicted (\textbf{bottom}) mean glucose surfaces. \textbf{Top right:} Global cluster weights $\pi_k(t)$ (solid lines) with group-level posteriors for control (dotted) $\bar{\gamma}_{\mathrm{ctrl}}(t)$ and treatment (dashed) $\bar{\gamma}_{\mathrm{treat}}(t)$. \textbf{Bottom right:} Learned means.}
    \label{fig:glucodensity_summary}
\end{figure}

\paragraph{Group-level comparison.}
To compare the control and treatment arms, we compute each individual's posterior cluster membership $\gamma_k(x_i,t)$ at each time point and aggregate these memberships into group-level trajectories. Specifically, for each group $g\in\{\mathrm{control},\mathrm{treatment}\}$, we compute
\[
\bar{\gamma}_g(t)
=
(\bar{\gamma}_{g,1}(t),\dots,\bar{\gamma}_{g,K}(t)),
\qquad
\bar{\gamma}_{g,k}(t)
\coloneqq
\frac{1}{|G_g(t)|}
\sum_{i\in G_g(t)}
\gamma_k(x_i,t),
\]
which averages posterior probabilities across all individuals in group $g$ observed at time $t$. We then quantify the divergence between study arms using the total variation distance
$
\mathrm{TV}(t)
=
\frac{1}{2}
\left\|
\bar{\gamma}_{\mathrm{treat}}(t)
-
\bar{\gamma}_{\mathrm{ctrl}}(t)
\right\|_1.$ The metric ranges from 0 to 1, where 0 reflects identical group-level cluster usage and 1 reflects complete separation. It provides an interpretable way to quantify treatment-induced distributional shifts over time.

\subsubsection{Results}

\textbf{Intraday curves.} Each individual-window is represented as an intraday glucose curve in Sobolev space $H^1(0,24;\R)$, continuous profiles are projected onto a truncated cosine basis and fitted with \eqref{eq:time_varying_loss} using $K=2$ latent clusters. The two learned means contrast an elevated, variable profile ($m_1$) with a flatter, well-regulated profile ($m_2$); the empirical and model-predicted mean surfaces agree closely (\Cref{fig:glucodensity_summary}, left). While the control group remains approximately time-invariant, the treatment group gradually shifts toward~$m_2$, with $\mathrm{TV}$ distance increasing from $\approx 0.11$ at baseline to $\approx 0.19$ at the end of the trial. This is consistent with an improvement in glycemic regulation over the course of the clinical trial.

\textbf{Correlation matrices.} Instead of clustering intraday curves, we cluster the running $24\times 24$ \emph{inter-hour correlation matrices} in the Hilbert space $\mathrm{Sym}(24)$ equipped with the Frobenius inner product, thereby capturing how the \emph{dependence structure} between hours of the day evolves over the study. Each sliding window ($W=14$ days) is converted to 24 hourly averages per day, from which the sample correlation matrix is calculated using linear shrinkage toward the identity, with intensity $\alpha=0.1$, and then projected onto the PSD cone. Symmetric matrices are embedded through the scaled vectorization mapping into $\R^{300}$, which preserves the Frobenius distance isometrically, and are fitted with the temporal mixture framework using $K=3$.

The learned clusters in \Cref{fig:corr_ode_training} identify three inter-hour dependence patterns: moderate local correlation $m_1$, broadly strong correlation $m_2$, and sharper early-hour dependence $m_3$. Over time, membership shifts from $m_3$ toward the more regulated $m_1$ regime, consistent with improved glycemic regulation over time.

\begin{figure}[t]
    \centering
    \includegraphics[width=0.8\textwidth]{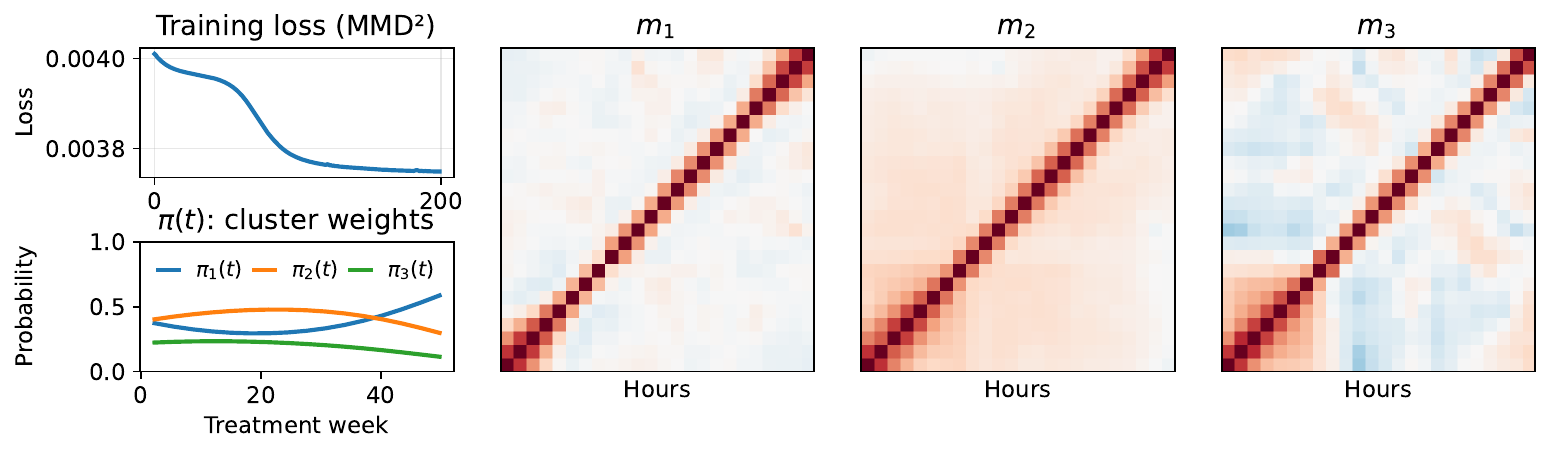}
    \caption{Correlation-based temporal mixture in $\mathrm{Sym}(24)$. \textbf{Left:} MMD$^2$ training loss (\textbf{top}) and cluster weights $\pi_k(t)$ (\textbf{bottom}). \textbf{Right:} Learned mean correlation matrices.}
    \label{fig:corr_ode_training}
\end{figure}
\textbf{Individual similarity graph signals.} We now move from hour- to individual-level structure. For each sliding window ($W=21$ days), we compute $H^1$-pairwise individual similarities to get a time-varying similarity matrix $S(t)\in\R^{N\times N}$. A fixed individual graph is then built by thresholding the time-averaged similarity matrix at the 85th percentile, and the graph Laplacian eigenbasis (using $R=4$ eigenvectors) is used to project each $S(t)$ onto graph signals. The mixture model with $K=3$ clusters these projected matrices over time. As the trial progresses (\Cref{fig:individual_graph}), the network structure evolves from diffuse inter-group connections toward denser within-group similarity in the treatment arm, consistent with individuals progressively achieving improved glycemic control.

\begin{figure}[tb]
    \centering
    \includegraphics[width=0.8\textwidth]{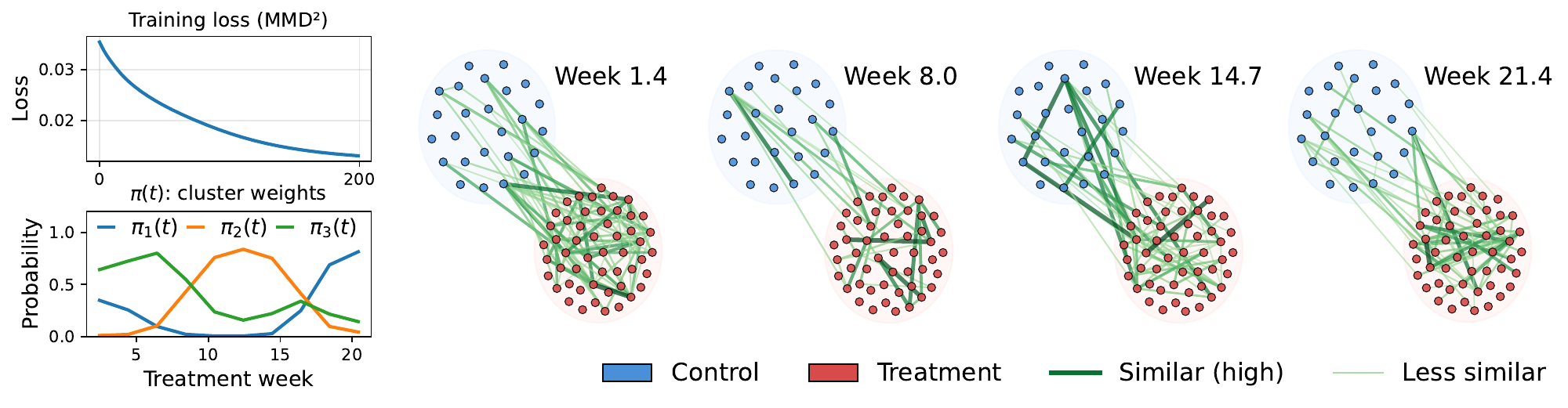}
    \caption{Individual similarity graph temporal mixture. \textbf{Left:} MMD$^2$ training loss (\textbf{top}) and cluster weights $\pi_k(t)$ (\textbf{bottom}). \textbf{Right:} Network snapshots at four time points; node color distinguishes control (\textcolor{blue}{blue}) from treatment (\textcolor{red}{red}), and edge width encodes similarity.}
    \label{fig:individual_graph}
\end{figure}

\section{Final remarks}\label{sec:limitations}

The main contribution of this paper is a Gaussian mixture model for random objects in separable Hilbert spaces, estimated via an MMD-based optimization procedure. A key practical and methodological advantage of the proposed approach is that it bypasses the need for densities or dominating measures, which are problematic in infinite-dimensional settings. In addition, the framework flexibly handles diverse data structures, is efficient, and demonstrates competitive empirical performance. 

The main limitation of the method is its scalability in large-scale biomedical studies. This could be mitigated through subsampling strategies \cite{Bradley02102021} or other computational approximation techniques. Future work will focus on developing specialized temporal algorithms for particular Hilbert spaces, extending the framework to general metric spaces, and quantifying the uncertainty of the parameter and prediction of the mixture model.

\printbibliography

@misc{alvarez-lopezContinuousTemporalLearning2025,
      title={Continuous-Time Learning of Probability Distributions: A Case Study in a Digital Trial of Young Children with Type 1 Diabetes}, 
      author = {Álvarez-López, Antonio and Matabuena, Marcos},
      year={2026},
      eprint={2603.24427},
      archivePrefix={arXiv},
}

@article{dubey2022modeling,
  title = {Modeling Time-Varying Random Objects and Dynamic Networks},
  author = {Dubey, Paromita and M{\"u}ller, Hans-Georg},
  date = {2022},
  journaltitle = {Journal of the American Statistical Association},
  volume = {117},
  number = {540},
  pages = {2252--2267},
  publisher = {Taylor \& Francis}
}

@article{dubey2020functional,
  title = {Functional Models for Time-Varying Random Objects},
  author = {Dubey, Paromita and M{\"u}ller, Hans-Georg},
  date = {2020},
  journaltitle = {Journal of the Royal Statistical Society Series B: Statistical Methodology},
  volume = {82},
  number = {2},
  pages = {275--327},
  publisher = {Oxford University Press}
}

@inproceedings{ankerstOPTICSOrderingPoints1999,
  title = {{{OPTICS}}: Ordering Points to Identify the Clustering Structure},
  booktitle = {Proceedings of the 1999 {{ACM SIGMOD}} International Conference on {{Management}} of Data},
  author = {Ankerst, Mihael and Breunig, Markus M. and Kriegel, Hans-Peter and Sander, Jörg},
  date = {1999},
  pages = {49--60},
  publisher = {ACM}
}

@article{antonettiAnalysisDistributionalReinforcement2025,
  title = {An Analysis of Distributional Reinforcement Learning with {{Gaussian}} Mixtures},
  author = {Antonetti, Mathis and Donãncio, Henrique and Forbes, Florence},
  date = {2025},
  journaltitle = {Transactions on Machine Learning Research}
}

@article{bayKarhunenLoeveDecomposition2019,
  title = {Karhunen–{{Loève}} Decomposition of {{Gaussian}} Measures on {{Banach}} Spaces},
  author = {Bay, Xavier and Croix, Jean-Charles},
  date = {2019},
  journaltitle = {Probability and Mathematical Statistics},
  volume = {39},
  number = {2},
  pages = {279--297}
}

@book{berlinetReproducingKernelHilbert2004,
  title = {Reproducing {{Kernel Hilbert Spaces}} in {{Probability}} and {{Statistics}}},
  author = {Berlinet, Alain and Thomas-Agnan, Christine},
  date = {2004},
  publisher = {Springer US}
}

@book{bezdekPatternRecognitionFuzzy1981,
  title = {Pattern {{Recognition}} with {{Fuzzy Objective Function Algorithms}}},
  author = {Bezdek, James C.},
  date = {1981},
  publisher = {Springer US}
}

@book{bishopPatternRecognitionMachine2006,
  title = {Pattern Recognition and Machine Learning},
  author = {Bishop, Christopher M.},
  date = {2006},
  series = {Information Science and Statistics},
  publisher = {Springer}
}

@book{bogachevGaussianMeasures1998,
  title = {Gaussian {{Measures}}},
  author = {Bogachev, Vladimir},
  date = {1998},
  series = {Mathematical {{Surveys}} and {{Monographs}}},
  volume = {62},
  publisher = {American Mathematical Society}
}

@article{bouveyronModelbasedClusteringTime2011,
  title = {Model-Based Clustering of Time Series in Group-Specific Functional Subspaces},
  author = {Bouveyron, Charles and Jacques, Julien},
  date = {2011},
  journaltitle = {Advances in Data Analysis and Classification},
  volume = {5},
  number = {4},
  pages = {281--300}
}

@book{breimanClassificationRegressionTrees2017,
  title = {Classification {{And Regression Trees}}},
  author = {Breiman, Leo and Friedman, Jerome H. and Olshen, Richard A. and Stone, Charles J.},
  date = {2017},
  publisher = {Routledge}
}

@online{briolDictionaryClosedFormKernel2025,
  title = {A {{Dictionary}} of {{Closed-Form Kernel Mean Embeddings}}},
  author = {Briol, François-Xavier and Gessner, Alexandra and Karvonen, Toni and Mahsereci, Maren},
  date = {2025}
}

@online{briolStatisticalInferenceGenerative2019,
  title = {Statistical Inference for Generative Models with Maximum Mean Discrepancy},
  author = {Briol, François-Xavier and Barp, Alessandro and Duncan, Andrew B. and Girolami, Mark},
  date = {2019}
}

@article{campelloHierarchicalDensityEstimates2015,
  title = {Hierarchical {{Density Estimates}} for {{Data Clustering}}, {{Visualization}}, and {{Outlier Detection}}},
  author = {Campello, Ricardo J. G. B. and Moulavi, Davoud and Zimek, Arthur and Sander, Jörg},
  date = {2015},
  journaltitle = {ACM Transactions on Knowledge Discovery from Data},
  volume = {10},
  number = {1},
  pages = {1--51}
}

@online{chenNeuralOrdinaryDifferential2019,
  title = {Neural {{Ordinary Differential Equations}}},
  author = {Chen, Ricky T. Q. and Rubanova, Yulia and Bettencourt, Jesse and Duvenaud, David},
  date = {2019}
}

@article{cherief-abdellatifFiniteSampleProperties2022,
  title = {Finite Sample Properties of Parametric {{MMD}} Estimation: {{Robustness}} to Misspecification and Dependence},
  author = {Chérief-Abdellatif, Badr-Eddine and Alquier, Pierre},
  date = {2022},
  journaltitle = {Bernoulli},
  volume = {28},
  number = {1},
  pages = {181--213}
}

@article{chiouFunctionalClusteringIdentifying2007,
  title = {Functional {{Clustering}} and {{Identifying Substructures}} of {{Longitudinal Data}}},
  author = {Chiou, Jeng-Min and Li, Pai-Ling},
  date = {2007},
  journaltitle = {Journal of the Royal Statistical Society Series B: Statistical Methodology},
  volume = {69},
  number = {4},
  pages = {679--699}
}

@article{clarkeStatisticalToolsAnalyze2009,
  title = {Statistical Tools to Analyze Continuous Glucose Monitor Data},
  author = {Clarke, William and Kovatchev, Boris},
  date = {2009},
  journaltitle = {Diabetes technology \& therapeutics},
  volume = {11},
  pages = {S-45},
  publisher = {SAGE Publications Sage CA: Los Angeles, CA}
}

@book{dapratoStochasticEquationsInfinite2014,
  title = {Stochastic {{Equations}} in {{Infinite Dimensions}}},
  author = {Da Prato, Giuseppe and Zabczyk, Jerzy},
  date = {2014},
  series = {Encyclopedia of {{Mathematics}} and Its {{Applications}}},
  publisher = {Cambridge University Press}
}

@article{debnathStructureactivityRelationshipMutagenic1991,
  title = {Structure-Activity Relationship of Mutagenic Aromatic and Heteroaromatic Nitro Compounds. {{Correlation}} with Molecular Orbital Energies and Hydrophobicity},
  author = {Debnath, Asim Kumar and Lopez De Compadre, Rosa L. and Debnath, Gargi and Shusterman, Alan J. and Hansch, Corwin},
  date = {1991},
  journaltitle = {Journal of Medicinal Chemistry},
  volume = {34},
  number = {2},
  pages = {786--797}
}

@article{delaigleDefiningProbabilityDensity2010,
  title = {Defining Probability Density for a Distribution of Random Functions},
  author = {Delaigle, Aurore and Hall, Peter},
  date = {2010},
  journaltitle = {The Annals of Statistics},
  volume = {38},
  number = {2}
}

@article{dempsterMaximumLikelihoodIncomplete1977,
  title = {Maximum {{Likelihood}} from {{Incomplete Data Via}} the {{{\mkbibemph{EM}}}} {{Algorithm}}},
  author = {Dempster, A. P. and Laird, N. M. and Rubin, D. B.},
  date = {1977},
  journaltitle = {Journal of the Royal Statistical Society Series B: Statistical Methodology},
  volume = {39},
  number = {1},
  pages = {1--22}
}

@inproceedings{esterDensitybasedAlgorithmDiscovering1996,
  title = {A Density-Based Algorithm for Discovering Clusters in Large Spatial Databases with Noise},
  booktitle = {Proceedings of the {{Second International Conference}} on {{Knowledge Discovery}} and {{Data Mining}}},
  author = {Ester, Martin and Kriegel, Hans-Peter and Sander, Jörg and Xu, Xiaowei},
  date = {1996},
  series = {{{KDD}}'96},
  pages = {226--231},
  publisher = {AAAI Press}
}

@book{ferratyNonparametricFunctionalData2006,
  title = {Nonparametric {{Functional Data Analysis}}},
  author = {Ferraty, Frédéric and Vieu, Philippe},
  date = {2006},
  series = {Springer {{Series}} in {{Statistics}}},
  publisher = {Springer New York}
}

@article{ferreiraComparisonHierarchicalMethods2009,
  title = {A {{Comparison}} of {{Hierarchical Methods}} for {{Clustering Functional Data}}},
  author = {Ferreira, Laura and Hitchcock, David B.},
  date = {2009},
  journaltitle = {Communications in Statistics - Simulation and Computation},
  volume = {38},
  number = {9},
  pages = {1925--1949}
}

@article{fraleyModelBasedClusteringDiscriminant2002,
  title = {Model-{{Based Clustering}}, {{Discriminant Analysis}}, and {{Density Estimation}}},
  author = {Fraley, Chris and Raftery, Adrian E},
  date = {2002},
  journaltitle = {Journal of the American Statistical Association},
  volume = {97},
  number = {458},
  pages = {611--631}
}

@online{francaKernelKGroupsHartigans2017,
  title = {Kernel K-{{Groups}} via {{Hartigan}}'s {{Method}}},
  author = {França, Guilherme and Rizzo, Maria L. and Vogelstein, Joshua T.},
  date = {2017},
  organization = {arXiv.org}
}

@article{freyClusteringPassingMessages2007,
  title = {Clustering by {{Passing Messages Between Data Points}}},
  author = {Frey, Brendan J. and Dueck, Delbert},
  date = {2007},
  journaltitle = {Science},
  volume = {315},
  number = {5814},
  pages = {972--976}
}

@article{giacofciWaveletBasedClusteringMixedEffects2013,
  title = {Wavelet‐{{Based Clustering}} for {{Mixed}}‐{{Effects Functional Models}} in {{High Dimension}}},
  author = {Giacofci, M. and Lambert‐Lacroix, S. and Marot, G. and Picard, F.},
  date = {2013},
  journaltitle = {Biometrics},
  volume = {69},
  number = {1},
  pages = {31--40}
}

@article{gonzalezClusteringMinimizeMaximum1985,
  title = {Clustering to Minimize the Maximum Intercluster Distance},
  author = {Gonzalez, Teofilo F.},
  date = {1985},
  journaltitle = {Theoretical Computer Science},
  volume = {38},
  pages = {293--306}
}

@article{grettonKernelTwoSampleTest2012a,
  title = {A {{Kernel Two-Sample Test}}},
  author = {Gretton, Arthur and Borgwardt, Karsten M. and Rasch, Malte J. and Schölkopf, Bernhard and Smola, Alexander},
  date = {2012},
  journaltitle = {Journal of Machine Learning Research},
  volume = {13},
  number = {25},
  pages = {723--773}
}

@article{huangEstimatingMixtureGaussian2014,
  title = {Estimating Mixture of Gaussian Processes by Kernel Smoothing},
  author = {Huang, Mian and Li, Runze and Wang, Hansheng and Yao, Weixin},
  date = {2014},
  journaltitle = {Journal of Business \& Economic Statistics},
  volume = {32},
  number = {2},
  pages = {259--270}
}

@article{jacquesFunclustCurvesClustering2013,
  title = {Funclust: {{A}} Curves Clustering Method Using Functional Random Variables Density Approximation},
  author = {Jacques, Julien and Preda, Cristian},
  date = {2013},
  journaltitle = {Neurocomputing},
  volume = {112},
  pages = {164--171}
}

@article{jamesClusteringSparselySampled2003,
  title = {Clustering for {{Sparsely Sampled Functional Data}}},
  author = {James, Gareth M and Sugar, Catherine A},
  date = {2003},
  journaltitle = {Journal of the American Statistical Association},
  volume = {98},
  number = {462},
  pages = {397--408}
}

@book{kaufmanFindingGroupsData1990,
  title = {Finding {{Groups}} in {{Data}}: {{An Introduction}} to {{Cluster Analysis}}},
  author = {Kaufman, Leonard and Rousseeuw, Peter J.},
  date = {1990},
  series = {Wiley {{Series}} in {{Probability}} and {{Statistics}}},
  publisher = {Wiley}
}

@article{krouwerReviewStandardsStatistics2010,
  title = {A Review of Standards and Statistics Used to Describe Blood Glucose Monitor Performance},
  author = {Krouwer, Jan S and Cembrowski, George S},
  date = {2010},
  journaltitle = {Journal of Diabetes Science and Technology},
  volume = {4},
  number = {1},
  pages = {75--83},
  publisher = {SAGE Publications}
}

@article{lanceGeneralTheoryClassificatory1967,
  title = {A {{General Theory}} of {{Classificatory Sorting Strategies}}: 1. {{Hierarchical Systems}}},
  author = {Lance, G. N. and Williams, W. T.},
  date = {1967},
  journaltitle = {The Computer Journal},
  volume = {9},
  number = {4},
  pages = {373--380}
}

@inproceedings{liGenerativeMomentMatching2015,
  title = {Generative {{Moment Matching Networks}}},
  booktitle = {Proceedings of the 32nd {{International Conference}} on {{Machine Learning}}},
  author = {Li, Yujia and Swersky, Kevin and Zemel, Rich},
  date = {2015},
  pages = {1718--1727},
  publisher = {PMLR}
}

@online{luSequentialMonteCarlo2026,
  title = {Sequential {{Monte Carlo}} with {{Gaussian Mixture Approximation}} for {{Infinite-Dimensional Statistical Inverse Problems}}},
  author = {Lu, Haoyu and Jia, Junxiong and Meng, Deyu},
  date = {2026}
}

@article{luzlopezgarciaKmeansAlgorithmsFunctional2015,
  title = {K-Means Algorithms for Functional Data},
  author = {Luz López García, María and García-Ródenas, Ricardo and González Gómez, Antonia},
  date = {2015},
  journaltitle = {Neurocomputing},
  volume = {151},
  pages = {231--245}
}

@article{matabuenaKernelBiclusteringAlgorithm2025,
  title = {Kernel Biclustering Algorithm in {{Hilbert}} Spaces},
  author = {Matabuena, Marcos and Vidal, Juan C. and Padilla, Oscar Hernan Madrid and Sejdinovic, Dino},
  date = {2025},
  journaltitle = {Advances in Data Analysis and Classification}
}

@article{mcdowellClusteringGeneExpression2018,
  title = {Clustering Gene Expression Time Series Data Using an Infinite Gaussian Process Mixture Model},
  author = {McDowell, Ian C. and Manandhar, Dinesh and Vockley, Christopher M. and Schmid, Amy K. and Reddy, Timothy E. and Engelhardt, Barbara E.},
  date = {2018},
  journaltitle = {PLOS Computational Biology},
  volume = {14},
  number = {1},
  pages = {e1005896}
}

@article{muandetKernelMeanEmbedding2017,
  title = {Kernel {{Mean Embedding}} of {{Distributions}}: {{A Review}} and {{Beyond}}},
  author = {Muandet, Krikamol and Fukumizu, Kenji and Sriperumbudur, Bharath and Schölkopf, Bernhard},
  date = {2017},
  journaltitle = {Foundations and Trends® in Machine Learning},
  volume = {10},
  number = {1--2},
  pages = {1--141}
}

@article{murtaghAlgorithmsHierarchicalClustering2012,
  title = {Algorithms for Hierarchical Clustering: An Overview},
  author = {Murtagh, Fionn and Contreras, Pedro},
  date = {2012},
  journaltitle = {WIREs Data Mining and Knowledge Discovery},
  volume = {2},
  number = {1},
  pages = {86--97}
}

@inproceedings{ngSpectralClusteringAnalysis2001,
  title = {On Spectral Clustering: Analysis and an Algorithm},
  booktitle = {Proceedings of the 15th {{International Conference}} on {{Neural Information Processing Systems}}: {{Natural}} and {{Synthetic}}},
  author = {Ng, Andrew Y. and Jordan, Michael I. and Weiss, Yair},
  date = {2001},
  series = {{{NIPS}}'01},
  pages = {849--856},
  publisher = {MIT Press}
}

@book{olszewskiGeneralizedFeatureExtraction2001,
  title = {Generalized Feature Extraction for Structural Pattern Recognition in Time-Series Data},
  author = {Olszewski, Robert Thomas},
  date = {2001},
  publisher = {Carnegie Mellon University}
}

@article{predaCategoricalFunctionalData2021,
  title = {Categorical Functional Data Analysis. the Cfda r Package},
  author = {Preda, Cristian and Grimonprez, Quentin and Vandewalle, Vincent},
  date = {2021},
  journaltitle = {Mathematics},
  volume = {9},
  number = {23},
  pages = {3074},
  publisher = {MDPI}
}

@article{ramos-carrenoScikitfdaPythonPackage2024,
  title = {\textbf{Scikit-Fda} : {{A}} {{\mkbibemph{Python}}} {{Package}} for {{Functional Data Analysis}}},
  author = {Ramos-Carreño, Carlos and Torrecilla, José Luis and Carbajo-Berrocal, Miguel and Marcos, Pablo and Suárez, Alberto},
  date = {2024},
  journaltitle = {Journal of Statistical Software},
  volume = {109},
  number = {2}
}

@book{ramsayFunctionalDataAnalysis2005,
  title = {Functional {{Data Analysis}}},
  author = {Ramsay, J. O. and Silverman, B. W.},
  date = {2005},
  series = {Springer {{Series}} in {{Statistics}}},
  publisher = {Springer New York}
}

@book{rasmussenGaussianProcessesMachine2008,
  title = {Gaussian Processes for Machine Learning},
  author = {Rasmussen, Carl Edward and Williams, Christopher K. I.},
  date = {2008},
  series = {Adaptive Computation and Machine Learning},
  publisher = {MIT Press}
}

@article{schubertFastEagerMedoids2021,
  title = {Fast and Eager k -Medoids Clustering: {{O}} ( k ) Runtime Improvement of the {{PAM}}, {{CLARA}}, and {{CLARANS}} Algorithms},
  author = {Schubert, Erich and Rousseeuw, Peter J.},
  date = {2021},
  journaltitle = {Information Systems},
  volume = {101},
  pages = {101804}
}

@article{scikit-learn,
  title = {Scikit-Learn: {{Machine}} Learning in Python},
  author = {Pedregosa, F. and Varoquaux, G. and Gramfort, A. and Michel, V. and Thirion, B. and Grisel, O. and Blondel, M. and Prettenhofer, P. and Weiss, R. and Dubourg, V. and Vanderplas, J. and Passos, A. and Cournapeau, D. and Brucher, M. and Perrot, M. and Duchesnay, E.},
  date = {2011},
  journaltitle = {Journal of Machine Learning Research},
  volume = {12},
  pages = {2825--2830}
}

@inproceedings{sculleyWebscaleKmeansClustering2010,
  title = {Web-Scale k-Means Clustering},
  booktitle = {Proceedings of the 19th International Conference on {{World}} Wide Web},
  author = {Sculley, D.},
  date = {2010},
  pages = {1177--1178},
  publisher = {ACM}
}

@article{sejdinovicEquivalenceDistancebasedRKHSbased2013,
  title = {Equivalence of Distance-Based and {{RKHS-based}} Statistics in Hypothesis Testing},
  author = {Sejdinovic, Dino and Sriperumbudur, Bharath and Gretton, Arthur and Fukumizu, Kenji},
  date = {2013},
  journaltitle = {The Annals of Statistics},
  volume = {41},
  number = {5}
}

@online{shahroudyNTURGB+DLarge2016,
  title = {{{NTU RGB}}+{{D}}: {{A Large Scale Dataset}} for {{3D Human Activity Analysis}}},
  author = {Shahroudy, Amir and Liu, Jun and Ng, Tian-Tsong and Wang, Gang},
  date = {2016}
}

@inproceedings{smolaHilbertSpaceEmbedding2007,
  title = {A {{Hilbert Space Embedding}} for {{Distributions}}},
  booktitle = {Algorithmic {{Learning Theory}}},
  author = {Smola, Alex and Gretton, Arthur and Song, Le and Schölkopf, Bernhard},
  editor = {Hutter, Marcus and Servedio, Rocco A. and Takimoto, Eiji},
  date = {2007},
  pages = {13--31},
  publisher = {Springer}
}

@article{sriperumbudurUniversalityCharacteristicKernels2011,
  title = {Universality, {{Characteristic Kernels}} and {{RKHS Embedding}} of {{Measures}}},
  author = {Sriperumbudur, Bharath K. and Fukumizu, Kenji and Lanckriet, Gert R. G.},
  date = {2011},
  journaltitle = {Journal of Machine Learning Research},
  volume = {12},
  number = {70},
  pages = {2389--2410}
}

@article{supekarNetworkAnalysisIntrinsic2008,
  title = {Network {{Analysis}} of {{Intrinsic Functional Brain Connectivity}} in {{Alzheimer's Disease}}},
  author = {Supekar, Kaustubh and Menon, Vinod and Rubin, Daniel and Musen, Mark and Greicius, Michael D.},
  date = {2008},
  journaltitle = {PLOS Computational Biology},
  volume = {4},
  number = {6},
  pages = {e1000100}
}

@inproceedings{tangWassersteinDistributionalLearning2023,
  title = {Wasserstein {{Distributional Learning}} via {{Majorization-Minimization}}},
  booktitle = {Proceedings of {{The}} 26th {{International Conference}} on {{Artificial Intelligence}} and {{Statistics}}},
  author = {Tang, Chengliang and Lenssen, Nathan and Wei, Ying and Zheng, Tian},
  date = {2023},
  pages = {10703--10731},
  publisher = {PMLR}
}

@article{tauschmannClosedloopInsulinDelivery2018a,
  title = {Closed-Loop Insulin Delivery in Suboptimally Controlled Type 1 Diabetes: A Multicentre, 12-Week Randomised Trial},
  author = {Tauschmann, Martin and Thabit, Hood and Bally, Lia and Allen, Janet M and Hartnell, Sara and Wilinska, Malgorzata E and Ruan, Yue and Sibayan, Judy and Kollman, Craig and Cheng, Peiyao and Beck, Roy W and Acerini, Carlo L and Evans, Mark L and Dunger, David B and Elleri, Daniela and Campbell, Fiona and Bergenstal, Richard M and Criego, Amy and Shah, Viral N and Leelarathna, Lalantha and Hovorka, Roman and Alvarado, B and Ashanti, C and Baggott, J and Balakrishnan, K and Barber, N and Bath, L and Beasley, S and Beatson, C and Borgman, S and Bradshaw, S and Bugielski, B and Carlson, Ab and Collett, E and Curtis, J and Demmitt, J and Donahue, D and Exall, J and Forshaw, R and Hayes, J and Heath, S and Hellmann, A and Huegel, V and Hyatt, J and James, L and Joseph, H and Joshee, P and Konerza, W and Lum, J and Madden, M and Martens, T and McCarthy, C and McDonald, M and Mikityuk, V and Miles, H and Miller, D and Mubita, W and Murphy, C and Olson, B and Pad, R and Patibandla, N and Riding, K and Shaju, A and Thomas, La and Thomson, J and White, D and Yau, S and Yong, J},
  date = {2018},
  journaltitle = {The Lancet},
  volume = {392},
  number = {10155},
  pages = {1321--1329}
}

@article{tolstikhinMinimaxEstimationKernel2017,
  title = {Minimax {{Estimation}} of {{Kernel Mean Embeddings}}},
  author = {Tolstikhin, Ilya and Sriperumbudur, Bharath K. and Muandet, Krikamol},
  date = {2017},
  journaltitle = {Journal of Machine Learning Research},
  volume = {18},
  number = {86},
  pages = {1--47}
}

@book{titteringtonStatisticalAnalysisFinite1985,
  title = {Statistical Analysis of Finite Mixture Distributions},
  author = {Titterington, D. M. and Smith, A. F. M. and Makov, U. E.},
  date = {1985},
  publisher = {Wiley}
}

@inproceedings{trespMixturesGaussianProcesses2000,
  title = {Mixtures of {{Gaussian Processes}}},
  booktitle = {Advances in {{Neural Information Processing Systems}}},
  author = {Tresp, Volker},
  date = {2000},
  volume = {13},
  publisher = {MIT Press}
}

@article{wardHierarchicalGroupingOptimize1963,
  title = {Hierarchical {{Grouping}} to {{Optimize}} an {{Objective Function}}},
  author = {Ward, Joe H.},
  date = {1963},
  journaltitle = {Journal of the American Statistical Association},
  volume = {58},
  number = {301},
  pages = {236--244}
}

@article{wuParkinsonsDiseaseRelated2015,
  title = {Parkinson's {{Disease-Related Spatial Covariance Pattern}} Identified with {{Resting-State Functional MRI}}},
  author = {Wu, Tao and Ma, Yilong and Zheng, Zheng and Peng, Shichun and Wu, Xiaoli and Eidelberg, David and Chan, Piu},
  date = {2015},
  journaltitle = {Journal of Cerebral Blood Flow \& Metabolism},
  volume = {35},
  number = {11},
  pages = {1764--1770}
}

@article{wynneKernelTwosampleTest2022,
  title = {A Kernel Two-Sample Test for Functional Data},
  author = {Wynne, George and Duncan, Andrew B},
  date = {2022},
  journaltitle = {Journal of Machine Learning Research},
  volume = {23},
  number = {73},
  pages = {1--51}
}

@article{yizongchengMeanShiftMode1995,
  title = {Mean Shift, Mode Seeking, and Clustering},
  author = {{Yizong Cheng}},
  date = {1995},
  journaltitle = {IEEE Transactions on Pattern Analysis and Machine Intelligence},
  volume = {17},
  number = {8},
  pages = {790--799}
}

@article{zhangBIRCHEfficientData1996,
  title = {{{BIRCH}}: An Efficient Data Clustering Method for Very Large Databases},
  author = {Zhang, Tian and Ramakrishnan, Raghu and Livny, Miron},
  date = {1996},
  journaltitle = {ACM SIGMOD Record},
  volume = {25},
  number = {2},
  pages = {103--114}
}

@article{panaretosStatisticalAspectsWasserstein2019,
  title = {Statistical Aspects of Wasserstein Distances},
  author = {Panaretos, Victor M. and Zemel, Yoav},
  journal = {Annual Review of Statistics and Its Application},
  volume = {6},
  pages = {405--431},
  year = {2019}
}

@article{wadwa2023trial,
  title={Trial of hybrid closed-loop control in young children with type 1 diabetes},
  author={Wadwa, R Paul and Reed, Zachariah W and Buckingham, Bruce A and DeBoer, Mark D and Ekhlaspour, Laya and Forlenza, Gregory P and Schoelwer, Melissa and Lum, John and Kollman, Craig and Beck, Roy W and others},
  journal={New England Journal of Medicine},
  volume={388},
  number={11},
  pages={991--1001},
  year={2023},
  publisher={Mass Medical Soc}
}

@article{Bradley02102021,
author = {Jonathan R. Bradley},
title = {An Approach to Incorporate Subsampling Into a Generic Bayesian Hierarchical Model},
journal = {Journal of Computational and Graphical Statistics},
volume = {30},
number = {4},
pages = {889--905},
year = {2021},
publisher = {Taylor \& Francis},
}

\appendix

\section{Background}\label{sec:prelim}

This section establishes the notation and foundational properties of Gaussian measures and kernel methods used throughout this work. Appendix~\ref{app:banach_kl} discusses the extension of this framework to Banach spaces.

\subsection{Gaussian measures on separable Hilbert spaces}\label{sec:prelim_gaussian}

Let \(\mathcal{X}\) be a separable real Hilbert space equipped with its Borel \(\sigma\)-algebra \(\mathcal{B}(\mathcal{X})\). A Borel probability measure \(\nu\) on \(\mathcal{X}\) is said to be \emph{Gaussian} if, for every continuous linear functional \(f \in \mathcal{X}^*\), the pushforward measure \(\nu \circ f^{-1}\) is a Gaussian measure on \(\mathbb{R}\). By the Riesz representation theorem, we routinely identify \(\mathcal{X}^*\) with \(\mathcal{X}\).

A Gaussian measure \(\nu\) is uniquely determined by its mean \(m \in \mathcal{X}\) and its covariance operator \(\mathcal{K}: \mathcal{X} \to \mathcal{X}\), which are defined via the relations
\[
\langle m, h \rangle_{\mathcal{X}} = \int_{\mathcal{X}} \langle x, h \rangle_{\mathcal{X}} \diff \nu(x), \quad h \in \mathcal{X},
\]
and
\[
\langle \mathcal{K} g, h \rangle_{\mathcal{X}} = \int_{\mathcal{X}} \langle x-m, g \rangle_{\mathcal{X}} \langle x-m, h \rangle_{\mathcal{X}} \diff \nu(x), \quad g, h \in \mathcal{X}.
\]
We denote such a measure by \(\nu = \mathcal{N}(m, \mathcal{K})\). The covariance operator \(\mathcal{K}\) is necessarily self-adjoint, positive semidefinite, and trace-class. The latter property implies that for any orthonormal basis \((e_i)_{i \ge 1}\) of \(\mathcal{X}\), the trace is finite:
\[
\operatorname{tr}(\mathcal{K}) = \sum_{i=1}^\infty \langle \mathcal{K} e_i, e_i \rangle_{\mathcal{X}} < \infty.
\]
The case \(\mathcal{K}=0\) corresponds to the Dirac measure \(\delta_m\); this case is excluded from the model class in the main text. In particular, if \(X \sim \mathcal{N}(m, \mathcal{K})\), its moments satisfy \(\mathbb{E}\|X-m\|_{\mathcal{X}}^2 = \operatorname{tr}(\mathcal{K})\) and \(\mathbb{E}\|X\|_{\mathcal{X}}^2 = \|m\|_{\mathcal{X}}^2 + \operatorname{tr}(\mathcal{K})\). 

\subsection{Kernel mean embeddings and Maximum Mean Discrepancy}\label{sec:prelim_mmd}

Let \(\kappa: \mathcal{X} \times \mathcal{X} \to \mathbb{R}\) be a Borel measurable, positive-definite kernel with an associated reproducing kernel Hilbert space (RKHS) \(\mathcal{H}_\kappa\). For probability measures \(P, Q \in \mathcal{P}(\mathcal{X})\), we assume the following integrability condition holds:
\begin{equation}\label{eq:kappa_integrability}
\mathbb{E}_{x \sim P} \left[ \sqrt{\kappa(x,x)} \right] < \infty, \quad \mathbb{E}_{y \sim Q} \left[ \sqrt{\kappa(y,y)} \right] < \infty.
\end{equation}
Under this assumption, the kernel mean embeddings
\[
\mu_P \coloneqq \mathbb{E}_{x \sim P}[\kappa(x, \cdot)], \quad \mu_Q \coloneqq \mathbb{E}_{y \sim Q}[\kappa(y, \cdot)]
\]
are well-defined as Bochner integrals in \(\mathcal{H}_\kappa\). The squared Maximum Mean Discrepancy (MMD) between \(P\) and \(Q\) is defined as the distance between their embeddings:
\begin{align}
\operatorname{MMD}_\kappa^2(P, Q) &\coloneqq \|\mu_P - \mu_Q\|_{\mathcal{H}_\kappa}^2 \nonumber \\
&= \mathbb{E}_{x,x' \sim P}[\kappa(x,x')] - 2\mathbb{E}_{x \sim P, y \sim Q}[\kappa(x,y)] + \mathbb{E}_{y,y' \sim Q}[\kappa(y,y')], \label{eq:mmd_expansion}
\end{align}
where \(x, x'\) are i.i.d.\ samples from \(P\) and \(y, y'\) are i.i.d.\ samples from \(Q\). Note that if \(\kappa\) is bounded, condition \eqref{eq:kappa_integrability} is satisfied for all probability measures.

A kernel is called \emph{characteristic} if the embedding map \(P \mapsto \mu_P\) is injective. In this case, \(\operatorname{MMD}_\kappa(P, Q) = 0\) if and only if \(P = Q\), meaning the MMD induces a true metric on the space of probability measures.

\subsection{Fourier interpretation of the Gaussian MMD}\label{sec:gaussian_mmd_interpretation}

The Gaussian radial kernel, given by
\begin{equation}\label{eq:gaussian_radial_kernel}
\kappa(x,y) \coloneqq \exp\left( -\frac{\|x-y\|_{\mathcal{X}}^2}{2\sigma_\kappa^2} \right), \quad \sigma_\kappa > 0,
\end{equation}
is both bounded and characteristic on separable Hilbert spaces.

To gain intuition into the geometry induced by this kernel, we briefly consider the finite-dimensional case \(\mathcal{X} = \mathbb{R}^d\). For \(P, Q \in \mathcal{P}(\mathbb{R}^d)\), let \(\varphi_P\) and \(\varphi_Q\) denote their respective characteristic functions, \(\varphi_P(\omega) \coloneqq \int_{\mathbb{R}^d} e^{i\omega^\top x} \diff P(x)\). 

By Bochner's theorem, the shift-invariant kernel \eqref{eq:gaussian_radial_kernel} admits a Fourier representation with a Gaussian spectral density \(\psi_{\sigma_\kappa}\):
\begin{equation*}
\kappa(x,y) = \frac{\sigma_\kappa^d}{(2\pi)^{d/2}} \int_{\mathbb{R}^d} e^{i\omega^\top (x-y)} e^{-\sigma_\kappa^2\|\omega\|^2/2} \diff \omega.
\end{equation*}

By defining the signed measure \(\mu \coloneqq P-Q\) and applying Fubini's theorem to the expanded MMD objective \eqref{eq:mmd_expansion}, we obtain:
\begin{align*}
\operatorname{MMD}_{\kappa}^2(P,Q) &= \iint_{\mathbb{R}^d \times \mathbb{R}^d} \kappa(x,y) \diff \mu(x) \diff \mu(y) \\
&= \frac{\sigma_\kappa^d}{(2\pi)^{d/2}} \int_{\mathbb{R}^d} \left| \int_{\mathbb{R}^d} e^{i\omega^\top x} \diff \mu(x) \right|^2 e^{-\sigma_\kappa^2\|\omega\|^2/2} \diff \omega \\
&= \frac{\sigma_\kappa^d}{(2\pi)^{d/2}} \int_{\mathbb{R}^d} \big| \varphi_P(\omega) - \varphi_Q(\omega) \big|^2 e^{-\sigma_\kappa^2\|\omega\|^2/2} \diff \omega.
\end{align*}

This formulation reveals a frequency-domain perspective: the squared Gaussian MMD is a weighted \(L^2\) distance between the characteristic functions of the two distributions. The Gaussian weight \(e^{-\sigma_\kappa^2\|\omega\|^2/2}\) acts as a low-pass filter, effectively penalizing discrepancies in low-frequency modes (global structural differences) while naturally smoothing out high-frequency noise.

\section{Closed-form MMD expressions}\label{app:mmd_closed_forms}

To evaluate the cross-expectations $J_{i,k}$ and $I_{k,s}$ of \eqref{eq:JI_definitions} in the infinite-dimensional setting, we rely on the Fredholm determinant. Recall that for any trace-class operator $T \in \mathcal{L}(\mathcal{X})$, the determinant $\det_F(\Id + T)$ is well-defined and serves as the natural generalization of the standard matrix determinant.

\begin{prop}[MMD for the Gaussian kernel]\label{prop:mmd_closed_forms}
	Let $A \in \mathcal{L}(\mathcal{X})$ be a self-adjoint, positive semi-definite operator, and consider the Gaussian kernel induced by $A$,
	\begin{equation}\label{eq:gaussian_kernel}
		\kappa_A(x,y)\coloneqq\exp\left(-\frac{1}{2}\big\langle x-y,A(x-y)\big\rangle_{\mathcal{X}}\right).
	\end{equation}
	Then the expectations in \eqref{eq:JI_definitions} are given by
	\begingroup\small
	\begin{align}
		J_{i,k}
		&=
		\frac{\exp\Bigl(-\tfrac12\big\langle X_i-m_k,\, A^{1/2}\bigl(\Id + A^{1/2}\mathcal{K}_k A^{1/2}\bigr)^{-1}A^{1/2}(X_i-m_k)\big\rangle_{\mathcal{X}}\Bigr)}{\det_F\bigl(\Id + A^{1/2}\mathcal{K}_k A^{1/2}\bigr)^{1/2}}, \label{eq:J_gaussian} \\[2ex]
		I_{k,s}
		&=
		\frac{\exp\Bigl(-\tfrac12\big\langle m_k-m_s,\, A^{1/2}\bigl(\Id + A^{1/2}(\mathcal{K}_k+\mathcal{K}_s)A^{1/2}\bigr)^{-1}A^{1/2}(m_k-m_s)\big\rangle_{\mathcal{X}}\Bigr)}{\det_F\bigl(\Id + A^{1/2}(\mathcal{K}_k+\mathcal{K}_s)A^{1/2}\bigr)^{1/2}}.
		\label{eq:I_gaussian}
	\end{align}
	\endgroup
\end{prop}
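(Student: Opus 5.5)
The plan is to reduce both formulas to one Gaussian integral. If \(Y\sim\mathcal N(m,\mathcal K)\) and \(x\in\mathcal X\) is fixed, I will show
\begin{equation*}
\mathbb E\Bigl[\exp\bigl(-\tfrac12\langle x-Y,A(x-Y)\rangle_{\mathcal X}\bigr)\Bigr]
=\frac{\exp\Bigl(-\tfrac12\big\langle x-m,\,A^{1/2}(\Id+A^{1/2}\mathcal K A^{1/2})^{-1}A^{1/2}(x-m)\big\rangle_{\mathcal X}\Bigr)}{\det_F(\Id+A^{1/2}\mathcal K A^{1/2})^{1/2}}.
\end{equation*}
Taking \(x=X_i\), \(m=m_k\), \(\mathcal K=\mathcal K_k\) gives \eqref{eq:J_gaussian}.

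For \eqref{eq:I_gaussian}, take \(Y\sim\nu_k\) and \(Y'\sim\nu_s\) independent. Then \(Y-Y'\sim\mathcal N(m_k-m_s,\mathcal K_k+\mathcal K_s)\), which can be checked through characteristic functionals. Also \(\kappa_A(Y,Y')\) depends only on \(Y-Y'\), so the same identity with \(x=0\) gives the result.

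To prove the identity, push forward by \(A^{1/2}\), which is bounded, self-adjoint and positive semidefinite. We have \(\langle x-Y,A(x-Y)\rangle_{\mathcal X}=\|A^{1/2}x-A^{1/2}Y\|_{\mathcal X}^2\), and \(W\coloneqq A^{1/2}Y\sim\mathcal N(A^{1/2}m,\,S)\) with \(S\coloneqq A^{1/2}\mathcal K A^{1/2}\). This \(S\) is trace-class, self-adjoint and positive semidefinite, since \(\mathcal K\) is trace-class and \(A^{1/2}\) is bounded. The problem then reduces to computing \(\mathbb E\exp(-\frac12\|z-W\|^2)\) with \(z\coloneqq A^{1/2}x\).

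Next diagonalise \(S\) using the spectral theorem for compact self-adjoint operators: \(S=\sum_j\lambda_j\,e_j\otimes e_j\) with \(\lambda_j\ge0\) and \(\sum_j\lambda_j<\infty\), where \((e_j)\) is completed to an orthonormal basis. By the Karhunen–Loève expansion, the coordinates \(\langle W,e_j\rangle_{\mathcal X}\) are independent \(\mathcal N(\langle A^{1/2}m,e_j\rangle_{\mathcal X},\lambda_j)\). The directions with \(\lambda_j=0\) are deterministic. Write \(d_j\coloneqq\langle z-A^{1/2}m,e_j\rangle_{\mathcal X}\). Then
\begin{equation*}
\|z-W\|^2=\sum_j\langle z-W,e_j\rangle_{\mathcal X}^2 ,
\end{equation*}
and each factor is the one-dimensional integral
\begin{equation*}
\mathbb E\,e^{-\frac12(d_j-\sqrt{\lambda_j}\xi)^2}=(1+\lambda_j)^{-1/2}\exp\bigl(-\tfrac12 d_j^2/(1+\lambda_j)\bigr),\qquad \xi\sim\mathcal N(0,1).
\end{equation*}

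The main obstacle is passing from finite products to the infinite product. I would truncate to \(N\) coordinates. The quantity \(\exp(-\frac12\sum_{j\le N}\langle z-W,e_j\rangle^2)\) decreases in \(N\), is bounded by \(1\), and converges pointwise to \(\exp(-\frac12\|z-W\|^2)\). By monotone (or dominated) convergence together with independence, the expectation equals
\begin{equation*}
\lim_{N\to\infty}\prod_{j\le N}(1+\lambda_j)^{-1/2}e^{-d_j^2/(2(1+\lambda_j))}.
\end{equation*}
The first product converges to \(\det_F(\Id+S)^{-1/2}\) because \(\sum_j\lambda_j<\infty\). In the exponent, \(\sum_j d_j^2/(1+\lambda_j)=\langle z-A^{1/2}m,(\Id+S)^{-1}(z-A^{1/2}m)\rangle_{\mathcal X}\), since \((\Id+S)^{-1}\) is bounded with the same eigenbasis. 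Substituting \(z-A^{1/2}m=A^{1/2}(x-m)\) yields the claimed formula.
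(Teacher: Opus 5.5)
Your proposal is correct and follows essentially the same route as the paper: reduce to the isotropic kernel by pushing forward through $A^{1/2}$, diagonalize the transformed trace-class covariance, truncate and pass to the limit by dominated convergence to obtain $\det_F(\Id+S)^{-1/2}$ and the $(\Id+S)^{-1}$ quadratic form, and treat $I_{k,s}$ through $Y-Y'\sim\mathcal N(m_k-m_s,\mathcal K_k+\mathcal K_s)$ evaluated at $x=0$. The only cosmetic difference is that you diagonalize $S=A^{1/2}\mathcal K A^{1/2}$ directly, whereas the paper first proves the isotropic formula and then substitutes the transformed variables.
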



\begin{prop}[MMD for the polynomial kernel]\label{prop:mmd_closed_forms_polynomial}
	Fix $p\in\mathbb{N}$, $c\ge 0$, and consider the  kernel
	\begin{equation}\label{eq:polykernel}
	  \kappa(x,y)\coloneqq(\langle x,y\rangle_{\mathcal{X}}+c)^p.
	\end{equation}
	Then the expectations in \eqref{eq:JI_definitions} are given by
	\begingroup\small
	\begin{align*}
		J_{i,k}
		&=\sum_{t=0}^{\lfloor p/2\rfloor}\frac{p!}{(p-2t)!\,2^t\,t!}\,\langle X_i,\mathcal{K}_k X_i\rangle_{\mathcal{X}}^t\,\left(\langle X_i,m_k\rangle_{\mathcal{X}}+c\right)^{p-2t}, \\
		I_{k,s}
		&=\sum_{r=0}^{p}\binom{p}{r}c^{p-r}\,\big\langle M_{r,k}, M_{r,s}\big\rangle_{\mathcal{X}^{\otimes r}},\qquad \text{with }M_{r,k}\coloneqq\mathbb{E}_{y\sim \nu_k}[y^{\otimes r}]\in\mathcal{X}^{\otimes r},\quad M_{0,k}=1.
	\end{align*}
	\endgroup
\end{prop}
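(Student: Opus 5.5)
The plan is to treat the two expectations separately, reducing $J_{i,k}$ to a one-dimensional Gaussian moment computation and $I_{k,s}$ to a tensor-product identity combined with independence. Integrability is handled by Fernique's theorem, which guarantees that a Gaussian measure $\nu_k=\mathcal N(m_k,\mathcal K_k)$ on $\mathcal X$ has finite moments of every order, i.e.\ $\mathbb E_{Y\sim\nu_k}\|Y\|_{\mathcal X}^r<\infty$ for all $r\ge0$. Since $|\kappa(x,y)|\le(\|x\|_{\mathcal X}\|y\|_{\mathcal X}+c)^p$, this makes every expectation below finite and justifies all uses of Fubini's theorem.

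\emph{The cross term $J_{i,k}$.} Fix $X_i$ and let $Y\sim\nu_k$. By the definition of a Gaussian measure in \Cref{sec:prelim_gaussian}, the scalar $\langle X_i,Y\rangle_{\mathcal X}$ is real Gaussian. Its mean is $a\coloneqq\langle X_i,m_k\rangle_{\mathcal X}$ and its variance is $v\coloneqq\langle\mathcal K_kX_i,X_i\rangle_{\mathcal X}$, both read off from the defining relations for the mean and covariance operator. Write $\langle X_i,Y\rangle_{\mathcal X}+c=(a+c)+Z$ with $Z\sim\mathcal N(0,v)$ and expand binomially. The odd moments of $Z$ vanish, and the even ones are $\mathbb E Z^{2t}=(2t-1)!!\,v^t=\frac{(2t)!}{2^t t!}v^t$. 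Multiplying by $\binom{p}{2t}$ gives the coefficient $\frac{p!}{(p-2t)!\,2^t\,t!}$, which yields the stated formula. This step is routine.

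\emph{The self term $I_{k,s}$.} First expand $(\langle y,y'\rangle_{\mathcal X}+c)^p=\sum_{r=0}^p\binom pr c^{p-r}\langle y,y'\rangle_{\mathcal X}^r$. Then use the Hilbert tensor-product identity $\langle y,y'\rangle_{\mathcal X}^r=\langle y^{\otimes r},y'^{\otimes r}\rangle_{\mathcal X^{\otimes r}}$, which holds on elementary tensors by the definition of the inner product on $\mathcal X^{\otimes r}$. The map $y\mapsto y^{\otimes r}$ is continuous, hence Borel, and $\|y^{\otimes r}\|=\|y\|^r$. By Fernique's theorem it is therefore Bochner integrable under $\nu_k$, so $M_{r,k}=\mathbb E[Y^{\otimes r}]$ is well defined in $\mathcal X^{\otimes r}$. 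Now take $Y\sim\nu_k$ and $Y'\sim\nu_s$ independent and condition on $Y'$ via Fubini. Because Bochner integrals commute with the bounded linear functional $\langle\cdot,Y'^{\otimes r}\rangle$, we get $\mathbb E[\langle Y^{\otimes r},Y'^{\otimes r}\rangle\mid Y']=\langle M_{r,k},Y'^{\otimes r}\rangle$. Integrating over $Y'$ and applying the same commutation once more gives $\langle M_{r,k},M_{r,s}\rangle_{\mathcal X^{\otimes r}}$. Summing over $r$ completes the formula, with $M_{0,k}=1$ handling the $r=0$ term.

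The main obstacle is the measure-theoretic justification in the $I_{k,s}$ step: Bochner measurability and integrability of $Y^{\otimes r}$ in the infinite-dimensional tensor space, and the interchange of expectation with the inner product. Fernique's theorem (Gaussian tails) settles both, and separability of $\mathcal X$, hence of $\mathcal X^{\otimes r}$, gives strong measurability via Pettis' theorem.
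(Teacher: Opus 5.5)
Your proposal is correct and follows essentially the same route as the paper: $J_{i,k}$ is reduced to even moments of the one-dimensional Gaussian $\langle X_i,Y\rangle_{\mathcal X}+c$, and $I_{k,s}$ to a binomial expansion combined with the identity $\langle y,y'\rangle_{\mathcal X}^r=\langle y^{\otimes r},y'^{\otimes r}\rangle_{\mathcal X^{\otimes r}}$ and independence. The paper states the integrability in one line (finite Gaussian moments); your Fernique and Fubini argument for Bochner integrability and for exchanging expectation with the inner product simply spells that step out.
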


The formal proofs of Propositions \ref{prop:mmd_closed_forms} and \ref{prop:mmd_closed_forms_polynomial} are given in \Cref{sec:proof_mmd_closed_forms,sec:proof_mmd_closed_forms_polynomial}.

\subsection{Numerical computations and algorithm description}\label{app:explicit_formulas_rm}
Given an orthonormal basis $(e_r)_{r=1}^M$ of $\mathcal{X}_M$, we represent observations and means by their coordinate vectors
\begin{equation}\label{eq:coord_vectors}
\mathbf{x}_i \coloneqq \big(\langle X_i,e_r\rangle_{\mathcal{X}}\big)_{r=1}^M\in\R^M,
\qquad
\mathbf{m}_k \coloneqq \big(\langle m_k,e_r\rangle_{\mathcal{X}}\big)_{r=1}^M\in\R^M,
\end{equation}
and the covariance operator $\mathcal{K}_{k,M}$ by its matrix representation $\mathbf{K}_k\in\R^{M\times M}$ with entries
\begin{equation}\label{eq:cov_matrix}
(\mathbf{K}_k)_{r\ell}\coloneqq \langle \mathcal{K}_{k,M} e_\ell,e_r\rangle_{\mathcal{X}}
\qquad 1\le r,\ell\le M.
\end{equation}
In these coordinates, the pushforward measure $\nu_{k,M}$ becomes the multivariate normal $\mathcal{N}(\mathbf{m}_k,\mathbf{K}_k)$ on $\R^M$.
Using these coordinate representations, we now provide explicit formulas for the finite-dimensional approximations.

\paragraph{Projected responsibilities.}
The pushforward $\nu_{k,M}=\nu_k\circ \Pi_M^{-1}$ is a multivariate normal distribution on $\mathbb{R}^M$,
\[
\nu_{k,M}=\mathcal{N}(\mathbf{m}_k,\mathbf{K}_k).
\]
If each $\mathbf{K}_k$ is positive definite, then each $\nu_{k,M}$ admits the Lebesgue density
\[
p_k^{(M)}(\mathbf{x})
\coloneqq (2\pi)^{-M/2}\det(\mathbf{K}_k)^{-1/2}
\exp\!\Big\{-\tfrac12(\mathbf{x}-\mathbf{m}_k)^\top \mathbf{K}_k^{-1}(\mathbf{x}-\mathbf{m}_k)\Big\}.
\]
In this case the projected responsibilities satisfy, for $i\in\{1,\dots,n\}$ and $k\in\{1,\dots,K\}$,
\[
\gamma^{(M)}_{i,k}
\coloneqq \gamma_k^{(M)}(\mathbf{x}_i)
=\frac{\pi_k p_k^{(M)}(\mathbf{x}_i)}{\sum_{s=1}^K \pi_s p_s^{(M)}(\mathbf{x}_i)}.
\]
If some $\mathbf{K}_k$ are singular, the corresponding $\nu_{k,M}$ are supported on affine subspaces; the Radon--Nikodym definition of $\gamma_k^{(M)}$ still applies and one may compute densities on the supports or use a small ridge regularization.

\vspace{0.4em}
\noindent\textbf{1. Gaussian radial kernel (Proposition \ref{prop:mmd_closed_forms})}
Let $\mathbf{A}\in\R^{M\times M}$ denote the matrix representation of the restricted operator $A|_{\mathcal{X}_M}$ in the basis $(e_r)_{r=1}^M$, with entries $\mathbf{A}_{r\ell}=\langle A e_\ell,e_r\rangle_{\mathcal{X}}$. The closed forms of Proposition \ref{prop:mmd_closed_forms} become the finite-dimensional approximations
\begin{align*}
J^{(M)}_{i,k}
&\coloneqq \det\!\big(I_M+\mathbf{A}^{1/2}\mathbf{K}_k\mathbf{A}^{1/2}\big)^{-1/2}
\\
&\qquad \times
\exp\!\Big\{-\tfrac{1}{2}(\mathbf{x}_i-\mathbf{m}_k)^\top
\mathbf{A}^{1/2}\big(I_M+\mathbf{A}^{1/2}\mathbf{K}_k\mathbf{A}^{1/2}\big)^{-1}\mathbf{A}^{1/2}(\mathbf{x}_i-\mathbf{m}_k)\Big\},\\
I^{(M)}_{k,s}
&\coloneqq \det\!\big(I_M+\mathbf{A}^{1/2}(\mathbf{K}_k+\mathbf{K}_s)\mathbf{A}^{1/2}\big)^{-1/2}
\\
&\qquad \times
\exp\!\Big\{-\tfrac{1}{2}(\mathbf{m}_k-\mathbf{m}_s)^\top
\mathbf{A}^{1/2}\big(I_M+\mathbf{A}^{1/2}(\mathbf{K}_k+\mathbf{K}_s)\mathbf{A}^{1/2}\big)^{-1}\mathbf{A}^{1/2}(\mathbf{m}_k-\mathbf{m}_s)\Big\},
\end{align*}
and we set
\[
J^{(M)}_k\coloneqq \frac1n\sum_{i=1}^n J^{(M)}_{i,k},
\qquad
\mathbf{J}^{(M)}\coloneqq (J^{(M)}_1,\dots,J^{(M)}_K)^\top,
\qquad
\mathbf{I}^{(M)}\coloneqq (I^{(M)}_{k,s})_{k,s=1}^K.
\]
For the isotropic case $\mathbf{A}=\sigk^{-2}I_M$, one writes $\alpha=-1/(2\sigk^2)$ and recovers the equivalent forms $\det(I_M-2\alpha\mathbf{K}_k)^{-1/2}$ and $(I_M-2\alpha\mathbf{K}_k)^{-1}$ in the exponents.

For numerically stable log-domain evaluation, compute the symmetric eigendecompositions
\[
\mathbf{A}^{1/2}\mathbf{K}_k\mathbf{A}^{1/2} = \mathbf{V}_k\mathrm{diag}(\lambda_{k,1},\dots,\lambda_{k,M})\mathbf{V}_k^\top,
\]
\[
\mathbf{A}^{1/2}(\mathbf{K}_k+\mathbf{K}_s)\mathbf{A}^{1/2} = \mathbf{V}_{k,s}\mathrm{diag}(\lambda_{k,s,1},\dots,\lambda_{k,s,M})\mathbf{V}_{k,s}^\top,
\]
with orthogonal matrices $\mathbf{V}_k,\mathbf{V}_{k,s}$ and $\lambda_{\cdot,\cdot}\ge 0$. Note that $\det(I_M+\mathbf{A}^{1/2}\mathbf{K}_k\mathbf{A}^{1/2})=\prod_r(1+\lambda_{k,r})$. Writing $\mathbf{z}_{i,k}\coloneqq \mathbf{V}_k^\top\mathbf{A}^{1/2}(\mathbf{x}_i-\mathbf{m}_k)$ and $\mathbf{d}_{k,s}\coloneqq \mathbf{V}_{k,s}^\top\mathbf{A}^{1/2}(\mathbf{m}_k-\mathbf{m}_s)$, we obtain the stable log-domain expressions
\begin{align*}
\log J^{(M)}_{i,k}
&= -\frac12\sum_{r=1}^M \log(1+\lambda_{k,r})
\;-\;\frac{1}{2}\sum_{r=1}^M \frac{(\mathbf{z}_{i,k})_r^2}{1+\lambda_{k,r}},\\
\log I^{(M)}_{k,s}
&= -\frac12\sum_{r=1}^M \log(1+\lambda_{k,s,r})
\;-\;\frac{1}{2}\sum_{r=1}^M \frac{(\mathbf{d}_{k,s})_r^2}{1+\lambda_{k,s,r}}.
\end{align*}
When $\mathbf{A}=\sigk^{-2}I_M$, the matrices $\mathbf{V}_k$ coincide with the eigenvectors of $\mathbf{K}_k$ and the expressions reduce to those with $\alpha=-1/(2\sigk^2)$.
To reduce cost, one may truncate these sums to $r\le R$ (e.g.\ the leading $R$ eigenvalues), i.e.\ replace the relevant matrices by their rank-$R$ spectral approximations.

\vspace{0.4em}
\noindent\textbf{2. Polynomial kernel (Proposition \ref{prop:mmd_closed_forms_polynomial})}
Fix an integer $p\ge 1$ and $c\ge 0$ and consider $\kappa(x,y)=(\langle x,y\rangle_{\mathcal{X}}+c)^p$.
In the projected space, this becomes $\kappa(\mathbf{x},\mathbf{y})=(\mathbf{x}^\top\mathbf{y}+c)^p$ on $\mathbb{R}^M$.

\paragraph{Computing $J^{(M)}_{i,k}$.}
For $y\sim \nu_{k,M}$, define the scalars
\[
\mu^{(M)}_{i,k}\coloneqq \mathbf{x}_i^\top\mathbf{m}_k+c,
\qquad
v^{(M)}_{i,k}\coloneqq \mathbf{x}_i^\top \mathbf{K}_k\,\mathbf{x}_i,
\]
so that $\mathbf{x}_i^\top y+c$ is a one-dimensional Gaussian with mean $\mu^{(M)}_{i,k}$ and variance $v^{(M)}_{i,k}$.
Then Proposition \ref{prop:mmd_closed_forms_polynomial} yields the efficient closed form
\[
J^{(M)}_{i,k}
=
\sum_{t=0}^{\lfloor p/2\rfloor}\frac{p!}{(p-2t)!\,2^t\,t!}\,\big(v^{(M)}_{i,k}\big)^{t}\,\big(\mu^{(M)}_{i,k}\big)^{p-2t}.
\]

\paragraph{Computing $I^{(M)}_{k,s}$.}
Let $y\sim \nu_{k,M}$ and $y'\sim \nu_{s,M}$ be independent in $\mathbb{R}^M$, and set $Z_{k,s}\coloneqq y^\top y'$.
Then
\[
I^{(M)}_{k,s}
=
\sum_{r=0}^{p}\binom{p}{r}c^{p-r}\,\mathbb{E}\!\left[Z_{k,s}^{\,r}\right].
\]
Equivalently, by Proposition \ref{prop:mmd_closed_forms_polynomial},
\[
I^{(M)}_{k,s}
=
\sum_{r=0}^{p}\binom{p}{r}c^{p-r}\,
\big\langle \mathbf{M}^{(M)}_{r,k},\mathbf{M}^{(M)}_{r,s}\big\rangle_{(\mathbb{R}^M)^{\otimes r}},
\]
where
\[
\mathbf{M}^{(M)}_{r,k}\coloneqq \mathbb{E}_{y\sim \nu_{k,M}}[y^{\otimes r}] \in (\mathbb{R}^M)^{\otimes r}.
\]
For small degrees, these moments admit simple closed forms in terms of \((\mathbf{m}_k,\mathbf{K}_k)\). In particular,
\begin{align*}
\mathbb{E}[Z_{k,s}]
&= \mathbf{m}_k^\top \mathbf{m}_s,\\
\mathbb{E}[Z_{k,s}^2]
&= (\mathbf{m}_k^\top \mathbf{m}_s)^2
+ \mathbf{m}_k^\top \mathbf{K}_s\,\mathbf{m}_k
+ \mathbf{m}_s^\top \mathbf{K}_k\,\mathbf{m}_s
+ \mathrm{tr}(\mathbf{K}_k\mathbf{K}_s),\\
\mathbb{E}[Z_{k,s}^3]
&= (\mathbf{m}_k^\top \mathbf{m}_s)^3
+3(\mathbf{m}_k^\top \mathbf{m}_s)\Big(\mathbf{m}_k^\top \mathbf{K}_s\,\mathbf{m}_k+\mathbf{m}_s^\top \mathbf{K}_k\,\mathbf{m}_s
+\mathrm{tr}(\mathbf{K}_k\mathbf{K}_s)\Big)
+6\,\mathbf{m}_k^\top \mathbf{K}_s\mathbf{K}_k\,\mathbf{m}_s.
\end{align*}
For larger $p$, one may compute higher-order moments via Wick/Isserlis expansions.

\subsection{Algorithm descriptions}\label{app:algorithms}

We present pseudocode for the fitting procedure described in \Cref{sec:fitting}.
The algorithm operates in the projected coordinate space $\R^M$ using the finite-dimensional quantities $(\mathbf{J}^{(M)}, \mathbf{I}^{(M)})$ from \Cref{app:explicit_formulas_rm}, which we denote $(\mathbf{J}, \mathbf{I})$ for brevity.

\begin{algorithm}[t]
\caption{Alternating Optimization for Projected MMD Gaussian Mixtures}
\label{alg:mmd_fitting}
\begin{algorithmic}[1]
\Require Projected data $\{\mathbf{x}_{i,M}\}_{i=1}^n \subset \mathbb{R}^M$; components $K$; learning rate $\eta$; ridge $\varepsilon>0$; iterations $T$
\Ensure Fitted weights $\hat{\pi}$ and components $\{(\hat{\mathbf{m}}_{k,M},\hat{\mathbf{K}}_{k,M})\}_{k=1}^K$

\State $\{\mathbf{m}_{k,M}\}_{k=1}^K \gets \textsc{KMeans}(\{\mathbf{x}_{i,M}\}_{i=1}^n, K)$ \Comment{Initialize means from data}
\State Assign $c_i\gets\arg\min_k \|\mathbf{x}_{i,M}-\mathbf{m}_{k,M}\|$ and let $S_k=\{i:c_i=k\}$ \Comment{Nearest-center partition}
\State $\mathbf{K}_{k,M}\gets \tfrac{1}{|S_k|-1}\!\sum_{i\in S_k}(\mathbf{x}_{i,M}-\mathbf{m}_{k,M})(\mathbf{x}_{i,M}-\mathbf{m}_{k,M})^\top + \varepsilon\mathbf{I}$ \Comment{Within-cluster covariance}
\State $\mathbf{L}_k\gets \mathrm{chol}(\mathbf{K}_{k,M})$ \Comment{Cholesky parametrization ensures $\mathbf{K}_{k,M}\succ 0$}
\State $\pi \gets \mathbf{1}/K$ \Comment{Uniform weight initialization}

\For{$t=1,\dots,T$}
    \State Compute $\mathbf{I}^{(M)} \in \mathbb{R}^{K \times K}$, $\mathbf{J}^{(M)} \in \mathbb{R}^K$ using \Cref{app:explicit_formulas_rm} \Comment{Cross-expectations}
    \State $\pi \gets \arg\min_{\pi\in\Delta^{K-1}} \pi^\top \mathbf{I}^{(M)}\pi - 2(\mathbf{J}^{(M)})^\top \pi$ \Comment{Exact QP update \eqref{eq:mmd_quadratic}}
    \State $\mathcal{L}_M \gets \pi^\top \mathbf{I}^{(M)}\pi - 2(\mathbf{J}^{(M)})^\top \pi$ \Comment{Objective at fixed $\pi$}
    \State $\mathbf{m}_{k,M} \gets \mathbf{m}_{k,M} - \eta \nabla_{\mathbf{m}_{k,M}} \mathcal{L}_M$ \Comment{Gradient step on means}
    \State $\mathbf{L}_k \gets \mathbf{L}_k - \eta \nabla_{\mathbf{L}_k} \mathcal{L}_M$ \Comment{Gradient step on Cholesky factors}
    \State $\mathbf{K}_{k,M} \gets \mathbf{L}_k \mathbf{L}_k^\top + \varepsilon \mathbf{I}$ \Comment{Reconstruct PSD covariances}
\EndFor

\State \Return $\hat{\pi} \gets \pi$ and $\{(\hat{\mathbf{m}}_{k,M}, \hat{\mathbf{K}}_{k,M})\} \gets \{(\mathbf{m}_{k,M}, \mathbf{K}_{k,M})\}$
\end{algorithmic}
\end{algorithm}

\Cref{alg:mmd_fitting} alternates between the analytical weight update~\eqref{eq:mmd_quadratic} and a gradient step on the component parameters $(\mathbf{m}_{k,M}, \mathbf{L}_k)$, with covariances parameterized through their Cholesky factors to enforce positive definiteness. The exact gradients are obtained from the differentiable closed forms of Propositions \ref{prop:mmd_closed_forms} and \ref{prop:mmd_closed_forms_polynomial}.
A fully gradient-based variant---updating $\pi$ jointly with the components, either via projection onto the simplex or through logits $\ell\in\R^K$ with $\pi_k=\operatorname{softmax}(\ell)_k$---is also possible.

\section{Time-varying mixtures}\label{app:time-varying}

This appendix details the algorithmic implementation and theoretical results deferred from \Cref{sec:time_varying}. We outline the projected $\mathbb{R}^M$ fitting procedure, and subsequently establish three guarantees: the consistency of the integrated empirical objective, the consistency of its projected $\mathbb{R}^M$ approximation, and the density of time-varying Gaussian mixtures with shared components.

\begin{algorithm}[htbp]
\caption{Time-varying MMD-based mixture fitting}
\label{alg:temporal_fitting}
\begin{algorithmic}[1]
\Require Projected temporal data $\{\mathbf{x}_{i,M}^{(l)}\}_{i=1}^{n_l} \subset \mathbb{R}^M$ across time slices $t_1, \dots, t_L$; number of components $K$; learning rate $\eta$; epochs $E$; Neural ODE parameters $\Phi$.
\Ensure Fitted parameters $\hat{\Phi}$ and shared components $\{(\hat{\mathbf{m}}_k, \hat{\mathbf{K}}_k)\}_{k=1}^K$.

\State Initialize $\{(\mathbf{m}_k, \mathbf{K}_k)\}_{k=1}^K$ using the pooled data $\bigcup_{l=1}^L \{\mathbf{x}_{i,M}^{(l)}\}_{i=1}^{n_l}$.
\State Initialize $\Phi$.

\For{\textit{epoch} $= 1,\dots, E$}
    \State Compute $\pi(t_l) \gets \operatorname{softmax}(\mathbf{z}(t_l))$ for $l \in \{1,\dots,L\}$ via the ODE.
    \State Compute $\mathbf{I}^{(M)} \in \mathbb{R}^{K \times K}$ via \eqref{eq:proj.cross.exp}.
    
    \For{$l = 1,\dots, L$}
        \State Compute $\mathbf{J}_l^{(M)} \in \mathbb{R}^K$ where $[\mathbf{J}_l^{(M)}]_k \gets \frac{1}{n_l}\sum_{i=1}^{n_l} J_{i,k}^{(M)}$ via \eqref{eq:proj.cross.exp}.
        \State $\operatorname{MMD}_l^2 \gets \pi(t_l)^\top \mathbf{I}^{(M)}\,\pi(t_l) - 2\,(\mathbf{J}_l^{(M)})^\top \pi(t_l)$. \Comment{Unnormalized per-slice MMD$^2$}
    \EndFor
    
    \State $\mathcal{L} \gets \frac{1}{L}\sum_{l=1}^L \operatorname{MMD}_l^2$. \Comment{Approximate integrated loss \eqref{eq:time_varying_loss}}
    \State Update $(\Phi, \{\mathbf{m}_k, \mathbf{K}_k\}) \gets (\Phi, \{\mathbf{m}_k, \mathbf{K}_k\}) - \eta\,\nabla \mathcal{L}$.
\EndFor

\State \Return $\hat{\Phi} \gets \Phi$, \; $\{(\hat{\mathbf{m}}_k, \hat{\mathbf{K}}_k)\}_{k=1}^K \gets \{(\mathbf{m}_k, \mathbf{K}_k)\}_{k=1}^K$.
\end{algorithmic}
\end{algorithm}

\Cref{alg:temporal_fitting} implements the optimization of the time-varying objective \eqref{eq:time_varying_loss} in $\mathbb{R}^M$. The component parameters $\{(\mathbf{m}_k, \mathbf{K}_k)\}_{k=1}^K$ are shared across all time slices, while the temporal variation is entirely captured by the weights $\pi(t)$, whose logits are governed by the neural ODE introduced in \Cref{sec:time_varying}. Implementation details are summarized in \Cref{tab:figure_hyperparams}.

To evaluate the MMD efficiently within the algorithm, we rely on the finite-dimensional projections defined in \eqref{eq:proj.cross.exp}. We separate the evaluation into terms that can be computed globally versus those computed per time slice. The matrix $\mathbf{I}^{(M)}$ depends only on the shared components and is computed once per epoch. In contrast, the vectors $\mathbf{J}_l^{(M)}$ evaluate the interaction between the shared components and the data specific to time slice $t_l$, requiring per-slice computation.

Motivated by the within-cluster RKHS dispersion criterion underlying kernel $k$-groups \cite{francaKernelKGroupsHartigans2017}, we select $K$ by applying an elbow rule to the integrated MMD loss over candidate temporal mixtures.

\subsection*{Consistency guarantees}

In parallel with the static framework of \Cref{sec:theory}, we establish the theoretical foundations for the temporal mixture model. Unlike the static case, where sample consistency is standard, minimizing the integrated temporal objective requires controlling the time-varying empirical processes. We first establish this sample consistency, followed by the validity of the projected approximation and the expressiveness of the temporal model class.

\begin{prop}[Sample consistency of the integrated MMD]\label{prop:time_varying_mmd_infinite_data}
Let $\kappa$ be a continuous positive-definite kernel on the separable Hilbert space $\mathcal{X}$. Let $X_1,\dots,X_n$ be i.i.d.\ copies of a jointly measurable $\mathcal{X}$-valued stochastic process $X\colon [0,T]\to\mathcal{X}$ with marginal laws $P(t)$, and let $P_n(t)$ be the corresponding empirical measure. Let $Q(t)$ be a family of probability measures on $\mathcal{X}$ such that its mean embedding trajectory $t\mapsto \mu_{Q(t)}$ is strongly measurable and belongs to $L^2(0,T; \mathcal{H}_\kappa)$. If $\int_0^T \mathbb{E}_{X\sim P(t)}[\kappa(X,X)]\diff t < \infty$, we have
$$
\frac{1}{T}\int_0^T \operatorname{MMD}_\kappa^2\bigl(P_n(t),Q(t)\bigr)\diff t \xrightarrow{n\to\infty} \frac{1}{T}\int_0^T \operatorname{MMD}_\kappa^2\bigl(P(t),Q(t)\bigr)\diff t \qquad\text{a.s.}
$$
In particular, $\frac{1}{T}\int_0^T \operatorname{MMD}_\kappa^2\bigl(P_n(t),P(t)\bigr)\diff t \to 0$ almost surely.
\end{prop}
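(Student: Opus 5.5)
Write $\Phi(t) \coloneqq \kappa(X(t),\cdot) \in \mathcal{H}_\kappa$. The plan is to view the whole time-indexed family of mean embeddings as a single random element of the separable Hilbert space $L^2(0,T;\mathcal{H}_\kappa)$ and apply the strong law of large numbers there.

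\textbf{Step 1: the embedding is a square-integrable Bochner element.} Since $\kappa$ is continuous and $\mathcal{X}$ is separable, $\mathcal{H}_\kappa$ is separable. The feature map $x\mapsto\kappa(x,\cdot)$ is continuous, because $\|\kappa(x,\cdot)-\kappa(y,\cdot)\|^2=\kappa(x,x)-2\kappa(x,y)+\kappa(y,y)$. Joint measurability of $X$ then makes $(t,\omega)\mapsto \Phi(t)$ strongly measurable. Define $\xi_i \coloneqq \kappa(X_i(\cdot),\cdot)$. Then
\[
\mathbb{E}\|\xi_i\|^2_{L^2(0,T;\mathcal{H}_\kappa)}=\int_0^T\mathbb{E}[\kappa(X(t),X(t))]\diff t<\infty
\]
by Tonelli. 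Hence $\xi_1,\xi_2,\dots$ are i.i.d.\ random elements of $L^2(0,T;\mathcal{H}_\kappa)$ with finite second, and therefore first, moment. Their Bochner mean is $\bar\mu(t)=\mu_{P(t)}$ for a.e.\ $t$. To check this, pair with test functions $g\in L^2(0,T;\mathcal{H}_\kappa)$ and use Fubini, which is justified by the integrability above. In particular, $\mu_{P(t)}$ exists for a.e.\ $t$.

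\textbf{Step 2: strong law.} The empirical embedding trajectory $t\mapsto\mu_{P_n(t)}=\frac1n\sum_i\xi_i(t)$ is the sample mean of the $\xi_i$. Mourier's strong law of large numbers in separable Banach spaces, or more simply the Hilbert-space SLLN, gives
\[
\|\mu_{P_n(\cdot)}-\mu_{P(\cdot)}\|_{L^2(0,T;\mathcal{H}_\kappa)}\to0 \quad \text{a.s.}
\]
Since $\operatorname{MMD}^2_\kappa(P_n(t),P(t))=\|\mu_{P_n(t)}-\mu_{P(t)}\|^2_{\mathcal{H}_\kappa}$, this is exactly the final "in particular" claim.

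\textbf{Step 3: transfer to $Q$.} Write $a_n\coloneqq\mu_{P_n(\cdot)}-\mu_{Q(\cdot)}$ and $a\coloneqq\mu_{P(\cdot)}-\mu_{Q(\cdot)}$. Both lie in $L^2(0,T;\mathcal{H}_\kappa)$, using the hypothesis on $\mu_{Q(\cdot)}$. The integrated objective equals $\frac1T\|a_n\|^2_{L^2}$, and $\|a_n-a\|_{L^2}\to0$ a.s. by Step 2. Continuity of the norm then gives $\frac1T\|a_n\|^2\to\frac1T\|a\|^2$ a.s., which is the stated convergence.

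\textbf{Main obstacle.} The delicate point is Step 1, namely the measurability and identification issues. We must check that $(t,\omega)\mapsto\kappa(X(t,\omega),\cdot)$ defines a genuine $L^2(0,T;\mathcal{H}_\kappa)$-valued random variable, which needs Pettis' theorem together with separability. We must also check that its expectation coincides pointwise a.e.\ with $\mu_{P(t)}$. Once this is settled, the rest is the standard Hilbert-space SLLN and continuity of the norm.
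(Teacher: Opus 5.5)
Your proposal is correct and follows the same route as the paper. Both view the trajectories $\kappa(X_i(\cdot),\cdot)$ as i.i.d.\ square-integrable elements of $L^2(0,T;\mathcal{H}_\kappa)$, identify their Bochner mean with $t\mapsto\mu_{P(t)}$ via Fubini, apply the Hilbert-space strong law of large numbers, and conclude by continuity of the norm. The only differences are cosmetic: you derive the ``in particular'' claim directly from the SLLN and then transfer to $Q$, whereas the paper proves the $Q$ statement first and specializes to $Q=P$.
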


\begin{prop}[Projection consistency for temporal mixtures]\label{prop:time_varying_projection_consistency}
Let $Q_\theta(t) = \sum_{k=1}^K \pi_k(t)\nu_k$ be a time-dependent Gaussian mixture where $\pi(t) \in \Delta^{K-1}$ and $t\mapsto \pi_k(t)$ is measurable for each $k$. Let $P_{n,M}(t)$ and $Q_{\theta,M}(t)$ be the projected measures on $\mathbb{R}^M$ defined in \Cref{sec:numerics}, and let $\kappa_M$ denote the coordinate version of the projected kernel. Suppose $\kappa$ is continuous and satisfies the polynomial growth condition \eqref{eq:kernel_bound} for some $q \ge 0$. If the sample paths satisfy $\int_0^T \|X_i(t)\|_{\mathcal{X}}^{2q}\diff t < \infty$ almost surely for $i=1,\dots,n$, then
$$
\frac{1}{T}\int_0^T \operatorname{MMD}_{\kappa_M}^2\bigl(P_{n,M}(t), Q_{\theta,M}(t)\bigr)\diff t \xrightarrow{M\to\infty} \frac{1}{T}\int_0^T \operatorname{MMD}_\kappa^2\bigl(P_n(t), Q_\theta(t)\bigr)\diff t \qquad\text{a.s.}
$$
\end{prop}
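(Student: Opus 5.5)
The plan is to reduce the statement to the static result, Proposition~\ref{prop:projection_consistency}, applied at each fixed $t$, and then pass the limit $M\to\infty$ under the time integral by dominated convergence. For fixed $t$, write the per-slice objective via \eqref{eq:projected_mmd}:
\[
F_M(t)\coloneqq \frac{1}{n^2}\sum_{i,j}\kappa(\Pi_M X_i(t),\Pi_M X_j(t)) - \frac{2}{n}\sum_{i,k}\pi_k(t)J^{(M)}_{i,k}(t) + \sum_{k,s}\pi_k(t)\pi_s(t)I^{(M)}_{k,s}.
\]
Let $F(t)$ be the same expression without projections. Note that $\kappa_M$ in coordinates coincides with $\kappa(\Pi_M\cdot,\Pi_M\cdot)$, so the coordinate kernel and the projected kernel on $\mathcal X$ give identical values.

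\textbf{Pointwise convergence.} Fix $t$ with $X_i(t)$ finite; this holds for a.e.\ $t$ on the almost sure event where the path integrals are finite. Since $\Pi_M x\to x$ in $\mathcal X$ and $\kappa$ is continuous, the data--data term converges. For $J^{(M)}_{i,k}(t)$ and $I^{(M)}_{k,s}$, I would use $\|\Pi_M y\|\le\|y\|$ and the growth bound \eqref{eq:kernel_bound} to get the integrable majorant $C(1+\|X_i(t)\|^q)(1+\|y\|^q)$, resp.\ $C(1+\|y\|^q)(1+\|y'\|^q)$. These are integrable under $\nu_k$ and $\nu_k\otimes\nu_s$ because Gaussian measures have all moments (Fernique), which also supplies the moment hypothesis of Proposition~\ref{prop:projection_consistency}. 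Dominated convergence in $y$ then gives $F_M(t)\to F(t)$. The weights $\pi_k(t)\in[0,1]$ enter only as bounded coefficients, so this is exactly Proposition~\ref{prop:projection_consistency} applied at time $t$.

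\textbf{Uniform-in-$M$ domination in $t$.} Using the same bounds, I would show
\[
|F_M(t)|\le G(t)\coloneqq C\Bigl(\tfrac1n\sum_i(1+\|X_i(t)\|^q)\Bigr)^2+2C\,c_q\tfrac1n\sum_i(1+\|X_i(t)\|^q)+C c_q^2,
\]
where $c_q\coloneqq\max_k\mathbb E_{\nu_k}(1+\|y\|^q)<\infty$. Then $G\in L^1(0,T)$ almost surely, because $(1+a^q)^2\le 2(1+a^{2q})$ and the hypothesis gives $\int_0^T\|X_i(t)\|^{2q}\diff t<\infty$. The linear terms are integrable on the finite interval, since $a^q\le 1+a^{2q}$.

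\textbf{Measurability.} I would also check that $t\mapsto F_M(t)$ and $t\mapsto F(t)$ are measurable. This follows from joint measurability of the paths, continuity of $\kappa$, measurability of $\pi_k$, and Fubini applied to the $y$-integrals, which are continuous functions of $X_i(t)$. Dominated convergence in $t$ then yields the claim on the almost sure event.

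The main obstacle is this domination step, specifically making sure the cross term $\kappa(\Pi_M X_i(t),\Pi_M X_j(t))$ is controlled by the $2q$-moment condition uniformly in $M$. The contraction $\|\Pi_M x\|\le\|x\|$ handles this, together with measurability of $F$ in $t$, which must be checked but is routine.
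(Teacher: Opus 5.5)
Your proposal is correct and takes essentially the same route as the paper. You obtain pointwise-in-$t$ convergence of the data--data, data--component and component--component terms from continuity of $\kappa$, the contraction $\|\Pi_M x\|_{\mathcal X}\le\|x\|_{\mathcal X}$ and dominated convergence in $y$, then apply dominated convergence in $t$ with an $M$-independent majorant in $L^1(0,T)$ built from the $2q$-moment path condition; your single majorant $G$ (using $\sum_k\pi_k(t)=1$) and explicit measurability check only tidy up the paper's term-by-term domination.
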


Furthermore, for each fixed $t$, if $X(t) \sim Q_\theta(t)$, the projected responsibilities $\gamma_k^{(M)}(\Pi_M X(t), t)$ satisfy the martingale convergence property of \Cref{prop:projection_responsibilities} with weights $\pi(t)$.

\begin{prop}[Density of time-varying Gaussian mixtures]\label{prop:time_varying_density}
Fix $p\ge 1$ and let $P\colon [0,T]\to\mathcal{P}_p(\mathcal{X})$ be continuous with respect to the Wasserstein distance $W_p$. Then, for every $\varepsilon>0$, there exist $K\in\mathbb{N}$, shared Gaussian measures $\nu_1,\dots,\nu_K$ on $\mathcal{X}$  with nonzero covariance operator, and continuous functions $\pi_1,\dots,\pi_K\colon [0,T]\to[0,1]$ with $\sum_{k=1}^K \pi_k(t)=1$, such that
$$
\sup_{t\in[0,T]} W_p\left(P(t),\sum_{k=1}^K \pi_k(t)\nu_k\right) < \varepsilon.
$$
\end{prop}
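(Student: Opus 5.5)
The plan is to reduce the time-varying statement to the static density result of Proposition~\ref{prop:approximation}(i), using compactness of $[0,T]$ and a partition of unity. Since $t\mapsto P(t)$ is $W_p$-continuous on a compact interval, it is uniformly continuous. Fix $\varepsilon>0$ and choose $\delta>0$ with $W_p(P(t),P(t'))<\varepsilon/3$ whenever $|t-t'|<\delta$. Take a grid $0=t_0<t_1<\dots<t_L=T$ with mesh below $\delta$.

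\textbf{Static approximation at the nodes.} At each node $t_l$, apply Proposition~\ref{prop:approximation}(i) to get a finite Gaussian mixture $R_l=\sum_{j}\pi^{(l)}_j\nu^{(l)}_j$ with $W_p(P(t_l),R_l)<\varepsilon/3$ and nonzero covariances. If the construction there could produce degenerate (Dirac) components, I would replace each $\delta_m$ by $\mathcal N(m,\eta\mathcal K_0)$ for a fixed trace-class $\mathcal K_0\ne0$. This costs at most $W_p\le(\mathbb E\|\sqrt\eta Z\|^p)^{1/p}=O(\sqrt\eta)$, which can be absorbed into the $\varepsilon/3$ budget. The shared family $\nu_1,\dots,\nu_K$ is then the union over $l$ of all components $\nu^{(l)}_j$, so $K$ is finite.

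\textbf{Interpolation of weights.} Let $\phi_0,\dots,\phi_L$ be the piecewise-linear hat functions on the grid. These are continuous, nonnegative, sum to one, and $\phi_l(t)>0$ only if $|t-t_l|<\delta$. Define $\pi(t)$ so that $\sum_k\pi_k(t)\nu_k=\sum_l\phi_l(t)R_l$; concretely, the weight of $\nu^{(l)}_j$ is $\phi_l(t)\pi^{(l)}_j$. These weights are continuous, lie in $[0,1]$, and sum to one.

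\textbf{Error bound.} The key tool is joint convexity of $W_p^p$ under mixtures: gluing optimal couplings gives
\[
W_p^p\Big(\sum_l\phi_l\mu_l,\sum_l\phi_l\rho_l\Big)\le\sum_l\phi_lW_p^p(\mu_l,\rho_l).
\]
Writing $P(t)=\sum_l\phi_l(t)P(t)$, we obtain
\[
W_p^p\Big(P(t),\sum_l\phi_l(t)R_l\Big)\le\sum_l\phi_l(t)W_p^p(P(t),R_l).
\]
For each $l$ with $\phi_l(t)>0$, the triangle inequality gives $W_p(P(t),R_l)\le W_p(P(t),P(t_l))+W_p(P(t_l),R_l)<2\varepsilon/3$. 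Hence the supremum over $t$ is at most $2\varepsilon/3<\varepsilon$.

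The main point to verify carefully is the mixture convexity of $W_p^p$. It follows by taking optimal couplings $\gamma_l$ of $(P(t),R_l)$, which exist for measures in $\mathcal P_p$ on a Polish space, and noting that $\sum_l\phi_l\gamma_l$ couples the two mixtures. The other point to check is that the components from Proposition~\ref{prop:approximation}(i) are nondegenerate, which the smoothing step above handles.
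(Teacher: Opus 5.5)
Your proof is correct and follows essentially the same route as the paper: approximate finitely many marginals by static Gaussian mixtures via Proposition~\ref{prop:approximation}(i), pool all components into one shared dictionary, interpolate the weights with a continuous partition of unity, and bound the error by gluing optimal couplings. The only cosmetic difference is where the partition of unity lives: the paper builds it on the compact image $P([0,T])\subset(\mathcal{P}_p(\mathcal{X}),W_p)$ and composes with $P$, while you use uniform continuity and piecewise-linear hat functions directly on $[0,T]$; your $2\varepsilon/3$ margin also yields the strict inequality for the supremum with no extra argument.
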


The proofs of Propositions \ref{prop:time_varying_mmd_infinite_data}, \ref{prop:time_varying_projection_consistency} and \ref{prop:time_varying_density} are provided in \Cref{sec:proof_time_varying_mmd_infinite_data,sec:proof_time_varying_projection_consistency,sec:proof_time_varying_density}, respectively.

\section{Projected maximum likelihood estimation}\label{sec:likelihood}
Although this paper focuses on MMD-based fitting, we briefly describe the projected maximum-likelihood baseline used in \Cref{sec:experiments}; it is the only density-based competitor that is well defined in the Hilbert-space setting of \eqref{eq:mixture_model}.
As discussed in the introduction, likelihood-based fitting in infinite dimensions requires a common dominating measure.
Here we use the Karhunen--Lo\`eve projector associated with the Gaussian prior
\[
\rho \coloneqq \mu_0=\mathcal N(0,C)
\]
on $\mathcal{X}$.
Let $(\lambda_j,\phi_j)_{j\ge 1}$ be the eigenpairs of $C$ ordered so that $\lambda_1\ge\lambda_2\ge\cdots$, and fix $M$ with $\lambda_M>0$.
Define
\[
\mathcal{X}_M \coloneqq \mathrm{span}\{\phi_1,\dots,\phi_M\},
\qquad
\Pi_M x \coloneqq \sum_{j=1}^M \langle x,\phi_j\rangle_{\mathcal{X}}\,\phi_j,
\qquad
\Pi_M^\perp \coloneqq \Id-\Pi_M.
\]
Under the decomposition $\mathcal{X}=\mathcal{X}_M\oplus \mathcal{X}_M^\perp$, the prior factorizes as
\[
\rho=\rho_M\otimes \rho_M^\perp,
\qquad
\rho_M=\mathcal N(0,\Lambda_M),
\qquad
\Lambda_M\coloneqq \mathrm{diag}(\lambda_1,\dots,\lambda_M),
\]
where $\rho_M^\perp$ is the law of $\Pi_M^\perp X$ for $X\sim \rho$.
Following \cite{luSequentialMonteCarlo2026}, we construct each component by altering only the first $M$ coordinates.
Given projected parameters $\mathbf m_k\in\R^M$ and $\mathbf K_k\in\R^{M\times M}$ with $\mathbf K_k\succ 0$, let
\[
\nu_{k,M}\coloneqq \mathcal N(\mathbf m_k,\mathbf K_k)
\qquad\text{on }\R^M,
\]
and lift it to
\[
\nu_k\coloneqq \nu_{k,M}\otimes \rho_M^\perp
\qquad\text{on }\mathcal{X}.
\]
Equivalently, $\nu_k=\mathcal N(m_k,\mathcal{K}_k)$, where
\[
m_k=\sum_{j=1}^M (\mathbf m_k)_j\,\phi_j,
\]
and $\mathcal{K}_k$ agrees with $C$ on $\mathcal{X}_M^\perp$ while its restriction to $\mathcal{X}_M$ has matrix $\mathbf K_k$ in the basis $\{\phi_1,\dots,\phi_M\}$.
Since $\nu_{k,M}$ and $\rho_M$ are nondegenerate Gaussians on $\R^M$, they are equivalent, and therefore $\nu_k\sim \rho$.
Hence every mixture
\[
Q\coloneqq \sum_{k=1}^K \pi_k \nu_k,
\qquad
\pi_k>0,\qquad \sum_{k=1}^K \pi_k=1,
\]
satisfies $Q\ll\rho$; indeed $Q\sim\rho$ because its density is strictly positive $\rho$-a.s.

For $x\in\mathcal{X}$, write
\[
\mathbf x\coloneqq \bigl(\langle x,\phi_1\rangle_{\mathcal{X}},\dots,\langle x,\phi_M\rangle_{\mathcal{X}}\bigr)^\top
\]
and let
\[
\varphi_M(\mathbf x;\mathbf m,\mathbf K)
\coloneqq
(2\pi)^{-M/2}\det(\mathbf K)^{-1/2}
\exp\!\Big\{-\tfrac12(\mathbf x-\mathbf m)^\top \mathbf K^{-1}(\mathbf x-\mathbf m)\Big\}
\]
denote the usual Gaussian density on $\R^M$.
Then
\[
p_k(x)\coloneqq \frac{\diff \nu_k}{\diff \rho}(x)
=\frac{\diff \nu_{k,M}}{\diff \rho_M}(\mathbf x)
=\frac{\varphi_M(\mathbf x;\mathbf m_k,\mathbf K_k)}{\varphi_M(\mathbf x;0,\Lambda_M)}
\]
and therefore
\[
p_k(x)=
\det(\Lambda_M)^{1/2}\det(\mathbf K_k)^{-1/2}
\exp\!\Big\{
-\tfrac12(\mathbf x-\mathbf m_k)^\top \mathbf K_k^{-1}(\mathbf x-\mathbf m_k)
+\tfrac12\mathbf x^\top \Lambda_M^{-1}\mathbf x
\Big\}.
\]
Accordingly,
\[
q(x)\coloneqq \frac{\diff Q}{\diff \rho}(x)=\sum_{k=1}^K \pi_k\,p_k(x).
\]
Since the factor $\varphi_M(\mathbf x;0,\Lambda_M)^{-1}$ is common to every component, maximizing the $\rho$-log-likelihood is equivalent to maximizing the usual projected mixture log-likelihood
\begin{equation}\label{eq:loglik_projected}
\ell_M(\pi,m,\mathcal{K})
\coloneqq \sum_{i=1}^n \log\!\Big(\sum_{k=1}^K \pi_k\,\varphi_M(\mathbf{x}_i;\mathbf{m}_k,\mathbf{K}_k)\Big),
\end{equation}
because
\[
\sum_{i=1}^n \log q(X_i)
=
\ell_M(\pi,m,\mathcal{K})
-\sum_{i=1}^n \log \varphi_M(\mathbf{x}_i;0,\Lambda_M),
\]
and the second term is independent of the mixture parameters.

The EM algorithm~\cite{dempsterMaximumLikelihoodIncomplete1977} is therefore carried out in the projected coordinates $\mathbf x_i\in\R^M$.
In the E-step,
\[
r_{ik}
=
\frac{\pi_k\,p_k(X_i)}{\sum_{\ell=1}^K \pi_\ell\,p_\ell(X_i)}
=
\frac{\pi_k\,\varphi_M(\mathbf x_i;\mathbf m_k,\mathbf K_k)}
{\sum_{\ell=1}^K \pi_\ell\,\varphi_M(\mathbf x_i;\mathbf m_\ell,\mathbf K_\ell)}.
\]
For the unregularized full-covariance model, the M-step updates are
\[
\pi_k^{\mathrm{new}}=\frac1n\sum_{i=1}^n r_{ik},
\qquad
\mathbf m_k^{\mathrm{new}}
=
\frac{\sum_{i=1}^n r_{ik}\mathbf x_i}{\sum_{i=1}^n r_{ik}},
\]
and
\[
\mathbf K_k^{\mathrm{new}}
=
\frac{\sum_{i=1}^n r_{ik}(\mathbf x_i-\mathbf m_k^{\mathrm{new}})(\mathbf x_i-\mathbf m_k^{\mathrm{new}})^\top}
{\sum_{i=1}^n r_{ik}}.
\]
In practice, one stabilizes the covariance update by replacing $\mathbf K_k^{\mathrm{new}}$ with $\mathbf K_k^{\mathrm{new}}+\varepsilon I_M$ for some $\varepsilon>0$.
This is a regularized EM step rather than the exact M-step of the unpenalized likelihood, but it guarantees $\mathbf K_k\succ 0$ and therefore preserves equivalence with the reference measure after lifting.
In the fixed-covariance benchmark variant used in our experiments (see \Cref{sec:experiments}), the same E-step is used, while only the weights and means are updated and the covariance matrix is held fixed.
After each M-step, the updated parameters are lifted back to $\mathcal{X}$ by setting
\[
m_k^{\mathrm{new}}=\sum_{j=1}^M (\mathbf m_k^{\mathrm{new}})_j\,\phi_j
\]
and defining $\mathcal{K}_k^{\mathrm{new}}$ to agree with $C$ on $\mathcal{X}_M^\perp$ and to have matrix $\mathbf K_k^{\mathrm{new}}$ on $\mathcal{X}_M$.
Algorithm~\ref{alg:projected_mle} summarizes this projected EM routine, with optional ridge regularization controlled by $\varepsilon$.

\begin{algorithm}[t]
\caption{Projected EM for the Gaussian-mixture log-likelihood}
\label{alg:projected_mle}
\begin{algorithmic}[1]
\Require Projected data $\{\mathbf{x}_i\}_{i=1}^n \subset \R^M$; number of components $K$; ridge parameter $\varepsilon \ge 0$; EM iterations $T_{\mathrm{EM}}$
\Ensure Fitted parameters $\hat{\pi},\, \{(\hat{\mathbf{m}}_k,\hat{\mathbf{K}}_k)\}_{k=1}^K$

\State $\{\mathbf{m}_k\}_{k=1}^K \gets \textsc{KMeans}(\{\mathbf{x}_i\}_{i=1}^n, K)$ \Comment{Initialize means}
\State $\mathbf{K}_k \gets \mathbf{I}_M$, \; $\pi_k \gets 1/K$ \Comment{Initialize covariances and weights}

\For{$t = 1,\dots,T_{\mathrm{EM}}$}
    \For{$i = 1,\dots,n$, $k=1,\dots,K$}
        \State $r_{ik} \gets \dfrac{\pi_k\,\varphi_M(\mathbf{x}_i;\mathbf{m}_k,\mathbf{K}_k)}{\sum_{s=1}^K \pi_s\,\varphi_M(\mathbf{x}_i;\mathbf{m}_s,\mathbf{K}_s)}$ \Comment{E-step responsibilities}
    \EndFor
    \For{$k = 1,\dots,K$}
        \State $N_k \gets \sum_{i=1}^n r_{ik}$ \Comment{Effective cluster size}
        \State $\pi_k \gets N_k / n$ \Comment{M-step: weights}
        \State $\mathbf{m}_k \gets \tfrac{1}{N_k}\sum_{i=1}^n r_{ik}\mathbf{x}_i$ \Comment{M-step: means}
        \State $\mathbf{K}_k \gets \tfrac{1}{N_k}\sum_{i=1}^n r_{ik}(\mathbf{x}_i-\mathbf{m}_k)(\mathbf{x}_i-\mathbf{m}_k)^\top + \varepsilon\mathbf{I}_M$ \Comment{M-step: ridge-regularized covariances}
    \EndFor
\EndFor

\State \Return $\hat{\pi} \gets \pi$, \; $\{(\hat{\mathbf{m}}_k,\hat{\mathbf{K}}_k)\}_{k=1}^K \gets \{(\mathbf{m}_k,\mathbf{K}_k)\}_{k=1}^K$
\end{algorithmic}
\end{algorithm}

The next result shows that the projected likelihood ratios are consistent as $M\to\infty$.

\begin{prop}[Consistency of projected likelihood ratios]\label{prop:projection_likelihood}
For each $M$, set $\rho_M\coloneqq \rho\circ\Pi_M^{-1}$, $Q_M\coloneqq Q\circ\Pi_M^{-1}$, and define $q_M\coloneqq dQ_M/d\rho_M$.
Let $X$ be an $\mathcal{X}$-valued random variable with distribution $\rho$ and denote
$$X^{(M)}\coloneqq \Pi_M X.$$
Then $q_M(X^{(M)})=\mathbb{E}_{\rho}[q(X)\mid X^{(M)}]$ a.s. and $q_M(X^{(M)})\xrightarrow{M\to\infty} q(X)$ a.s. and in $L^1(\rho)$.
Consequently, for i.i.d.\ $X_1,\dots,X_n\sim Q$,
\[
\sum_{i=1}^n \log q_M(\Pi_M X_i)\longrightarrow \sum_{i=1}^n \log q(X_i)
\qquad Q^n\text{-a.s.}
\]
\end{prop}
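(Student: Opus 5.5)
The argument is a Doob martingale argument on the filtration generated by the projections, run in parallel with the proof of \Cref{prop:projection_responsibilities}. Set $\mathcal F_M\coloneqq\sigma(\Pi_M X)$ on the probability space $(\mathcal X,\mathcal B(\mathcal X),\rho)$.

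Since $\mathcal X_M\subset\mathcal X_{M+1}$ and $\Pi_M=\Pi_M\Pi_{M+1}$, these $\sigma$-algebras are nested. Because $\Pi_M x\to x$ for every $x$ (the basis is orthonormal and complete), $X$ is measurable with respect to $\mathcal F_\infty\coloneqq\sigma\bigl(\bigcup_M\mathcal F_M\bigr)$. Since $\mathcal X$ is separable, $\mathcal F_\infty=\mathcal B(\mathcal X)$.

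\textbf{Step 1: the identification $q_M(X^{(M)})=\mathbb E_\rho[q(X)\mid X^{(M)}]$.} Here $Q\ll\rho$ with $q=\diff Q/\diff\rho$, and this absolute continuity was established in the construction preceding the statement. Fix a Borel set $B\subset\mathcal X_M$. Then
\[
\int_{\{\Pi_M X\in B\}} q(X)\diff\rho = Q(\Pi_M^{-1}B)=Q_M(B)=\int_B q_M\diff\rho_M=\int_{\{\Pi_MX\in B\}} q_M(\Pi_M X)\diff\rho .
\]
The last equality is the change of variables under the pushforward $\rho_M=\rho\circ\Pi_M^{-1}$. This computation also shows $Q_M\ll\rho_M$, because $\rho_M(B)=0$ forces $Q_M(B)=0$, so $q_M$ is well defined. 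Since $q_M(\Pi_M X)$ is $\mathcal F_M$-measurable, the displayed identity is exactly the defining property of the conditional expectation.

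\textbf{Step 2: convergence.} By Step 1, $\bigl(q_M(X^{(M)})\bigr)_M$ is the closed martingale $\mathbb E[q\mid\mathcal F_M]$ with $q\in L^1(\rho)$. It is therefore uniformly integrable. Lévy's upward theorem then gives convergence to $\mathbb E[q\mid\mathcal F_\infty]=q(X)$, both $\rho$-a.s.\ and in $L^1(\rho)$. The only point needing care is $\mathcal F_\infty=\mathcal B(\mathcal X)$, which holds because $X=\lim_M\Pi_M X$ is a pointwise limit of $\mathcal F_\infty$-measurable maps.

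\textbf{Step 3: the log-likelihood statement.} The main subtlety is transferring "$\rho$-a.s." to "$Q$-a.s." and handling the logarithm. Any $\rho$-null set is $Q$-null because $Q\ll\rho$. Hence the exceptional set where $q_M(\Pi_M x)\not\to q(x)$ is $Q$-null, and so is the set $\{q=0\}$, since $Q(\{q=0\})=\int_{\{q=0\}}q\diff\rho=0$.

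Off these null sets we have $q(x)\in(0,\infty)$ and $q_M(\Pi_Mx)\to q(x)$. This implies $q_M(\Pi_M x)>0$ for all large $M$, and continuity of $\log$ on $(0,\infty)$ then gives $\log q_M(\Pi_M x)\to\log q(x)$.

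For i.i.d.\ $X_1,\dots,X_n\sim Q$, each $X_i$ avoids the exceptional set almost surely. The finite union of the $n$ exceptional events is $Q^n$-null, so summing over $i$ gives the $Q^n$-a.s.\ convergence. In the present construction $Q\sim\rho$, so one could argue directly under either measure; only $Q\ll\rho$ is needed. Note also that we do not need $L^1$ convergence of the logs, only pointwise convergence.
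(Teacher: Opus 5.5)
Your proposal is correct and follows the paper's proof essentially step for step. Both arguments identify $q_M(X^{(M)})$ with $\mathbb{E}_\rho[q(X)\mid\mathcal F_M]$ via the pushforward identity, apply Lévy's upward theorem with $\mathcal F_\infty=\mathcal B(\mathcal X)$, and transfer the $\rho$-a.s.\ convergence to $Q$ using $Q\ll\rho$. The only small difference is that you get positivity of $q_M(\Pi_M X_i)$ for large $M$ from the positive limit, whereas the paper uses $q_M>0$ $Q_M$-a.s.\ for every $M$; either suffices for the log-likelihood limit.
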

The proof is given in \Cref{sec:proof_projection_likelihood}.

\section{Extension to Banach spaces}\label{app:banach_kl}

This appendix outlines how our framework extends to real separable Banach spaces $(\mathcal X, \|\cdot\|_{\mathcal X})$. Let $\mathcal X^*$ be the continuous dual and $\langle x, f \rangle$ the dual pairing. A Borel probability measure $\nu$ on $\mathcal X$ is Gaussian if $\nu \circ f^{-1}$ is Gaussian for all $f \in \mathcal X^*$. If \(\nu\) is centered, its characteristic functional is
\[
\widehat{\nu}(f)
\coloneqq
\int_{\mathcal X} e^{i\langle x,f\rangle}\diff\nu( x)
=
\exp\!\Big(-\tfrac12\,C_\nu(f,f)\Big),
\qquad f\in\mathcal X^*,
\]
where the covariance form \(C_\nu:\mathcal X^*\times\mathcal X^*\to\mathbb R\) is given by
\[
C_\nu(f,g)
\coloneqq
\int_{\mathcal X}\langle x,f\rangle\,\langle x,g\rangle\diff\nu( x).
\]
By Fernique’s theorem, $\nu$ has a well-defined covariance operator $R_\nu: \mathcal X^* \to \mathcal X$ given by $R_\nu f = \int_{\mathcal X} \langle x, f \rangle x \diff \nu(x)$. The associated Cameron–Martin space $H(\nu)$ is the Hilbert completion of $R_\nu(\mathcal X^*)$ under the inner product $\langle R_\nu f, R_\nu g \rangle_{H(\nu)} = \langle R_\nu f, g \rangle$.

\paragraph{Karhunen--Lo\`eve decomposition.}
Assume that \(H(\nu)\) is infinite-dimensional.
Following Bay and Croix \cite{bayKarhunenLoeveDecomposition2019}, one constructs vectors \(x_n\in\mathcal X\), dual functionals \(x_n^*\in\mathcal X^*\), and normalized coordinates
\[
h_n\coloneqq \sqrt{\lambda_n}\,x_n \in H(\nu),
\qquad
h_n^*\coloneqq \lambda_n^{-1/2}x_n^* \in \mathcal X^*,
\]
from successive maximizers of the Rayleigh quotient associated with the covariance operators of the residual Gaussian measures; see \cite[Theorem~3.3]{bayKarhunenLoeveDecomposition2019}.
The resulting coordinates satisfy \(R_\nu h_n^*=h_n\), and the random variables \(\langle \cdot,h_n^*\rangle\) are independent \(\mathcal N(0,1)\) under \(\nu\).

\begin{cor}[Banach-space Karhunen--Lo\`eve decomposition {\cite[Corollary~3.8]{bayKarhunenLoeveDecomposition2019}}]\label{cor:kl_banach}
Let \(\nu\) be a centered Gaussian measure on a real separable Banach space \(\mathcal X\), assume \(H(\nu)\) is infinite-dimensional, and let \((\lambda_n,x_n,h_n,h_n^*)_{n\ge 0}\) be constructed as above.
Then
\[
x=\sum_{n=0}^\infty \langle x,h_n^*\rangle\,h_n
\]
converges in \(\mathcal X\) for \(\nu\)-a.e.\ \(x\in\mathcal X\), and the coordinate maps \(x\mapsto \langle x,h_n^*\rangle\) are independent \(\mathcal N(0,1)\) under \(\nu\).
Equivalently, if \((\xi_n)_{n\ge 0}\) are i.i.d.\ \(\mathcal N(0,1)\), then
\[
\sum_{n=0}^\infty \sqrt{\lambda_n}\,\xi_n\,x_n
\]
converges almost surely in \(\mathcal X\) and has law \(\nu\).
\end{cor}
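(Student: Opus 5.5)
The plan is to reduce to the Hilbert-space Karhunen--Lo\`eve expansion via a reproducing structure. Let $i\colon H(\nu)\hookrightarrow\mathcal X$ be the continuous inclusion and $R_\nu = i\,i^*$ (after identifying $H(\nu)$ with its dual). First, we check that the normalized vectors $h_n$ form an orthonormal system in $H(\nu)$ and that the dual functionals are biorthogonal, $\langle h_m, h_n^*\rangle=\delta_{mn}$. This follows from the Rayleigh-quotient construction: each $x_n^*$ maximizes $\langle R_{\nu_n} f,f\rangle$ over the unit ball of $\mathcal X^*$ for the residual measure $\nu_n$. Writing $R_\nu h_n^*=h_n$, we then get $\langle h_m,h_n\rangle_{H(\nu)}=\langle R_\nu h_m^*,h_n^*\rangle=C_\nu(h_m^*,h_n^*)$, which equals $\delta_{mn}$ by construction of the residual covariances. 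Hence the random variables $\xi_n(x)\coloneqq\langle x,h_n^*\rangle$ are jointly centered Gaussian under $\nu$ with identity covariance, and therefore independent $\mathcal N(0,1)$. This gives the second claim.

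Second, we show that $(h_n)$ is complete in $H(\nu)$. If $h\in H(\nu)$ is orthogonal to all $h_n$, then the residual operators $R_{\nu_n}$ annihilate nothing in the direction of $h$. Since the Rayleigh maxima $\lambda_n$ decrease to $0$ (the covariance is nuclear/compact by Fernique), the residual covariance norms tend to zero. A vector orthogonal to all $h_n$ would survive in every residual and force $\lambda_n\ge c>0$, a contradiction. We then invoke the standard fact (Bogachev, Thm.~3.5.1 / Itô--Nisio) that for any orthonormal basis $(e_n)$ of the Cameron--Martin space, the series $\sum_n \xi_n e_n$ with i.i.d.\ standard normal $\xi_n$ converges a.s.\ in $\mathcal X$ and has law $\nu$.

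Third, we identify the limit pointwise. Set $S_N(x)=\sum_{n\le N}\langle x,h_n^*\rangle h_n$. Then $S_N = \mathbb E_\nu[\,\mathrm{id}\mid\sigma(\xi_0,\dots,\xi_N)]$, because $x-S_N(x)$ is Gaussian and uncorrelated with $\xi_0,\dots,\xi_N$: for each $f\in\mathcal X^*$, $\mathbb E[\langle x - S_N x,f\rangle\xi_m]=\langle h_m,f\rangle-\langle h_m,f\rangle=0$. The process $S_N$ is thus an $\mathcal X$-valued martingale that is bounded in $L^2$ by Fernique. The vector-valued martingale convergence theorem then gives $S_N\to\mathbb E[\mathrm{id}\mid\sigma(\xi_n:n\ge0)]$ a.s. Completeness from the second step shows that this $\sigma$-algebra equals the full Borel $\sigma$-algebra modulo null sets, since it generates all $\langle\cdot,f\rangle$. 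The limit is therefore $x$ itself, which proves the expansion. The equivalent series form follows by the same law identification, using $h_n=\sqrt{\lambda_n}x_n$.

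The main obstacle is the completeness and $\sigma$-algebra step. One must show that every $f\in\mathcal X^*$ satisfies $\langle\cdot,f\rangle=\sum_n \langle h_n,f\rangle\xi_n$ in $L^2(\nu)$, that is, $C_\nu(f,f)=\sum_n\langle h_n,f\rangle^2$. I would try to obtain this from the decay $\lambda_n\to0$ of the residual Rayleigh maxima, as in \cite[Theorem~3.3]{bayKarhunenLoeveDecomposition2019}. Alternatively, one can simply cite that result, since the corollary is quoted from there.
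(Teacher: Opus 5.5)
Your outline is sound, and it takes a genuinely different route from the paper, which gives no argument and simply imports \cite[Corollary~3.8]{bayKarhunenLoeveDecomposition2019}. You reconstruct the reasoning behind that corollary in three steps. First, orthonormality of $(h_n)$ in $H(\nu)$ together with $R_\nu h_n^*=h_n$ gives $C_\nu(h_m^*,h_n^*)=\langle h_m,h_n\rangle_{H(\nu)}=\delta_{mn}$, hence i.i.d.\ $\mathcal N(0,1)$ coordinates. Second, decay of the $\lambda_n$ gives completeness of $(h_n)$. Third, your Step~3 is a valid self-contained proof of the pointwise Cameron--Martin expansion. (In Step~3, a.s.\ convergence comes from the martingale $S_N=\mathbb E[x\mid\sigma(\xi_0,\dots,\xi_N)]$ being closed, not from its $L^2$-boundedness; the latter would not suffice in a space such as $C[0,1]$, which lacks the Radon--Nikod\'ym property.) Your route isolates the single non-classical input, $\lambda_n\to0$; the citation is shorter but opaque.

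Two points need repair.

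\emph{The decay $\lambda_n\to0$.} ``Nuclear/compact by Fernique'' does not by itself give it. The $\lambda_n$ are not eigenvalues of one fixed self-adjoint operator, and for Wiener measure on $C[0,1]$ (where the construction yields L\'evy's Schauder tents) they are not even summable. The missing link is short. The unit-ball maximizer has $\|x_n^*\|_{\mathcal X^*}=1$ and $\langle x_n,x_n^*\rangle=1$, so $\|x_n\|_{\mathcal X}\ge1$ and $\lambda_n\le\|h_n\|_{\mathcal X}^2$. An orthonormal sequence tends weakly to $0$ in $H(\nu)$, and $H(\nu)\hookrightarrow\mathcal X$ is compact, so $\|h_n\|_{\mathcal X}\to0$. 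Parseval then follows at once from $C_\nu(f,f)-\sum_{k<n}\langle h_k,f\rangle^2=\langle R_{\nu_n}f,f\rangle\le\lambda_n\|f\|_{\mathcal X^*}^2$, so your ``surviving vector'' argument is unnecessary.

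\emph{The functionals in Step~1.} You take $x_n^*$ to be the unit-ball maximizer of the residual Rayleigh quotient and at the same time use $R_\nu h_n^*=h_n$. These are incompatible in general: the raw maximizer only gives $R_{\nu_n}h_n^*=h_n$. For Wiener measure, $h_0^*=\delta_1$ and the raw $h_1^*=2\delta_{1/2}$ have $C_\nu(h_0^*,h_1^*)=1$, so those coordinates are correlated. The functionals in the corollary must be the corrected ones, $\tilde h_n^*=h_n^*-\sum_{k<n}\langle h_k,h_n^*\rangle\,\tilde h_k^*$ (here $\tilde h_1^*=2\delta_{1/2}-\delta_1$). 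These satisfy $R_\nu\tilde h_n^*=h_n$ and leave $\lambda_n,x_n,h_n$ unchanged. Orthonormality of $(h_n)$ should then be proved by induction from $R_{\nu_n}=R_\nu-\sum_{k<n}\langle h_k,\cdot\rangle\,h_k$ and the reproducing identity $\langle h,R_\nu f\rangle_{H(\nu)}=\langle h,f\rangle$. Reading it off from $C_\nu(h_m^*,h_n^*)=\delta_{mn}$, as you do, is circular. The paper's own summary of the construction glosses over the same point.
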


In the Hilbert-space setting of the main text, we identify \(\mathcal X\simeq \mathcal X^*\) via the Riesz representation theorem.
Then \(R_\nu\) becomes the usual covariance operator, and the above construction reduces to the classical spectral Karhunen--Lo\`eve expansion.

\paragraph{Applicability.} The measure-theoretic components of our framework---including Gaussian mixtures, Radon--Nikodym responsibilities, and MMD weak density arguments---extend naturally to separable Banach spaces. While orthogonal projections are unavailable in this broader setting, one can substitute finite-rank continuous projections $P_M:\mathcal X\to\mathcal X$, such as the Bay--Croix projections from \Cref{cor:kl_banach}. However, we do not pursue this extension further, as the closed-form MMD expressions and projection algorithms developed in the main text rely intrinsically on the inner product structure of Hilbert spaces.

\section{Synthetic experiments}\label{app:synthetic_experiments}

We compare against four baseline families: (i)~metric-space methods \cite{schubertFastEagerMedoids2021,murtaghAlgorithmsHierarchicalClustering2012,esterDensitybasedAlgorithmDiscovering1996,campelloHierarchicalDensityEstimates2015,gonzalezClusteringMinimizeMaximum1985}; (ii)~FDA-specific methods~\cite{ramos-carrenoScikitfdaPythonPackage2024}; (iii)~classic Euclidean methods restricted to \(\R^d\); and (iv)~a projected maximum-likelihood baseline inspired by \cite{luSequentialMonteCarlo2026} (see \Cref{sec:likelihood}).
We report Adjusted Rand Index (ARI), train with Adam (\(\mathrm{lr}=0.1\)) and \(k\)-means initialization, and run all scripts on CPU in minutes.

\emph{Model class.} Across all domains, we fit the finite-dimensional mixture of \Cref{sec:numerics} after fixing an orthonormal basis \((e_r)_{r=1}^M\) of \(\mathcal{X}_M\). We use cosine bases for \(L^2/H^1\), Wigner-D matrix coefficients for \(\mathrm{SO}(3)\), Laplacian eigenvectors for graph signals, and the canonical basis for \(\R^d\).

We evaluate the MMD-based Gaussian mixture fitting method on several Hilbert-space representations of increasing complexity.
The first five experiments are controlled mixture-recovery tasks with known ground-truth parameters.
The sixth experiment tests system identification for a Gaussian process induced by a linear SDE.
The last two experiments apply the same fitting procedure to real molecular and skeleton-sequence data.
In all cases, observations are represented in a finite-dimensional basis as in \Cref{sec:numerics}, and the model is fitted by minimizing the squared MMD objective with a Gaussian radial kernel.
Final MMD values are reported as within-experiment diagnostics and are not meant to be compared directly across domains, since the coefficient dimension, scaling, and kernel bandwidth differ across settings.


\subsection{Gaussian random variables in \texorpdfstring{$\R^d$}{Rd}}

As a sanity check, we first consider the finite-dimensional case $\mathcal{X}=\mathbb{R}^d$ with the standard inner product, where the model reduces to a classical Gaussian mixture model \cite{bishopPatternRecognitionMachine2006}.

\paragraph{Experiment.}
We generate $n=1500$ samples from a $K=3$ Gaussian mixture on $\R$ with weights $\pi=(0.5,0.3,0.2)$, means $(-3.0,\,0.5,\,4.0)$, and standard deviations $(0.6,\,0.9,\,0.5)$.
We then fit a $K=3$ mixture by minimizing $\mathrm{MMD}^2$.
\Cref{fig:rd_results} shows that the learned density closely matches the empirical histogram and the ground-truth density.

\begin{figure}[tb]
    \centering
    \includegraphics[width=0.8\textwidth]{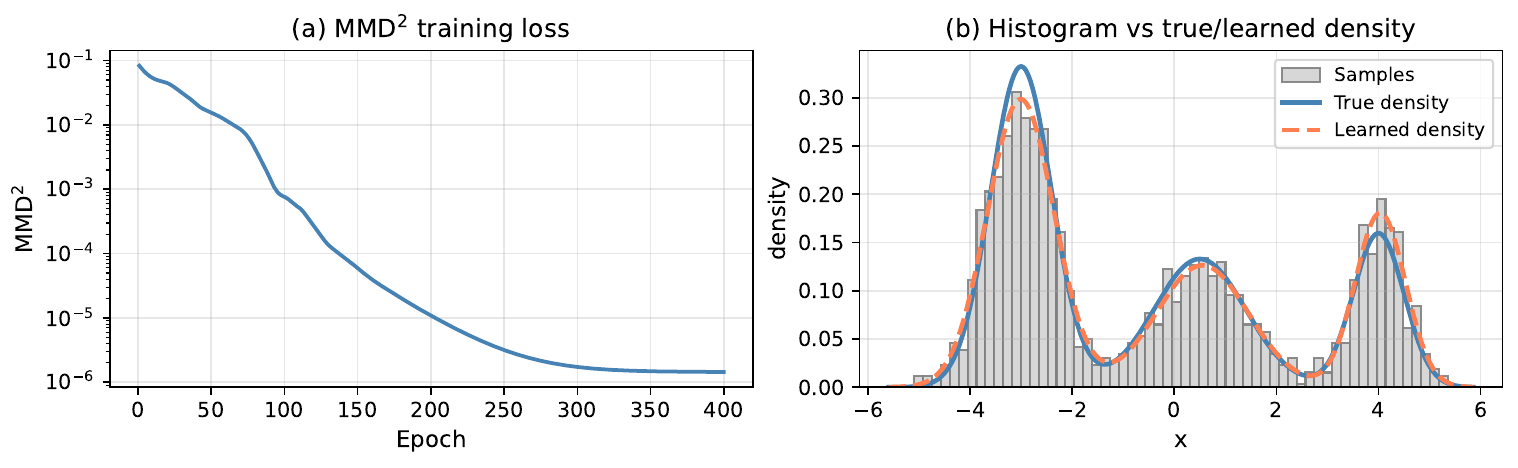}
    \caption{$\R^d$ Gaussian mixture recovery. \textbf{Left:} $\mathrm{MMD}^2$ training loss. \textbf{Right:} Empirical histogram of the samples overlaid with the true (blue) and learned (dashed coral) densities.}
    \label{fig:rd_results}
\end{figure}

\subsection{Functional data in \texorpdfstring{$L^2(0,1)$}{L2} }

We next consider functional data in $\mathcal{X}=L^2(0,1;\R^d)$.
Gaussian random elements in this space can be viewed as Gaussian processes with square-integrable sample paths, so the mixture model specializes to a mixture of Gaussian processes \cite{rasmussenGaussianProcessesMachine2008,trespMixturesGaussianProcesses2000,alvarez-lopezContinuousTemporalLearning2025}.

\paragraph{Experiment.}
We generate multivariate functional data in $\mathcal{X}=L^2(0,1;\R^2)$ from $K=5$ Gaussian components with weights $\pi=(0.30,0.25,0.20,0.15,0.10)$.
Each component has a distinct mean function and diagonal covariance profile in coefficient space.
We sample $n=500$ trajectories and project them onto a cosine basis with $R=15$ functions per spatial dimension, yielding coefficient vectors in $\R^{30}$.
\Cref{fig:l2_k5_results} shows raw trajectories, recovered mean functions, recovered weights, and the training curve.

\begin{figure}[tb]
    \centering
    \includegraphics[width=\textwidth]{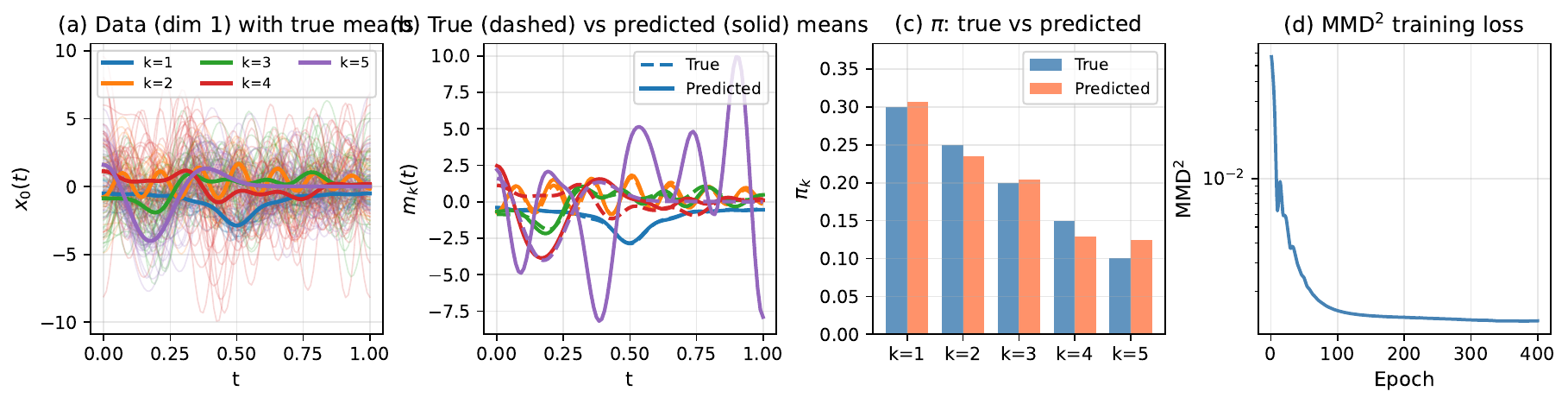}
    \caption{$L^2(0,1;\R^2)$ mixture with $K=5$. \textbf{(a)} Raw trajectories (dimension 1) overlaid with true means. \textbf{(b)} True (dashed) versus predicted (solid) mean functions. \textbf{(c)} True versus predicted mixture weights. \textbf{(d)} $\mathrm{MMD}^2$ training loss.}
    \label{fig:l2_k5_results}
\end{figure}

\subsection{Functional data in \texorpdfstring{$L^2([0,1]^2)$}{L2}}

The same construction extends to functions on two-dimensional domains by using tensor-product bases.
Given an orthonormal basis $(\varphi_r)_{r\ge 0}$ of $L^2(0,1)$, the functions
\[
\Phi_{m,n}(s,t)\coloneqq \varphi_m(s)\varphi_n(t)
\]
form an orthonormal basis of $L^2([0,1]^2)$.
This setting covers image-like signals, spatial random fields, and spatio-temporal processes.

\paragraph{Experiment.}
We consider $\mathcal{X}=L^2([0,1]^2;\R)$ with a tensor-product cosine basis, using $R_s=R_t=8$ and hence $M=64$ coefficients.
We generate $n=400$ samples from a $K=3$ Gaussian mixture with weights $\pi=(0.45,0.35,0.20)$, where each component has a distinct 2D mean surface.
\Cref{fig:l2_2d_results} compares the true and predicted mean surfaces and reports the training loss and recovered weights.

\begin{figure}[tb]
    \centering
    \includegraphics[width=\textwidth]{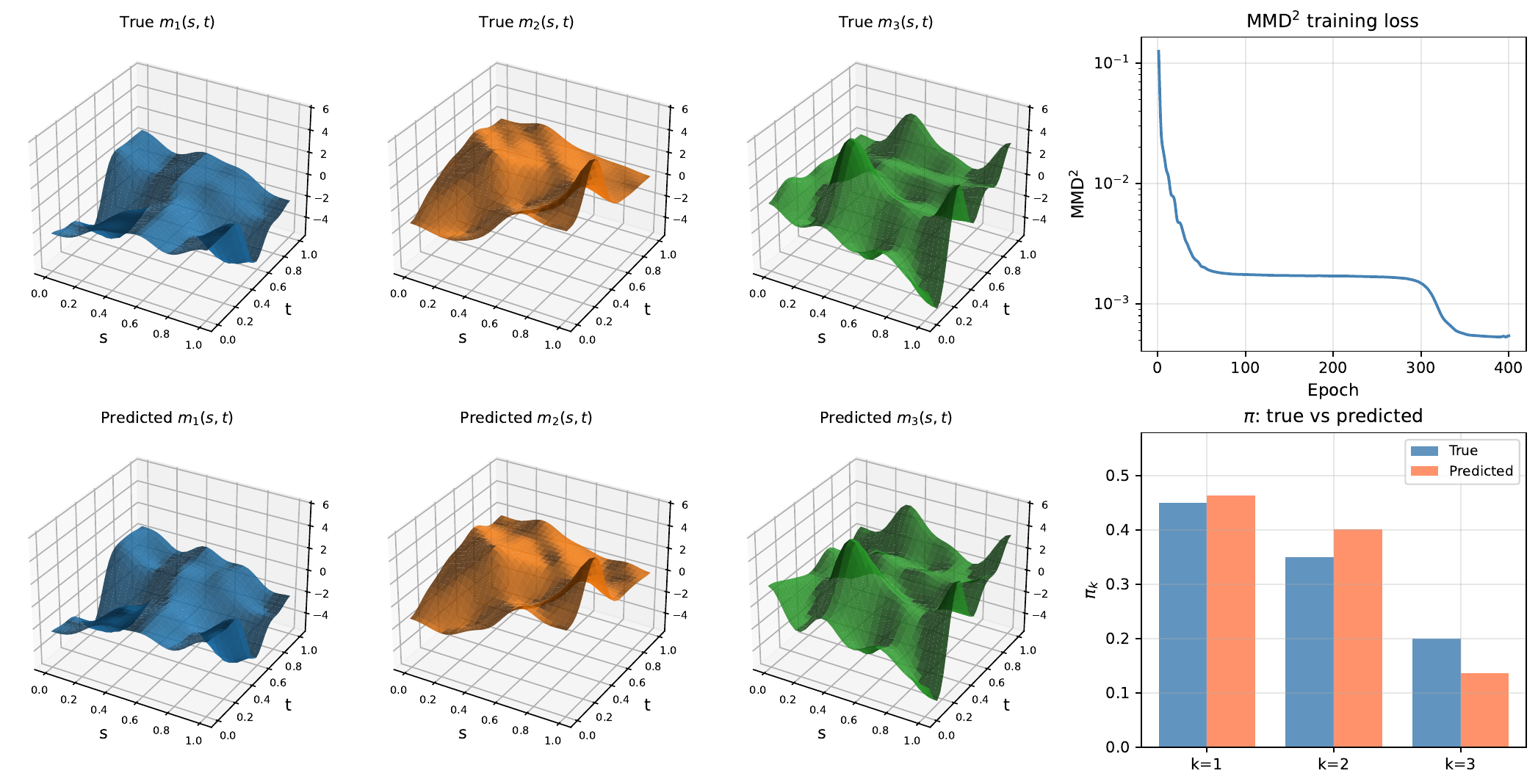}
    \caption{$L^2([0,1]^2)$ mixture with $K=3$. \textbf{Left columns:} True mean surfaces $m_k(s,t)$ (\textbf{top row}) and predicted surfaces (\textbf{bottom row}), aligned by color. \textbf{Right column:} $\mathrm{MMD}^2$ training loss (\textbf{top}) and true versus predicted mixture weights (\textbf{bottom}).}
    \label{fig:l2_2d_results}
\end{figure}

\subsection{\texorpdfstring{$L^2(\mathrm{SO}(3))$}{L2(SO(3))} rotation data}

We also consider non-Euclidean domains equipped with an $L^2$ structure.
For a compact group $G$ with Haar probability measure $\mu$, the space $L^2_\mu(G;\R^d)$ is a Hilbert space with inner product
\[
\langle f,g\rangle_{\mathcal{X}}
\coloneqq \int_G f(x)^\top g(x)\diff\mu(x).
\]
For $G=\mathrm{SO}(3)$, the Peter--Weyl theorem provides an orthonormal basis in terms of Wigner D-matrices.

\paragraph{Experiment.}
We consider rotation data on $\mathrm{SO}(3)$, relevant for orientation estimation in robotics and molecular modeling.
We generate $n=200$ rotations from $K=3$ concentrated components and encode them using real Wigner-basis features up to degree $L_{\max}=3$, giving $M=84$ coefficients.
\Cref{fig:so3_results} displays each rotation through the image of the reference direction $[1,0,0]$ on $S^2$, together with the true and learned component directions, the training loss, and the recovered weights.

\begin{figure}[tb]
    \centering
    \includegraphics[width=\textwidth]{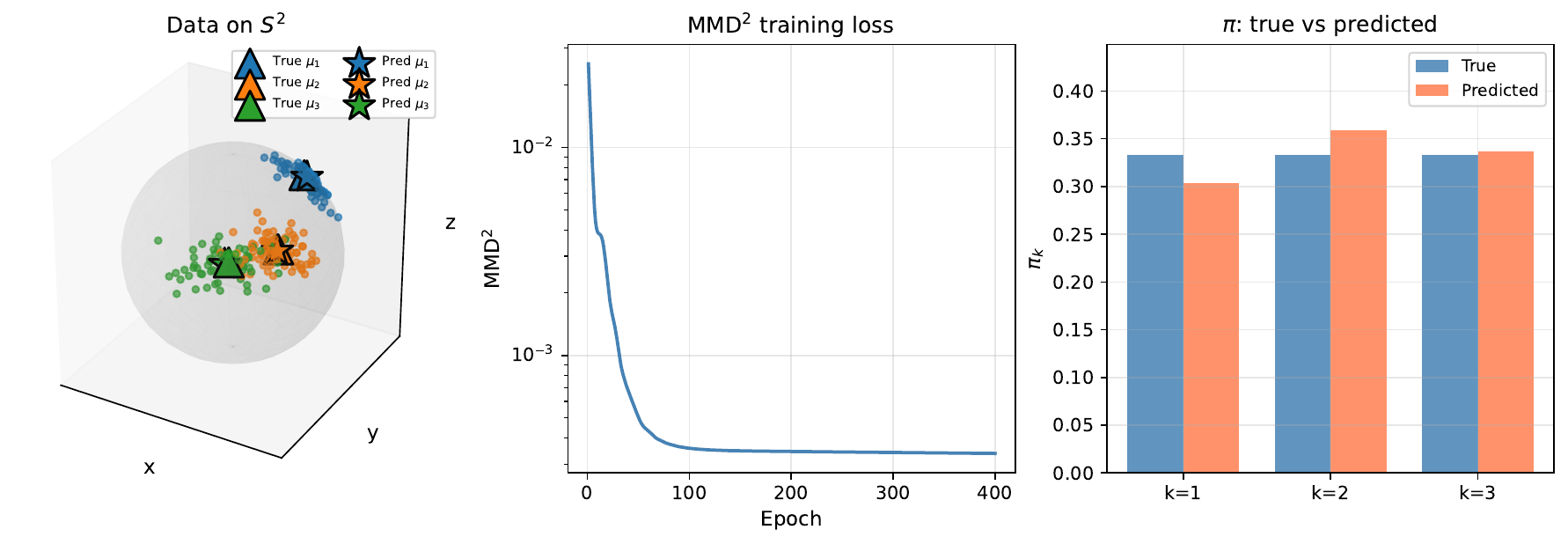}
    \caption{$L^2(\mathrm{SO}(3))$ mixture with $K=3$. \textbf{Left:} Data projected on $S^2$ with true and learned component directions. \textbf{Middle:} $\mathrm{MMD}^2$ training loss. \textbf{Right:} True versus predicted mixture weights.}
    \label{fig:so3_results}
\end{figure}

\subsection{Graph signals}

We next test the method on graph-structured data, where the Hilbert-space geometry is induced by the graph Laplacian.
Let $G=(V,E)$ be a finite weighted graph with Laplacian $L=D-W$.
For $\alpha>0$, we equip $\mathbb{R}^{|V|}$ with the inner product
\[
\langle f,g\rangle_{\mathcal{X}}
\coloneqq f^\top (L+\alpha I)g.
\]
This geometry gives larger norm to high-frequency graph components, while the finite-dimensional projection below restricts the representation to the leading Laplacian modes.

\paragraph{Experiment.}
We generate $n=150$ graph signals on a random Erd\H{o}s--R\'enyi graph with $|V|=30$ nodes and edge probability $0.25$.
The signals are projected onto the first $M=15$ Laplacian eigenvectors, using regularization $\alpha=0.1$.
The mixture has $K=3$ components with uniform weights.
\Cref{fig:graph_results} shows the true and predicted mean graph signals, together with the training loss and recovered weights.

\begin{figure}[tb]
    \centering
    \includegraphics[width=\textwidth]{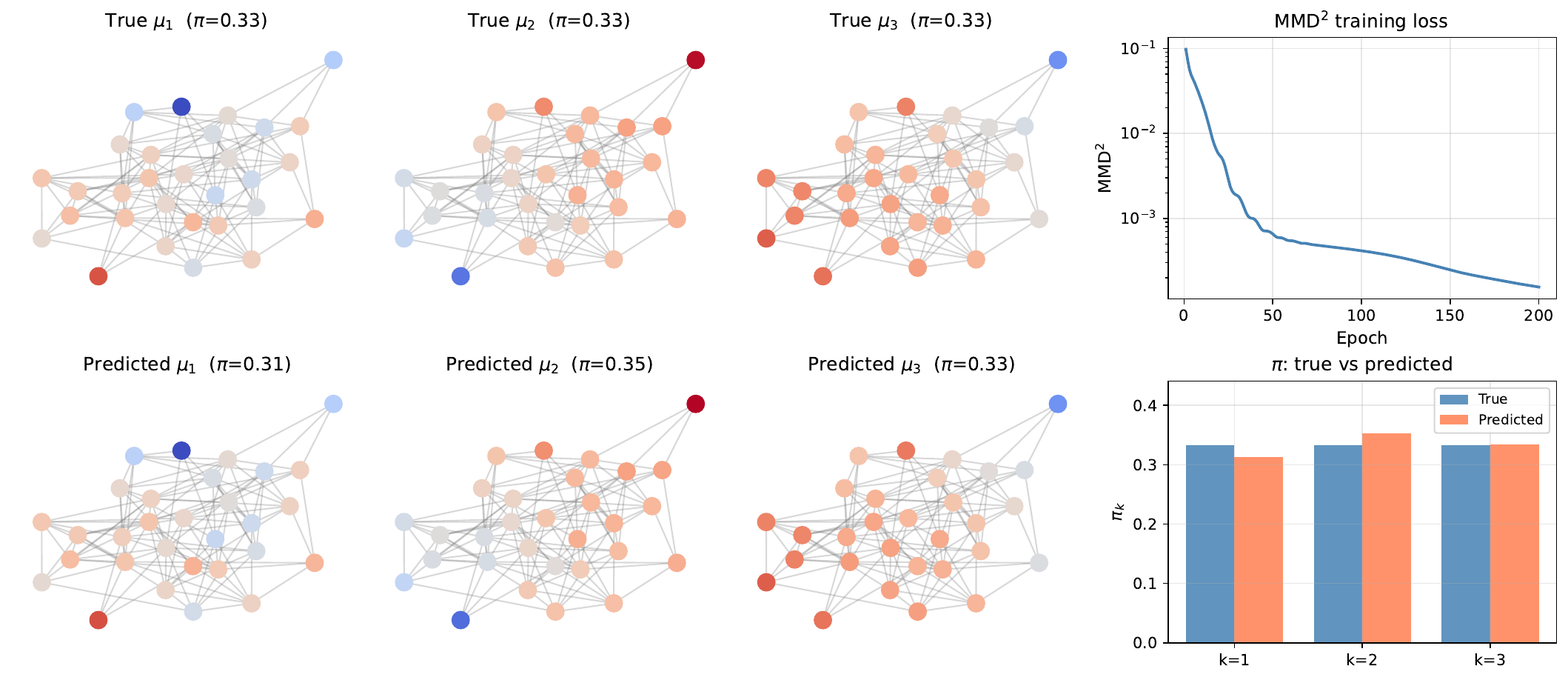}
    \caption{Graph-signal mixture with $K=3$. \textbf{Left block:} True (\textbf{top}) versus predicted (\textbf{bottom}) mean signals per component, plotted on the shared Erd\H{o}s--R\'enyi graph using a common colormap. \textbf{Right column:} $\mathrm{MMD}^2$ training loss (\textbf{top}) and true versus predicted weights (\textbf{bottom}).}
    \label{fig:graph_results}
\end{figure}

\subsection{Linear SDE: system identification}

As a parametric Gaussian-process example, we consider the linear stochastic differential equation
\[
\diff x(t) = \big(Ax(t)+Bu(t)\big)\diff t + G\diff W(t),\qquad t\in [a,b].
\]
For Gaussian initial condition and deterministic input $u$, the solution is a Gaussian process.
Its mean $m(t)$ and covariance $P(t)=\mathrm{Cov}[x(t),x(t)]$ satisfy
\[
\dot m = A m + B u,\qquad m(a)=m_0,
\qquad
\dot P = A P + P A^\top + G G^\top,\qquad P(a)=\Sigma_0.
\]
The resulting projected mean and covariance can therefore be computed analytically in any chosen $L^2$ basis.

\paragraph{Experiment.}
We treat $(A,B,G)$ as learnable parameters and identify them from $n_s=20$ sampled trajectories of a $4$-dimensional linear SDE over $[0,5]$, driven by the input $u(t)=(\sin t,\cos 2t)$.
The trajectories are projected onto an $L^2$ cosine basis with $R=7$ functions per state dimension, giving $M=28$ coefficients.
We match the empirical coefficient distribution to the single Gaussian induced
by the linear SDE. \Cref{fig:lti_results} shows the sample paths, training loss, recovered mean function, and diagonal of the projected covariance.

\begin{figure}[tb]
    \centering
    \includegraphics[width=\textwidth]{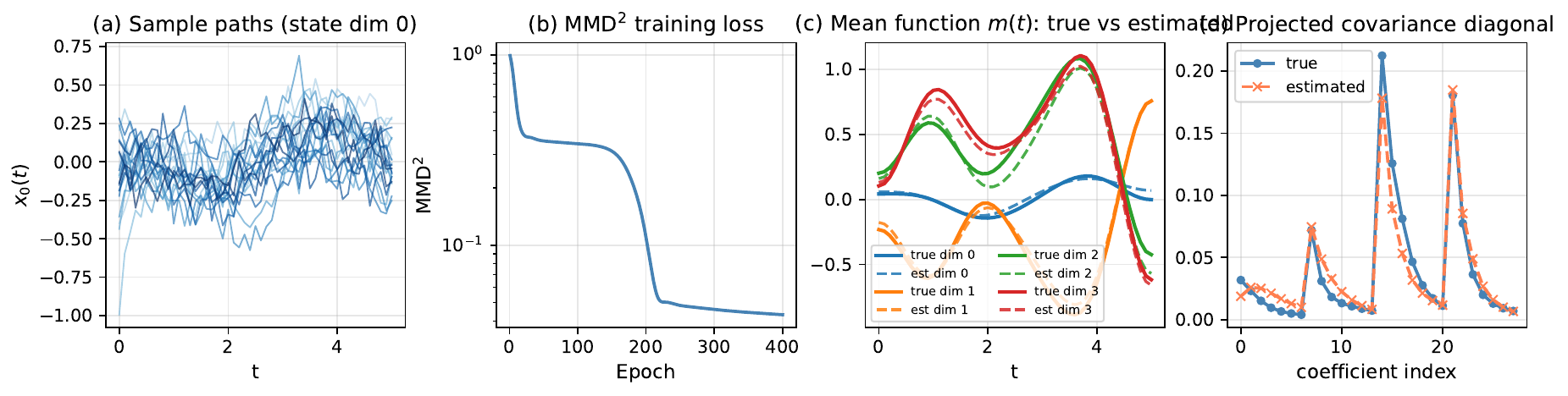}
    \caption{Linear SDE system identification via MMD. \textbf{(a)} Sample paths of the first state dimension. \textbf{(b)} $\mathrm{MMD}^2$ training loss. \textbf{(c)} True versus estimated mean function $m(t)$ for every state dimension. \textbf{(d)} Diagonal of the projected covariance matrix.}
    \label{fig:lti_results}
\end{figure}

\subsection{3D molecular data (QM9)}

We also test the method on real molecular structures using permutation-invariant graph fingerprints derived from 3D atomic coordinates.

\paragraph{Experiment.}
We sample $n=500$ molecules from QM9, build WL-hash fingerprints of dimension $128$ from distance-threshold molecular graphs, and project them onto the first $R=64$ DCT coefficients.
We fit a $K=5$ Gaussian mixture with diagonal covariance.
\Cref{fig:atomic_representatives} shows the three molecules closest to each learned component mean in coefficient space.
The HOMO--LUMO gap and assignment responsibility are reported for each representative molecule.

\begin{figure}[tb]
    \centering
    \includegraphics[width=\textwidth]{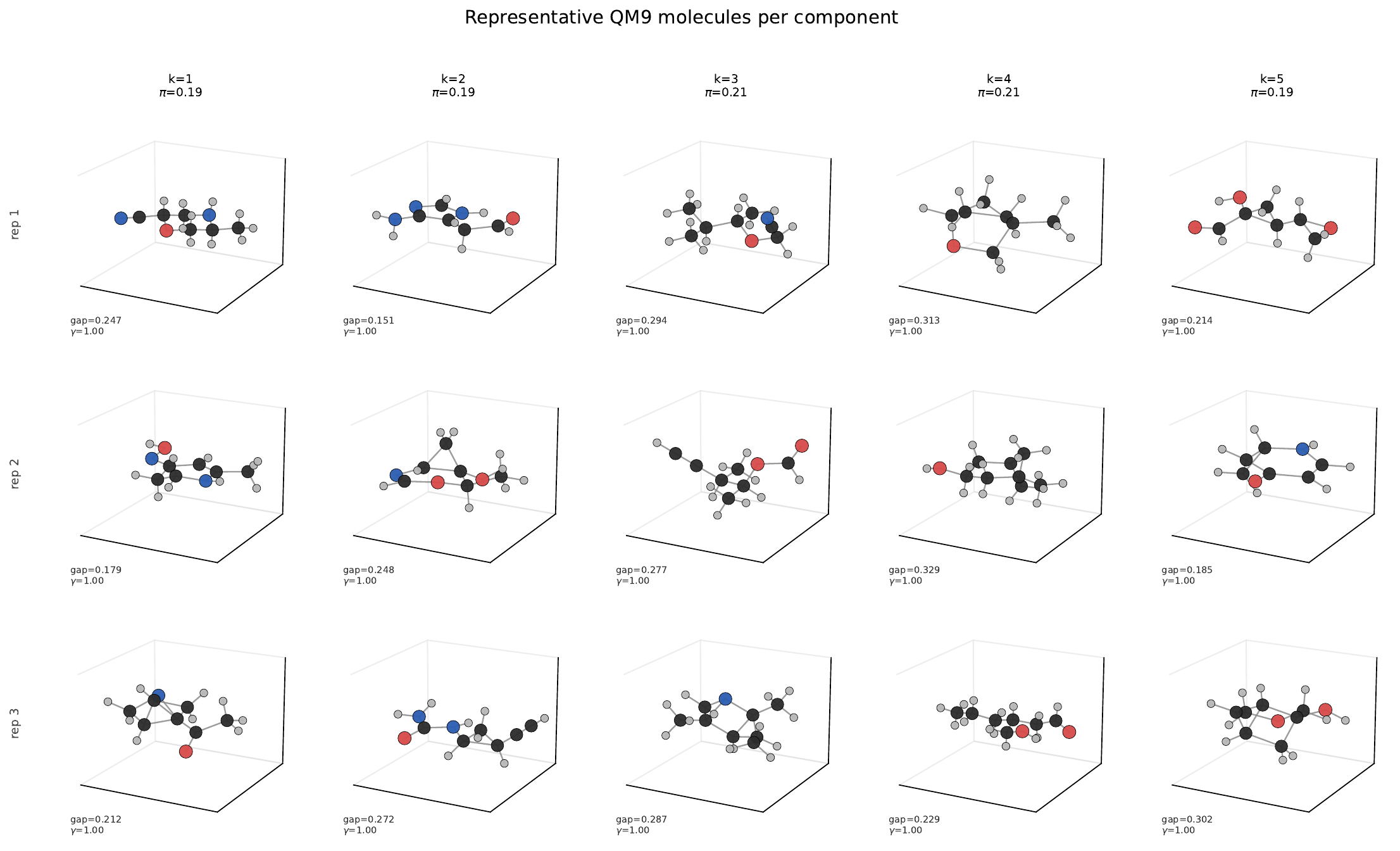}
    \caption{Representative QM9 molecules per component. Columns correspond to learned components $k=1,\ldots,K$, and rows display the three closest molecules to each component mean in coefficient space. Each panel reports the HOMO-LUMO gap and assignment responsibility $\gamma_{i,k}$.}
    \label{fig:atomic_representatives}
\end{figure}

\subsection{NTU RGB+D skeleton sequences}

Finally, we apply the method to human motion data, treating skeleton sequences as $\R^{75}$-valued paths corresponding to $25$ joints with $3$ coordinates each.

\paragraph{Experiment.}
We load $n=150$ action sequences from the NTU RGB+D dataset \cite{shahroudyNTURGB+DLarge2016}, resample each sequence to $64$ frames, subtract the mean trajectory, and project onto an $L^2$ cosine basis with $R=10$.
This gives coefficient vectors of dimension $M=750$.
We fit a $K=5$ Gaussian mixture with diagonal covariance by minimizing $\mathrm{MMD}^2$.
\Cref{fig:ntu_representatives} shows the most representative skeleton sequences per component, selected as those closest to the learned component mean in coefficient space.

\begin{figure}[tb]
    \centering
    \includegraphics[width=0.9\textwidth]{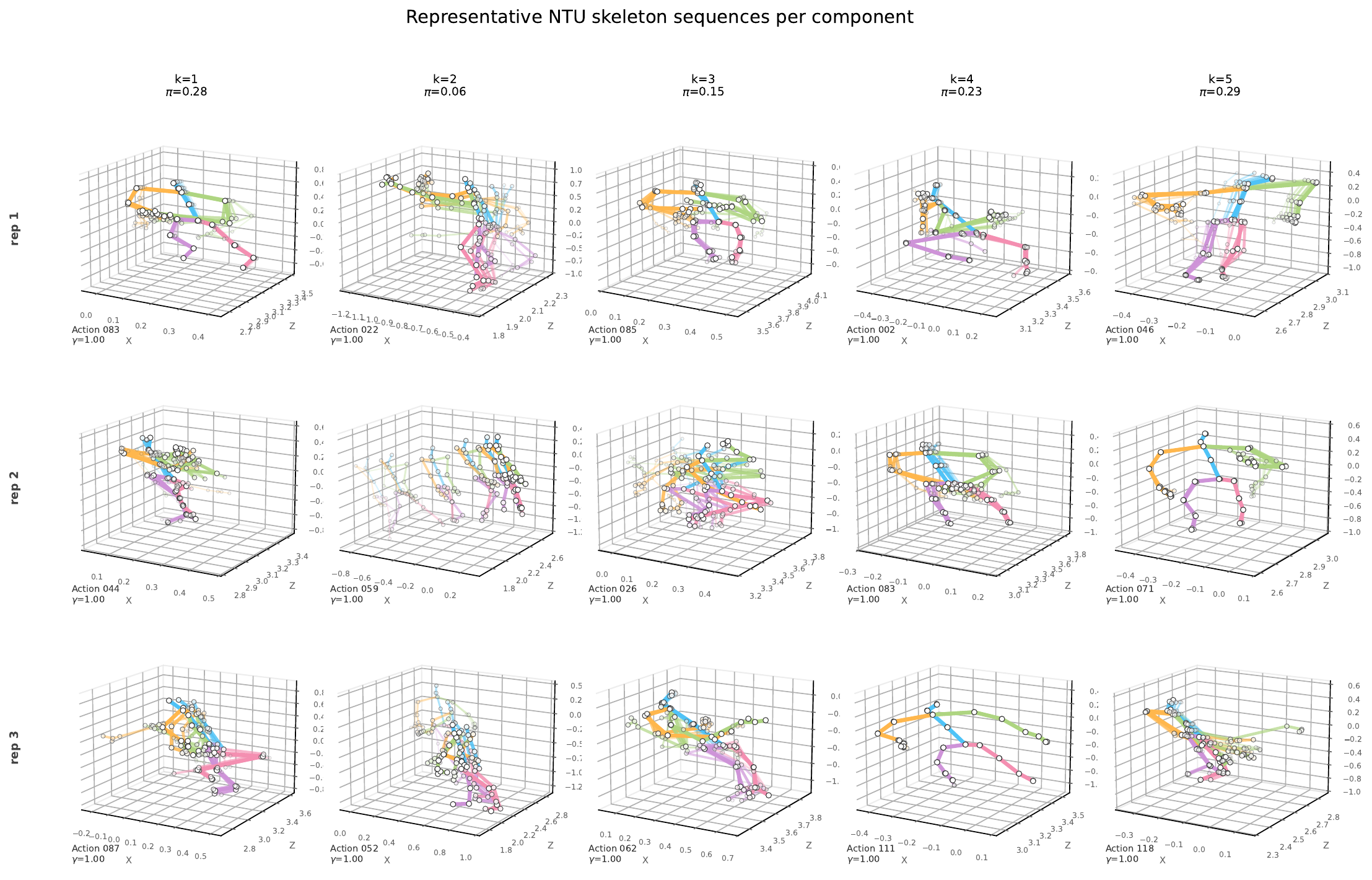}
    \caption{Most representative NTU skeleton sequences per component. Columns correspond to learned components $k=1,\ldots,K$, and rows display the three closest sequences to each component mean. Each cell shows $6$ evenly-spaced snapshots overlaid with increasing opacity, along with the action label and the assignment responsibility $\gamma_{i,k}$.}
    \label{fig:ntu_representatives}
\end{figure}


\subsection{Summary}

\Cref{tab:figure_hyperparams} summarizes the experimental settings and final $\mathrm{MMD}^2$ values.
The controlled experiments show that the method recovers the main mixture structure across Euclidean, functional, group-valued, and graph-based representations.
The real-data experiments illustrate that the same coefficient-space fitting procedure also yields coherent representative components on molecular and skeleton-sequence data.

\section{Experimental details and compute resources}\label{app:experimental_details}

We record the implementation details used for the experiments in \Cref{sec:experiments,app:synthetic_experiments}.  The complete codebase, along with instructions to reproduce the synthetic and real-world experiments, can be found in the repository \url{https://github.com/dani2442/rkhs-mixture-models/}.

Unless stated otherwise, all models are trained on projected coefficient vectors in \(\mathbb{R}^M\), use \texttt{torch.float64}, use fixed random seeds, run on CPU, and initialize mixture means with \(k\)-means++. 

\subsection{Hilbert spaces and bases}\label{app:bases}

Each experiment fits the projected mixture model of \Cref{sec:numerics} after fixing a separable Hilbert space \(\mathcal{X}\) and an orthonormal basis \((e_r)_{r\ge 1}\), which determines the truncated subspace \(\mathcal{X}_M=\mathrm{span}\{e_1,\dots,e_M\}\). We define here the Hilbert spaces and bases used in \Cref{tab:benchmark_hyperparams,tab:figure_hyperparams}, so that the table entries refer to this section instead of repeating the definitions.

\paragraph{(B1) Euclidean canonical basis.} For \(\mathcal{X}=\R^d\) we use the canonical basis \((e_r)_{r=1}^d\), so \(M=d\).

\paragraph{(B2) \(L^2\) cosine basis.} For \(\mathcal{X}=L^2(0,1;\R)\) with \(\langle f,g\rangle=\int_0^1 f(t)g(t)\diff t\), we use the orthonormal cosine basis
\[
e_0(t)=1,\qquad e_r(t)=\sqrt{2}\cos(\pi r t),\quad r\ge 1,
\]
truncated at \(r=R-1\), so \(M=R\). For vector-valued data \(\mathcal{X}=L^2(0,1;\R^d)\), we tensor with the canonical basis of \(\R^d\), so \(M=Rd\).

\paragraph{(B3) Tensor-product cosine basis on \([0,1]^2\).} For \(\mathcal{X}=L^2([0,1]^2;\R)\) we use \(\Phi_{m,n}(s,t)=e_m(s)\,e_n(t)\) with \(0\le m<R_s\), \(0\le n<R_t\), so \(M=R_sR_t\).

\paragraph{(B4) \(H^1\) cosine basis.} For \(\mathcal{X}=H^1(0,1;\R)\) with \(\langle f,g\rangle=\int_0^1 fg + \int_0^1 f'g'\), we use the \(H^1\)-rescaled cosines
\[
\tilde e_r(t)=\frac{e_r(t)}{\sqrt{1+\pi^2 r^2}},\qquad r=0,\dots,R_s-1,
\]
so \(M=R_s\); the \(\tilde e_r\) are orthonormal in \(H^1\).

\paragraph{(B5) Real Wigner--D basis on \(\mathrm{SO}(3)\).} For \(\mathcal{X}=L^2(\mathrm{SO}(3);\R)\) under the Haar inner product, the Peter--Weyl theorem yields an orthonormal basis given by the (real) Wigner D-matrix entries,
\[
e_{\ell,m,n}(R)=\sqrt{2\ell+1}\,D^{\ell}_{m,n}(R),\qquad \ell\ge 0,\ -\ell\le m,n\le \ell.
\]
Truncating at \(\ell\le L_{\max}\) gives \(M=\sum_{\ell=0}^{L_{\max}}(2\ell+1)^2\); for \(L_{\max}=3\), \(M=84\).

\paragraph{(B6) Graph Laplacian eigenbasis.} For a finite weighted graph \(G=(V,E)\) with Laplacian \(L=D-W\) and regularization \(\alpha>0\), we equip \(\mathcal{X}=\R^{|V|}\) with \(\langle f,g\rangle=f^\top(L+\alpha I)g\). Diagonalizing \(L=U\Lambda U^\top\) with eigenpairs \((\lambda_j,u_j)\), the rescaled eigenvectors \(e_j=u_j/\sqrt{\lambda_j+\alpha}\) form an orthonormal basis of \(\mathcal{X}\). We keep the \(M\) eigenvectors associated with the smallest eigenvalues.

\paragraph{(B7) Frobenius basis on \(\mathrm{Sym}(d)\).} For \(\mathcal{X}=\mathrm{Sym}(d)\subset\R^{d\times d}\) with \(\langle A,B\rangle=\mathrm{tr}(A^\top B)\), we use the canonical symmetric basis \(\{E_{ii}\}\cup\{(E_{ij}+E_{ji})/\sqrt{2}\}_{i<j}\) (\texttt{vech} embedding), so \(M=\binom{d+1}{2}\).

\paragraph{(B8) Weisfeiler--Lehman (WL) with kernel PCA / DCT.} For molecular graphs (MUTAG, QM9) we compute WL hash fingerprints in \(\R^D\), then project onto the leading \(M\) kernel-PCA directions (MUTAG) or DCT coefficients (QM9) to obtain coefficient vectors in \(\R^M\). The basis is fixed in coefficient space and inherited from the projection.

\subsection{Experimental setting}\label{app:experimental_setting}

The clustering experiments are unsupervised: all methods are fit without access to labels, and labels are used only after fitting to compute Adjusted Rand Index (ARI). There are no train/test splits because the reported task is clustering or mixture recovery on a fixed sample; variability is instead measured across independently generated datasets, benchmark datasets, random restarts, or subsampled runs, as specified in \Cref{tab:benchmark_hyperparams}.

\begin{table}[t]
    \centering
    \caption{Benchmark hyperparameters and evaluation protocol. Bases (B1)--(B8) are defined in \Cref{app:bases}. All MMD-GMM fits use Adam and \(k\)-means++ initialization; polynomial kernels use degree \(2\) and \(c=1\).}
    \label{tab:benchmark_hyperparams}
    \scriptsize
    \setlength{\tabcolsep}{3pt}
    \begin{tabular}{@{}p{0.10\textwidth}p{0.27\textwidth}p{0.60\textwidth}}
        \toprule
        \textbf{Experiment} & \textbf{Data and Runs} & \textbf{Representation and MMD-GMM training}\\
        \midrule
        \(\mathbb{R}^d\) toy data &
        Five labeled \texttt{sklearn} datasets~\cite{scikit-learn}, \(n=500\), \(d=2\), \(K\in\{2,3\}\); seed \(30\), MMD seed \(42\). &
        \textit{Representation:} canonical basis (B1) on standardized coordinates, \(M=2\). \textit{Training:}
        \(300\) epochs, learning rate \(0.05\), full covariance; Gaussian bandwidth by \(0.5\) times the median pairwise-distance heuristic. \\
        \addlinespace
        \(L^2(\R)\) functional data &
        Growth~\cite{ramsayFunctionalDataAnalysis2005}, Waveform~\cite{breimanClassificationRegressionTrees2017}, Phoneme~\cite{ferratyNonparametricFunctionalData2006}, Kneading~\cite{predaCategoricalFunctionalData2021}, and ECG200~\cite{olszewskiGeneralizedFeatureExtraction2001}; ARI averaged over the five datasets; seed \(42\). &
        \textit{Representation:} \(L^2\) cosine basis (B2), \(R=15\) (\(R=10\) for Waveform), coefficient normalization per dataset. \textit{Training:}
        \(400\) epochs, learning rate \(0.1\), diagonal covariance; Gaussian bandwidth by median heuristic. \\
        \addlinespace
        Glucodensity clustering &
        PEDAP CGM cohort~\cite{tauschmannClosedloopInsulinDelivery2018a,alvarez-lopezContinuousTemporalLearning2025}, \(98\) patients, \(K=2\); \(5\) MMD runs; seed \(42\). &
        \textit{Representation:} \(H^1\) cosine basis (B4) with \(R_s=8\) on overlapping \(4\)-day sliding windows (stride \(1\) day, \(16\) time bins); patient-level features for our method. \textit{Training:}
        \(600\) epochs, learning rate \(0.01\), diagonal covariance; cosine annealing to \(10^{-4}\); Gaussian bandwidth by median heuristic.\\
        \addlinespace
        \(\mathrm{SO}(3)\) rotations &
        \(10\) generated datasets, \(n=200\), \(K=3\), noise concentration \(8.0\); seeds \(42,\ldots,51\). &
        \textit{Representation:} real Wigner--D basis (B5) with \(L_{\max}=3\), \(M=84\). \textit{Training:}
        \(400\) epochs, learning rate \(0.1\), diagonal covariance; Gaussian bandwidth by median heuristic.  \\
        \addlinespace
        MUTAG molecular graphs &
        MUTAG~\cite{debnathStructureactivityRelationshipMutagenic1991}, \(n=188\), \(K=2\), \(5\) runs; seed \(42\). &
        \textit{Representation:} WL fingerprints with kernel PCA (B8), fingerprint dimension \(512\), radius \(3\), reduced to \(M=32\). \textit{Training:}
        \(500\) epochs, learning rate \(0.05\), \(5\) runs; Gaussian bandwidth selected from \(\{0.1,0.5,1,2,4\}\) plus the median heuristic by a \(60\)-epoch MMD proxy sweep.  \\
        \bottomrule
    \end{tabular}
\end{table}

The standalone mixture-recovery figures and case-study panels in \Cref{sec:experiments,sec:glucodensity,app:synthetic_experiments} that are not part of the unified ARI benchmark follow the configurations summarized in \Cref{tab:figure_hyperparams}.

\begin{table}[t]
    \centering
    \caption{Hyperparameters for the additional mixture-recovery and case-study figures. Bases (B1)--(B8) are defined in \Cref{app:bases}.}
    \label{tab:figure_hyperparams}
    \scriptsize
    \setlength{\tabcolsep}{4pt}
    \begin{tabular}{@{}p{0.20\textwidth}cccccp{0.36\textwidth}@{}}
        \toprule
        \textbf{Experiment} & \textbf{Basis} & \(\boldsymbol{n}\) & \(\boldsymbol{K}\) & \(\boldsymbol{M}\) & \textbf{Epochs} & \textbf{Kernel/optimizer details} \\
        \midrule
        \(1\)D Gaussian mixture & (B1) & \(1500\) & \(3\) & \(1\) & \(400\) & Gaussian \(\sigma=1.0\), Adam learning rate \(0.05\). \\
        \(L^2(0,1;\mathbb{R}^2)\) recovery & (B2) & \(500\) & \(5\) & \(30\) & \(400\) & Cosine basis with \(R=15\) per coordinate; Gaussian \(\sigma=1.2\), Adam learning rate \(0.1\). \\
        \(L^2([0,1]^2)\) recovery & (B3) & \(400\) & \(3\) & \(64\) & \(400\) & Tensor-product cosine \(R_s=R_t=8\); Gaussian \(\sigma=2.0\), Adam learning rate \(0.1\). \\
        \(L^2(\mathrm{SO}(3))\) recovery & (B5) & \(200\) & \(3\) & \(84\) & \(400\) & Real Wigner--D, \(L_{\max}=3\); Gaussian \(\sigma=10.0\), Adam learning rate \(0.05\). \\
        Graph-signal recovery & (B6) & \(150\) & \(3\) & \(15\) & \(200\) & Erd\H{o}s--R\'enyi graph, \(\alpha=0.1\); Gaussian \(\sigma=2.0\), Adam learning rate \(0.1\). \\
        Linear SDE identification & (B2) & \(20\) paths & \(1\) & \(28\) & \(400\) & Cosine basis with \(R=7\) per state dim.; Gaussian bandwidth \(2\,\mathrm{sd}(X)\), Adam learning rate \(10^{-2}\). \\
        QM9 molecules & (B8) & \(500\) & \(5\) & \(64\) & \(150\) & WL fingerprints + first \(64\) DCT coefficients; Gaussian \(\sigma=2.0\), Adam learning rate \(0.05\). \\
        NTU representatives & (B2) & \(150\) & \(5\) & \(750\) & \(250\) & Per-joint cosine basis \(R=10\) on \(\R^{75}\)-valued paths; Gaussian \(\sigma=2.0\), Adam learning rate \(0.05\). \\
        CGM temporal \(H^1\) mixture & (B4) & \(98\) patients & \(2\) & \(8\) & \(400\) & Overlapping \(4\)-day sliding windows with stride \(1\) day, \(16\) time bins, \(H^1\) cosine basis with \(R_s=8\); 2-layer neural ODE weights, hidden width \(64\), RK4, Gaussian \(\sigma=0.8\), Adam learning rate \(0.01\). \\
        CGM correlation mixture & (B7) & \(21{,}889\) windows & \(3\) & \(300\) & \(200\) & Overlapping \(W=14\)-day sliding windows with stride \(1\) day; each window gives a \(24\times 24\) inter-hour correlation matrix with linear shrinkage toward the identity (intensity \(\alpha=0.1\)), \texttt{vech} embedding into \(\mathrm{Sym}(24)\); 2-layer neural ODE logits, hidden width \(64\), RK4, Adam learning rate \(10^{-2}\). \\
        CGM patient graph mixture & (B6) & \(76\) patients & \(3\) & \(304\) & \(200\) & Overlapping \(21\)-day sliding windows with stride \(1\) day, \(10\) time bins; pairwise \(H^1\) similarities use \(R_s=16\) cosine modes; graph Laplacian basis rank \(R=4\); 2-layer neural ODE logits, hidden width \(64\), RK4, Adam learning rate \(10^{-2}\). \\
        \bottomrule
    \end{tabular}
\end{table}

\subsection{Statistical reporting}\label{app:statistical_reporting}

All entries in \Cref{tab:unified_benchmark} are reported as mean ARI with one standard deviation in subscript, computed directly from the list of ARI scores saved by each benchmark script. The standard deviation is a descriptive measure of run-to-run or dataset-to-dataset variability, not a standard error and not a formal confidence interval. The variability source is the labelled dataset collection for the \(\R^d\) and \(L^2\) columns, independent generated datasets for \(\mathrm{SO}(3)\), and multiple runs for glucodensity and MUTAG. Deterministic single-dataset baselines have zero reported standard deviation when only one fitted value is produced.

\subsection{Compute resources}\label{app:compute_resources}

All reported experiments were run on CPU only; no GPU was used or required. The local machine used for reproduction is an Ubuntu 24.04.4 LTS workstation equipped with two Intel Xeon E5-2650 v2 CPUs at \(2.60\,\mathrm{GHz}\) ($16$ physical cores in total), $188$ GiB of RAM, and a network-mounted filesystem. The software environment relies on \texttt{Python} 3.11.5 with the following core libraries: \texttt{PyTorch} 1.13.1, \texttt{NumPy} 1.26.4, \texttt{SciPy} 1.12.0, \texttt{scikit-learn} 1.5.2, \texttt{pandas} 1.5.3, \texttt{scikit-fda} 0.10.1, \texttt{torch-geometric} 2.7.0, \texttt{aeon} 0.11.1, \texttt{rdata} 1.0.0, \texttt{torchdiffeq} 0.2.5, and \texttt{torchsde} 0.2.6. The end-to-end reproduction pipeline runs as a single CPU-worker workflow and typically requires $2$--$4$ wall-clock hours, depending on cached downloads and optional notebook regeneration.

\section{Proofs}\label{sec:proofs}

\subsection{Proof of Proposition \ref{prop:projection_consistency}}\label{sec:proof_projection_consistency}

\begin{proof}
Since $(e_j)_{j\ge1}$ is an orthonormal basis of $\mathcal X$, the orthogonal projections $\Pi_M$ converge strongly to the identity. That is, for every $x\in\mathcal X$, $\Pi_M x \to x$ in $\mathcal X$ as $M\to\infty$.

We first prove the almost sure convergence of $J^{(M)}_{i,k}$. Fix $i\in\{1,\dots,n\}$ and $k\in\{1,\dots,K\}$. Fix $\omega$ such that $x_i\coloneqq X_i(\omega)\in\mathcal X$. Then $\Pi_M x_i \to x_i$ in $\mathcal X$. By the continuity of $\kappa$, for every $y\in\mathcal X$, we have $\kappa(\Pi_M x_i,\Pi_M y)\to \kappa(x_i,y)$. Furthermore, using the growth assumption \eqref{eq:kernel_bound} and the contraction property of the orthogonal projector ($\|\Pi_M x\|_{\mathcal X} \le \|x\|_{\mathcal X}$), we can bound the kernel evaluation as:
\[
    |\kappa(\Pi_M x_i,\Pi_M y)|
    \le
    C\bigl(1+\|\Pi_M x_i\|_{\mathcal X}^q\bigr)\bigl(1+\|\Pi_M y\|_{\mathcal X}^q\bigr)
    \le
    C\bigl(1+\|x_i\|_{\mathcal X}^q\bigr)\bigl(1+\|y\|_{\mathcal X}^q\bigr).
\]
By the finite moment assumption, the right-hand side is $\nu_k$-integrable with respect to $y$ for our fixed $x_i$. Therefore, by the dominated convergence theorem,
\[
    J^{(M)}_{i,k}(\omega)
    =
    \int_{\mathcal X}\kappa(\Pi_M x_i,\Pi_M y)\diff \nu_k(y)
    \to
    \int_{\mathcal X}\kappa(x_i,y)\diff \nu_k(y)
    =
    J_{i,k}(\omega),
\]
which establishes $J^{(M)}_{i,k}\to J_{i,k}$ almost surely.

Next, we establish the convergence of $I^{(M)}_{k,s}$. Fix $k,s\in\{1,\dots,K\}$. By the continuity of $\kappa$, $\kappa(\Pi_M y,\Pi_M y')\to \kappa(y,y')$ for all $y,y'\in\mathcal X$. Similarly to the previous case, applying \eqref{eq:kernel_bound} and the contraction property yields:
\[
    |\kappa(\Pi_M y,\Pi_M y')|
    \le
    C\bigl(1+\|y\|_{\mathcal X}^q\bigr)\bigl(1+\|y'\|_{\mathcal X}^q\bigr).
\]
Due to the finite moment assumption, this dominating function is integrable with respect to the product measure $\nu_k\otimes\nu_s$. Invoking the dominated convergence theorem once more, we obtain
\[
    I^{(M)}_{k,s}
    =
    \int_{\mathcal X\times\mathcal X}\kappa(\Pi_M y,\Pi_M y')\diff (\nu_k\otimes\nu_s)(y,y')
    \to
    \int_{\mathcal X\times\mathcal X}\kappa(y,y')\diff (\nu_k\otimes\nu_s)(y,y')
    =
    I_{k,s}.
\]

Finally, since $\kappa(\Pi_M X_i,\Pi_M X_j)\to \kappa(X_i,X_j)$ a.s.\ for all $i,j$, and the sums in \eqref{eq:projected_mmd} are finite with weights $\pi_k$ independent of $M$, combining these results yields \eqref{eq:proj.conv.as}.
\end{proof}

\subsection{Proof of Proposition \ref{prop:projection_responsibilities}}\label{sec:proof_projection_responsibilities}

\begin{proof}
Fix $k\in\{1,\dots,K\}$. We first show that the exact responsibility $\gamma_k(X)$ is a version of the conditional expectation $\mathbb{E}[\mathbf{1}_{\{Z=k\}} \mid \sigma(X)]$. Let $Q_\theta=\sum_{s=1}^K \pi_s\nu_s$ be the marginal law of $X$. For any Borel set $A\in\mathcal{B}(\mathcal{X})$, the definition of $\gamma_k = \diff(\pi_k\nu_k)/\diff Q_\theta$ yields
\[
    \int_A \gamma_k(x)\diff Q_\theta(x) = \pi_k\nu_k(A) = \mathbb{P}(X\in A,\ Z=k).
\]
Equivalently, for every $A\in\mathcal{B}(\mathcal{X})$,
\[
    \mathbb{E}\bigl[\gamma_k(X)\mathbf{1}_{\{X\in A\}}\bigr] = \mathbb{E}\bigl[\mathbf{1}_{\{Z=k\}}\mathbf{1}_{\{X\in A\}}\bigr].
\]
Since $\gamma_k(X)$ is $\sigma(X)$-measurable, this identity implies $\gamma_k(X) = \mathbb{E}[\mathbf{1}_{\{Z=k\}} \mid \sigma(X)]$ almost surely.

By an identical argument on the projected space, equipping $\mathcal{X}_M$ with the pushforward measures $\nu_{s,M}$ and $Q_{\theta,M}$, the projected derivative $\gamma_k^{(M)} = \diff(\pi_k\nu_{k,M})/\diff Q_{\theta,M}$ satisfies
\[
    \int_B \gamma_k^{(M)}(u)\diff Q_{\theta,M}(u) = \mathbb{P}(X^{(M)}\in B,\ Z=k)
\]
for any $B\in\mathcal{B}(\mathcal{X}_M)$. Equivalently, for every $B\in\mathcal{B}(\mathcal{X}_M)$,
\[
    \mathbb{E}\bigl[\gamma_k^{(M)}(X^{(M)})\,\mathbf{1}_{\{X^{(M)}\in B\}}\bigr] = \mathbb{E}\bigl[\mathbf{1}_{\{Z=k\}}\,\mathbf{1}_{\{X^{(M)}\in B\}}\bigr].
\]
Since $\gamma_k^{(M)}(X^{(M)})$ is $\sigma(X^{(M)})$-measurable, this confirms that $$\gamma_k^{(M)}(X^{(M)}) = \mathbb{E}[\mathbf{1}_{\{Z=k\}} \mid \sigma(X^{(M)})]$$ almost surely, establishing \eqref{eq:martingale_property}.

Next, define the filtration $\mathcal{F}_M \coloneqq \sigma(X^{(M)})$. The property $\Pi_M = \Pi_M \Pi_{M+1}$ ensures that $X^{(M)} = \Pi_M X^{(M+1)}$, so $X^{(M)}$ is $\sigma(X^{(M+1)})$-measurable, meaning the filtration is increasing ($\mathcal{F}_M \subseteq \mathcal{F}_{M+1}$). Let $\mathcal{F}_\infty \coloneqq \sigma(\cup_{M=1}^\infty \mathcal{F}_M)$. Since each projector $\Pi_M$ is continuous, $X^{(M)}$ is $\sigma(X)$-measurable, yielding $\mathcal{F}_\infty \subseteq \sigma(X)$. Conversely, since $\Pi_M x \to x$, we have $X^{(M)}(\omega) \to X(\omega)$ pointwise in $\mathcal{X}$. Since $\mathcal{X}$ is a metric space, the pointwise limit of $\mathcal{F}_\infty$-measurable maps into $\mathcal{X}$ is $\mathcal{F}_\infty$-measurable. This forces $\sigma(X) \subseteq \mathcal{F}_\infty$, concluding that $\mathcal{F}_\infty = \sigma(X)$.

Finally, we apply Lévy's upward convergence theorem. Since the target $\mathbf{1}_{\{Z=k\}}$ is integrable, the martingale sequence $\mathbb{E}[\mathbf{1}_{\{Z=k\}} \mid \mathcal{F}_M]$ converges almost surely and in $L^1$ to $\mathbb{E}[\mathbf{1}_{\{Z=k\}} \mid \mathcal{F}_\infty]$. By our previous identities,
\[
    \mathbb{E}[\mathbf{1}_{\{Z=k\}}\mid \mathcal{F}_\infty] = \mathbb{E}[\mathbf{1}_{\{Z=k\}}\mid \sigma(X)] = \gamma_k(X) \qquad\text{a.s.}
\]
Substituting these into the limit, we obtain, as $M\to\infty$,
\[
    \gamma_k^{(M)}(X^{(M)}) \to \gamma_k(X) \qquad\text{a.s.\ and in }L^1.
\]
\end{proof}

\subsection{Proof of Proposition \ref{prop:approximation}}\label{sec:proof_density}

\begin{proof}
\emph{Proof of (i).}

Fix $p\ge 1$, $P\in\mathcal P_p(\mathcal X)$, and $\varepsilon>0$. Let $\mathcal K_0$ be any nonzero trace-class covariance operator on $\mathcal X$. It is enough to prove density for mixtures with component covariances of the form $\sigma^2\mathcal K_0$, since these form a subclass of the mixtures in \eqref{eq:mixture_model}.

Because $\mathcal X$ is a separable Hilbert space, it is Polish. Hence finitely supported probability measures are dense in $\mathcal P_p(\mathcal X)$ for $W_p$. Therefore, there exists $\nu=\sum_{i=1}^N w_i\delta_{x_i}$ such that $W_p(P,\nu)<\varepsilon/2.$

For $\sigma>0$, define the Gaussian smoothing
$$
\nu_\sigma \coloneqq \sum_{i=1}^N w_i\,\mathcal N(x_i,\sigma^2\mathcal K_0).
$$
Let $I$ be an index with $\mathbb P(I=i)=w_i$, and let $Z\sim\mathcal N(0,\mathcal K_0)$, independent of $I$. Then $X=x_I$ has law $\nu$, while $Y=x_I+\sigma Z$ has law $\nu_\sigma$. This coupling gives
$$
W_p^p(\nu,\nu_\sigma) \le \mathbb E\|X-Y\|_{\mathcal X}^p = \sigma^p\mathbb E\|Z\|_{\mathcal X}^p.
$$
Since $\mathcal K_0$ is trace-class, Fernique's theorem implies $\mathbb E\|Z\|_{\mathcal X}^p<\infty$. Hence $W_p(\nu,\nu_\sigma)\to0$ as $\sigma\downarrow0$. Choosing $\sigma$ small enough yields $W_p(\nu,\nu_\sigma)<\varepsilon/2$, and the triangle inequality gives
$$
W_p(P,\nu_\sigma)<\varepsilon.
$$
Thus, finite Gaussian mixtures with non-zero covariance operators are $W_p$-dense in $\mathcal P_p(\mathcal X)$.

\medskip
\emph{Proof of (ii).}
Fix $P\in\mathcal{P}(\mathcal{X})$ and draw $K$ i.i.d. samples $X_1,\dots,X_K \sim P$. Let the empirical measure be $\widehat{P}_K \coloneqq \frac{1}{K}\sum_{k=1}^K\delta_{X_k}$. Since $\mathbb{E}[\kappa(X_k,\cdot)]=\mu_P$ and we assume $\sup_x\kappa(x,x)\le 1$, the expected squared MMD is bounded by the variance of the mean embedding:
$$
\mathbb{E}\big[\operatorname{MMD}_\kappa^2(P,\widehat{P}_K)\big] = \frac{1}{K} \mathbb{E}\big[\|\kappa(X_1,\cdot)-\mu_P\|_{\mathcal{H}_\kappa}^2\big] \le \frac{1}{K}\mathbb{E}[\kappa(X_1,X_1)] \le \frac{1}{K}.
$$
By the probabilistic method, there exists a specific realization $m_1,\dots,m_K\in\mathcal{X}$ such that the deterministic empirical measure $\mu_K \coloneqq \frac{1}{K}\sum_{k=1}^K\delta_{m_k}$ satisfies $\operatorname{MMD}_\kappa(P,\mu_K) \le 1/\sqrt{K}$.

Fix any nonzero trace-class covariance operator $\mathcal{K}_0$ and define $Q_{K,\sigma} \coloneqq \frac{1}{K}\sum_{k=1}^K\mathcal{N}(m_k,\sigma^2\mathcal{K}_0)$. Then $Q_{K,\sigma}$ converges weakly to $\mu_K$ as $\sigma\downarrow 0$. Because $\kappa$ is bounded and continuous, weak convergence implies convergence in MMD. Thus, we can choose $\sigma>0$ small enough such that $\operatorname{MMD}_\kappa(Q_{K,\sigma},\mu_K) \le 1/\sqrt{K}$.

Applying the triangle inequality yields
$$
\operatorname{MMD}_\kappa(P,Q_{K,\sigma}) \le \operatorname{MMD}_\kappa(P,\mu_K) + \operatorname{MMD}_\kappa(\mu_K,Q_{K,\sigma}) \le \frac{2}{\sqrt{K}},
$$
which completes the proof, as $Q_{K,\sigma}$ is a valid mixture of the form \eqref{eq:mixture_model}.
\end{proof}

\subsection{Proof of Proposition \ref{prop:time_varying_mmd_infinite_data}}\label{sec:proof_time_varying_mmd_infinite_data}
\begin{proof}
\textbf{Step 1.}
Let $\Phi(x) = \kappa(x,\cdot) \in \mathcal{H}_\kappa$ be the canonical feature map. Because $\mathcal{X}$ is separable and $\kappa$ is continuous, the reproducing kernel Hilbert space $\mathcal{H}_\kappa$ is separable. Define the Bochner space
$$
\mathbb{H} \coloneqq L^2(0,T; \mathcal{H}_\kappa).
$$
For each sample path, define the random element $Y_i(t) \coloneqq \Phi(X_i(t))$. Since $\Phi$ is continuous, $(\omega,t)\mapsto \Phi(X_i(t,\omega))$ is jointly measurable. Together with the separability of $\mathcal{H}_\kappa$ and the integrability estimate below, this defines a strongly measurable $\mathbb{H}$-valued random variable. By assumption, its squared norm is integrable:
$$
\mathbb{E}[\|Y_1\|_{\mathbb{H}}^2] = \mathbb{E}\!\left[\int_0^T \|\Phi(X_1(t))\|_{\mathcal{H}_\kappa}^2\diff t\right] = \mathbb{E}\!\left[\int_0^T \kappa(X_1(t),X_1(t))\diff t\right] < \infty.
$$
Thus, $Y_1, \dots, Y_n$ are i.i.d.\ $\mathbb{H}$-valued random variables with finite first and second moments.

\textbf{Step 2.}
By Jensen's inequality and Tonelli's theorem, the population mean embedding trajectory $\mu_P \coloneqq (\mu_{P(t)})_{t \in [0,T]}$ satisfies $\int_0^T \|\mu_{P(t)}\|^2_{\mathcal{H}_\kappa} \diff t \le \mathbb{E}[\|Y_1\|_{\mathbb{H}}^2] < \infty$, so $\mu_P \in \mathbb{H}$. By the Bochner--Fubini theorem, $\mu_P$ is exactly the expected value $\mathbb{E}[Y_1]$ in $\mathbb{H}$. 
Similarly, the empirical mean embedding trajectory $\mu_{P_n}$ corresponds to the sample average:
$$
\mu_{P_n} = \frac{1}{n}\sum_{i=1}^n Y_i \quad \text{in } \mathbb{H}.
$$
By assumption, the target embedding trajectory $\mu_Q \coloneqq (\mu_{Q(t)})_{t \in [0,T]}$ also belongs to $\mathbb{H}$.

\textbf{Step 3.}
By the strong law of large numbers for Hilbert-space-valued random variables, the sample average converges to its expectation almost surely:
$$
\mu_{P_n} \xrightarrow{n\to\infty} \mu_P \qquad \text{in } \mathbb{H} \text{ a.s.}
$$
Since $\mu_{P_n} - \mu_Q \to \mu_P - \mu_Q$ in $\mathbb{H}$, the continuity of the squared norm gives $\|\mu_{P_n} - \mu_Q\|_{\mathbb{H}}^2 \to \|\mu_P - \mu_Q\|_{\mathbb{H}}^2$. Because the integrated squared MMD is precisely the squared distance in $\mathbb{H}$ scaled by $1/T$, this immediately implies that
$$
\frac{1}{T}\int_0^T \operatorname{MMD}_\kappa^2\bigl(P_n(t),Q(t)\bigr)\diff t \xrightarrow{n\to\infty} \frac{1}{T}\int_0^T \operatorname{MMD}_\kappa^2\bigl(P(t),Q(t)\bigr)\diff t
$$
almost surely. Finally, setting $Q(t) = P(t)$ directly yields $\frac{1}{T}\|\mu_{P_n} - \mu_P\|_{\mathbb{H}}^2 \to 0$ almost surely, completing the proof.
\end{proof}

\subsection{Proof of Proposition \ref{prop:time_varying_projection_consistency}}\label{sec:proof_time_varying_projection_consistency}
\begin{proof}
For each $M$, the identification of $\mathcal{X}_M$ with $\mathbb{R}^M$ via the basis $(e_r)_{r=1}^M$ maps the projected measures $P_{n,M}(t) \coloneqq \frac{1}{n}\sum_{i=1}^n \delta_{\Pi_M X_i(t)}$ and $Q_{\theta,M}(t) \coloneqq \sum_{k=1}^K \pi_k(t)\nu_{k,M}$ to their finite-dimensional coordinate representations $\mathbf{P}_{n,M}(t)$ and $\mathbf{Q}_{\theta,M}(t)$. By defining $\kappa_M(\mathbf{x},\mathbf{y}) \coloneqq \kappa(\sum_{r=1}^M x_r e_r, \sum_{r=1}^M y_r e_r)$, we have
$$
\operatorname{MMD}_{\kappa_M}^2\bigl(\mathbf{P}_{n,M}(t),\mathbf{Q}_{\theta,M}(t)\bigr) = \operatorname{MMD}_\kappa^2\bigl(P_{n,M}(t),Q_{\theta,M}(t)\bigr)
$$
for every $t$. Expanding the exact and projected MMD objectives, we define the cross-expectations
$$
J_{i,k}^{(M)}(t) \coloneqq \mathbb{E}_{Y\sim \nu_k}\big[\kappa(\Pi_M X_i(t),\Pi_M Y)\big], \qquad J_{i,k}(t) \coloneqq \mathbb{E}_{Y\sim \nu_k}\big[\kappa(X_i(t),Y)\big],
$$
$$
I_{k,s}^{(M)} \coloneqq \mathbb{E}_{Y\sim \nu_k,\,Y'\sim \nu_s}[\kappa(\Pi_M Y,\Pi_M Y')], \qquad I_{k,s} \coloneqq \mathbb{E}_{Y\sim \nu_k,\,Y'\sim \nu_s}[\kappa(Y,Y')].
$$

Fix a sample path satisfying $\int_0^T \|X_i(t)\|_{\mathcal{X}}^{2q}\diff t < \infty$ for every $i=1,\dots,n$. For every $t$, $\Pi_M X_i(t) \to X_i(t)$ in $\mathcal{X}$. Hence, by the continuity of $\kappa$, the data--data interactions converge pointwise in $t$:
$$
\kappa(\Pi_M X_i(t),\Pi_M X_j(t)) \to \kappa(X_i(t),X_j(t)).
$$
For fixed $t$, the integrand in $J_{i,k}^{(M)}(t)$ satisfies $\kappa(\Pi_M X_i(t),\Pi_M Y) \to \kappa(X_i(t),Y)$ for every $Y \in \mathcal{X}$, and is bounded by $|\kappa(\Pi_M X_i(t),\Pi_M Y)| \le C(1+\|X_i(t)\|_{\mathcal{X}}^q)(1+\|Y\|_{\mathcal{X}}^q)$. Then dominated convergence applies because Gaussian measures on Hilbert spaces have finite $q$-moments, giving $J_{i,k}^{(M)}(t) \to J_{i,k}(t)$. The exact same argument applies to the component--component term, yielding $I_{k,s}^{(M)} \to I_{k,s}$.

We next justify passing the limit through the time integral using the Dominated Convergence Theorem. By the polynomial growth condition on $\kappa$ and the contraction property of $\Pi_M$,
$$
|\kappa(\Pi_M X_i(t),\Pi_M X_j(t))| \le C\bigl(1+\|X_i(t)\|_{\mathcal{X}}^q\bigr)\bigl(1+\|X_j(t)\|_{\mathcal{X}}^q\bigr).
$$
This upper bound belongs to $L^1(0,T)$ by Cauchy--Schwarz and the sample path assumption. Likewise, if $Y_k \sim \nu_k$,
$$
|J_{i,k}^{(M)}(t)| \le C\bigl(1+\|X_i(t)\|_{\mathcal{X}}^q\bigr)\,\mathbb{E}\bigl[1+\|Y_k\|_{\mathcal{X}}^q\bigr],
$$
which is integrable on $[0,T]$. Finally, the component--component term is uniformly bounded by $C\,\mathbb{E}[1+\|Y_k\|_{\mathcal{X}}^q]\,\mathbb{E}[1+\|Y_s\|_{\mathcal{X}}^q]$, which is finite and independent of $M$.

Since $0 \le \pi_k(t)\pi_s(t) \le 1$, we can apply the Dominated Convergence Theorem to integrate term by term. Integrating the finite-dimensional MMD expansion
$$
\begin{aligned}
\operatorname{MMD}_\kappa^2\bigl(P_{n,M}(t),Q_{\theta,M}(t)\bigr) &= \frac{1}{n^2}\sum_{i,j=1}^n \kappa(\Pi_M X_i(t),\Pi_M X_j(t)) \\
&\quad - \frac{2}{n}\sum_{i=1}^n\sum_{k=1}^K \pi_k(t)J_{i,k}^{(M)}(t) + \sum_{k,s=1}^K \pi_k(t)\pi_s(t)I_{k,s}^{(M)}
\end{aligned}
$$
over $[0,T]$ yields convergence for fixed $n, K$, and $\theta$:
$$
\int_0^T \operatorname{MMD}_\kappa^2\bigl(P_{n,M}(t),Q_{\theta,M}(t)\bigr)\diff t \xrightarrow{M\to\infty} \int_0^T \operatorname{MMD}_\kappa^2\bigl(P_n(t),Q_\theta(t)\bigr)\diff t \qquad\text{a.s.}
$$
Dividing by $T$ and using the coordinate identification $\kappa_M$ completes the proof.
\end{proof}

\subsection{Proof of Proposition \ref{prop:time_varying_density}}\label{sec:proof_time_varying_density}

\begin{proof}
Fix $p\ge 1$, let $P\colon [0,T]\to\mathcal{P}_p(\mathcal{X})$ be continuous with respect to $W_p$, and fix $\varepsilon>0$. Because $[0,T]$ is compact and $P$ is continuous, the trajectory image $\mathcal{C}\coloneqq P([0,T])$ is compact in the metric space $(\mathcal{P}_p(\mathcal{X}),W_p)$.

By \Cref{prop:approximation}, for every $\mu\in\mathcal{C}$, there exists a finite Gaussian mixture $q_\mu$ such that $W_p(\mu,q_\mu) < \varepsilon/2$. The open balls $U_\mu\coloneqq B_{W_p}(\mu,\varepsilon/2) \cap \mathcal{C}$ form an open cover of $\mathcal{C}$. By compactness, we can extract a finite subcover centered at $\mu_1,\dots,\mu_J\in\mathcal{C}$, with corresponding approximating mixtures $q_1,\dots,q_J$.

Let $\{\lambda_j\}_{j=1}^J$ be a continuous partition of unity on $\mathcal{C}$ subordinate to this finite subcover. That is, each $\lambda_j\colon \mathcal{C}\to[0,1]$ is continuous, $\sum_{j=1}^J \lambda_j(\mu)=1$ for all $\mu\in\mathcal{C}$, and $\lambda_j(\mu)=0$ whenever $\mu \notin U_j$.

Since each $q_j$ is a finite Gaussian mixture, we can pool all distinct Gaussian components appearing across $q_1,\dots,q_J$ into a single shared dictionary $\{\nu_1,\dots,\nu_K\}$. We can then express each local mixture over this shared basis as
$$
q_j = \sum_{k=1}^K a_{j,k}\nu_k, \qquad \text{where } a_{j,k}\ge 0 \text{ and } \sum_{k=1}^K a_{j,k}=1.
$$
We now construct the temporal weights by interpolating the coefficients:
$$
\pi_k(t)\coloneqq \sum_{j=1}^J \lambda_j(P(t))\,a_{j,k}, \qquad t\in[0,T],\quad k=1,\dots,K.
$$
Because $P$ and the partition functions $\lambda_j$ are continuous, the trajectories $t \mapsto \pi_k(t)$ are continuous. Furthermore, $\pi_k(t)\ge 0$ and $\sum_{k=1}^K \pi_k(t) = 1$. The temporal mixture is therefore given by
$$
Q(t)\coloneqq \sum_{k=1}^K \pi_k(t)\nu_k = \sum_{j=1}^J \lambda_j(P(t))\,q_j.
$$

To bound the distance between $P(t)$ and $Q(t)$, fix $t\in[0,T]$. We define the set of active indices at time $t$ as $A(t) \coloneqq \{j : \lambda_j(P(t)) > 0\}$. For each $j \in A(t)$, $P(t)\in U_j$, meaning $W_p(P(t),\mu_j) < \varepsilon/2$. By the triangle inequality,
$$
W_p(P(t),q_j) \le W_p(P(t),\mu_j) + W_p(\mu_j,q_j) < \varepsilon.
$$
For each $j \in A(t)$, choose an optimal transport plan $\gamma_{j,t} \in \Pi(P(t),q_j)$, which satisfies $\int \|x-y\|_{\mathcal{X}}^p\diff \gamma_{j,t}(x,y) = W_p^p(P(t),q_j) < \varepsilon^p$. We construct a mixed coupling:
$$
\gamma_t\coloneqq \sum_{j \in A(t)} \lambda_j(P(t))\,\gamma_{j,t}.
$$
Because each $\gamma_{j,t}$ has first marginal $P(t)$ and second marginal $q_j$, the convex combination $\gamma_t$ has first marginal $P(t)$ and second marginal $\sum_{j \in A(t)} \lambda_j(P(t))\,q_j = Q(t)$. Thus, $\gamma_t \in \Pi(P(t),Q(t))$.

Evaluating the transport cost of this coupling yields
$$
\begin{aligned}
W_p^p(P(t),Q(t)) &\le \int_{\mathcal{X}\times\mathcal{X}} \|x-y\|_{\mathcal{X}}^p\diff \gamma_t(x,y) \\
&= \sum_{j \in A(t)} \lambda_j(P(t)) \int_{\mathcal{X}\times\mathcal{X}} \|x-y\|_{\mathcal{X}}^p\diff \gamma_{j,t}(x,y).
\end{aligned}
$$
Since the integrals are strictly bounded by $\varepsilon^p$ and the sum of the active weights is 1, we conclude
$$
W_p^p(P(t),Q(t)) < \sum_{j \in A(t)} \lambda_j(P(t))\,\varepsilon^p = \varepsilon^p.
$$
Hence, $W_p(P(t),Q(t))<\varepsilon$ for every $t\in[0,T]$, establishing the supremum bound.
\end{proof}

\subsection{Proof of Proposition \ref{prop:mmd_closed_forms}}\label{sec:proof_mmd_closed_forms}
\begin{proof}
This proof is inspired by \cite[Theorem~18]{wynneKernelTwosampleTest2022}.
Since $A$ is bounded, self-adjoint, and positive semi-definite, its square root $T\coloneqq A^{1/2}$ is a bounded, self-adjoint, positive semi-definite operator, and
\[
\kappa_A(x,y)=\exp\left(-\frac12\|Tx-Ty\|_{\mathcal X}^2\right).
\]
By the stability of Gaussian measures under bounded linear maps, if $y \sim \nu_k = \mathcal{N}(m_k,\mathcal{K}_k)$, the transformed variable is distributed as $Ty \sim \mathcal{N}(Tm_k,\,T\mathcal{K}_kT)$. Because $\mathcal{K}_k$ is trace-class and $T$ is bounded, the operator $T\mathcal{K}_kT$ remains trace-class. Thus, the general case reduces to evaluating the isotropic kernel $\kappa(x,y) = \exp(-\frac{1}{2}\|x-y\|_{\mathcal{X}}^2)$ on the transformed variables $Tx$ and $Ty$.

To evaluate the isotropic case, we rewrite the data-to-model expectation by centering the integration variable. Let $h \coloneqq X_i - m_k$ and write $y = m_k + \xi$, where $\xi \sim \mathcal{N}(0, \mathcal{K}_k)$. Since $X_i - y = h - \xi$, the expectation becomes:
\[
    J_{i,k} = \mathbb{E}_{y \sim \nu_k}\left[\exp\left(-\frac{1}{2}\|X_i - y\|_{\mathcal{X}}^2\right)\right] = \mathbb{E}_{\xi}\left[\exp\left(-\frac{1}{2}\|\xi - h\|_{\mathcal{X}}^2\right)\right].
\]
By the spectral theorem for compact self-adjoint operators, there exists an orthonormal eigenbasis $(e_r)_{r \ge 1}$ of $\mathcal{K}_k$ with corresponding eigenvalues $(\lambda_r)_{r \ge 1}$. Expanding $\xi$ and $h$ in this basis yields
\[
    \xi = \sum_{r=1}^\infty \sqrt{\lambda_r} Z_r e_r, \qquad h = \sum_{r=1}^\infty h_r e_r,
\]
where $Z_r \stackrel{\mathrm{i.i.d.}}{\sim} \mathcal{N}(0,1)$.  For any $N \in \mathbb{N}$, define $\xi^{(N)} \coloneqq \sum_{r=1}^N \sqrt{\lambda_r} Z_r e_r$ and $h^{(N)} \coloneqq \sum_{r=1}^N h_r e_r$. By Parseval's identity, the squared norm of the difference is simply the limit of its partial sums:
\[
    \|\xi^{(N)} - h^{(N)}\|_{\mathcal{X}}^2 = \sum_{r=1}^N (\sqrt{\lambda_r} Z_r - h_r)^2 \xrightarrow{N \to \infty} \sum_{r=1}^\infty (\sqrt{\lambda_r} Z_r - h_r)^2 = \|\xi - h\|_{\mathcal{X}}^2 \quad \text{a.s.}
\]
Since this convergence holds almost surely and $0 \le \exp\left(-\frac{1}{2}\|\xi^{(N)} - h^{(N)}\|_{\mathcal{X}}^2\right) \le 1$, the dominated convergence theorem allows us to compute the expectation as the limit of the finite-dimensional projections:
\begin{align*}
    J_{i,k}
    = \lim_{N \to \infty} \mathbb{E}\left[\exp\left(-\frac{1}{2}\|\xi^{(N)} - h^{(N)}\|_{\mathcal{X}}^2\right)\right] = \lim_{N \to \infty} \prod_{r=1}^N \mathbb{E}\left[\exp\left(-\frac{1}{2}(\sqrt{\lambda_r}Z_r - h_r)^2\right)\right].
\end{align*}
Expanding the square inside the exponential yields $-\frac{1}{2}(\sqrt{\lambda_r}Z_r - h_r)^2 = -\frac{\lambda_r}{2}Z_r^2 + h_r\sqrt{\lambda_r}Z_r - \frac{h_r^2}{2}$. Factoring out the constant term, we obtain:
$$
    \mathbb{E}\left[\exp\left(-\frac{1}{2}(\sqrt{\lambda_r}Z_r - h_r)^2\right)\right] 
    = e^{-h_r^2/2} \mathbb{E}\left[\exp\left(h_r\sqrt{\lambda_r}Z_r - \frac{\lambda_r}{2}Z_r^2\right)\right].
$$
Using the Gaussian identity $\mathbb{E}[\exp(aZ + bZ^2)] = (1-2b)^{-1/2}\exp\bigl(\frac{a^2}{2(1-2b)}\bigr)$ for $Z \sim \mathcal{N}(0,1)$, which holds if $2b<1$, with $a = h_r\sqrt{\lambda_r}$ and $b = -\lambda_r/2$, each factor evaluates to:
$$
    e^{-h_r^2/2} (1+\lambda_r)^{-1/2} \exp\left(\frac{h_r^2\lambda_r}{2(1+\lambda_r)}\right)
    = (1+\lambda_r)^{-1/2} \exp\left(-\frac{1}{2}\frac{h_r^2}{1+\lambda_r}\right).
$$
Substituting the one-dimensional evaluations back into the finite-dimensional product yields:
\begin{equation*}
    \mathbb{E}\left[\exp\left(-\frac{1}{2}\|\xi^{(N)} - h^{(N)}\|_{\mathcal{X}}^2\right)\right] 
    = \left(\prod_{r=1}^N (1+\lambda_r)^{-1/2}\right) \exp\left(-\frac{1}{2}\sum_{r=1}^N \frac{h_r^2}{1+\lambda_r}\right).
\end{equation*}

Passing to the limit as $N \to \infty$, the infinite product converges to $\det_F(\Id + \mathcal{K}_k)^{-1/2}$. Simultaneously, by the spectral decomposition of $(\Id + \mathcal{K}_k)^{-1}$, the series in the exponent evaluates to
\begin{equation*}
    \sum_{r=1}^\infty \frac{h_r^2}{1+\lambda_r} = \big\langle h, (\Id + \mathcal{K}_k)^{-1} h \big\rangle_{\mathcal{X}}.
\end{equation*}
Recalling that $h = X_i - m_k$, we conclude:
\begin{equation*}
    J_{i,k} = \det\nolimits_F(\Id + \mathcal{K}_k)^{-1/2} \exp\left(-\frac{1}{2} \big\langle X_i - m_k, (\Id + \mathcal{K}_k)^{-1} (X_i - m_k) \big\rangle_{\mathcal{X}}\right).
\end{equation*}
Applying this isotropic solution to the transformed variables $TX_i$ and $Ty \sim \mathcal{N}(Tm_k, T\mathcal{K}_kT)$, we obtain the intermediate form:
$$
    J_{i,k} = \det\nolimits_F(\Id + T\mathcal{K}_k T)^{-1/2} \exp\left(-\frac{1}{2} \big\langle T(X_i - m_k), (\Id + T\mathcal{K}_k T)^{-1} T(X_i - m_k) \big\rangle_{\mathcal{X}}\right).
$$
Substituting $T = A^{1/2}$ recovers Equation \eqref{eq:J_gaussian}.

Finally, for the cross-expectation $I_{k,s}$, let $y \sim \nu_k$ and $y' \sim \nu_s$ be independent. Their difference $z \coloneqq y - y'$ is distributed as $\mathcal{N}(m_k - m_s, \mathcal{K}_k + \mathcal{K}_s)$. Since $\kappa_A(y,y') = \kappa_A(z,0)$, applying the formula for $J_{i,k}$ with $X_i=0$ immediately yields:
\begin{equation*}
    I_{k,s} = \frac{\exp\Bigl(-\frac{1}{2}\big\langle m_k-m_s,\, A^{1/2}\bigl(\Id + A^{1/2}(\mathcal{K}_k+\mathcal{K}_s)A^{1/2}\bigr)^{-1}A^{1/2}(m_k-m_s)\big\rangle_{\mathcal{X}}\Bigr)}{\det_F\bigl(\Id + A^{1/2}(\mathcal{K}_k+\mathcal{K}_s)A^{1/2}\bigr)^{1/2}},
\end{equation*}
which completes the proof.

\end{proof}

\subsection{Proof of Proposition \ref{prop:mmd_closed_forms_polynomial}}\label{sec:proof_mmd_closed_forms_polynomial}
\begin{proof}
Write $y=m_k+\xi$, where $\xi\sim\mathcal N(0,\mathcal K_k)$. Then
$$
Z\coloneqq \langle X_i,y\rangle_{\mathcal X}+c
=
\underbrace{\langle X_i,m_k\rangle_{\mathcal X}+c}_{\mu_{i,k}}
+
\langle X_i,\xi\rangle_{\mathcal X}
$$
is a real Gaussian random variable with mean $\mu_{i,k}\in\R$ and variance
$$
v_{i,k}
\coloneqq
\mathbb E[\langle X_i,\xi\rangle_{\mathcal X}^2]
=
\langle X_i,\mathcal K_kX_i\rangle_{\mathcal X}.
$$
Thus,
$$
Z \stackrel{d}{=} \mu_{i,k}+\sqrt{v_{i,k}}\,G,
\qquad
G\sim\mathcal N(0,1),
$$
and therefore
$$
J_{i,k}
=
\mathbb E[Z^p]
=
\sum_{r=0}^p \binom{p}{r}\mu_{i,k}^{\,p-r}v_{i,k}^{r/2}\,\mathbb E[G^r].
$$
Since $\mathbb E[G^r]=0$ for odd $r$ and $\mathbb E[G^{2t}]=\frac{(2t)!}{2^t t!},$ we obtain
$$
J_{i,k}
=
\sum_{t=0}^{\lfloor p/2\rfloor}
\frac{p!}{(p-2t)!\,2^t\,t!}
\,\langle X_i,\mathcal K_kX_i\rangle_{\mathcal X}^{\,t}
\bigl(\langle X_i,m_k\rangle_{\mathcal X}+c\bigr)^{p-2t}.
$$

Now let $y\sim \nu_k$ and $y'\sim \nu_s$ be independent. By definition of the polynomial kernel, 
$$
I_{k,s} = \mathbb E\big[(\langle y,y'\rangle_{\mathcal X} + c)^p\big].
$$
Applying the binomial theorem to expand the power, and using the linearity of expectation, 
$$
I_{k,s}
=
\mathbb E\Bigg[\sum_{r=0}^p \binom{p}{r}c^{p-r}\langle y,y'\rangle_{\mathcal X}^r\Bigg]
=
\sum_{r=0}^p \binom{p}{r}c^{p-r}\,
\mathbb E\big[\langle y,y'\rangle_{\mathcal X}^r\big].
$$
To evaluate this expectation, recall that $\mathcal X^{\otimes r}$ denotes the $r$-th tensor power of $\mathcal X$. For any vector $y \in \mathcal X$, its $r$-fold tensor product is $y^{\otimes r} \coloneqq y \otimes \dots \otimes y$. By the standard definition of the inner product on $\mathcal{X}^{\otimes r}$, we have the fundamental identity:
$$
\langle y,y'\rangle_{\mathcal X}^r
=
\langle y^{\otimes r},(y')^{\otimes r}\rangle_{\mathcal X^{\otimes r}},
$$
where we adopt the convention $\mathcal X^{\otimes 0}=\mathbb R$ and $y^{\otimes 0}=1$ for $r=0$.

Since Gaussian measures on separable Hilbert spaces have finite moments of all orders, the Bochner moments
$$
M_{r,k}\coloneqq \mathbb E[y^{\otimes r}] \in \mathcal X^{\otimes r},
\qquad
M_{r,s}\coloneqq \mathbb E[(y')^{\otimes r}] \in \mathcal X^{\otimes r}
$$
are well-defined. Using independence, the expectation of the inner product factors as the inner product of the expectations:
$$
\mathbb E\big[\langle y^{\otimes r},(y')^{\otimes r}\rangle_{\mathcal X^{\otimes r}}\big]
=
\big\langle \mathbb E[y^{\otimes r}],\,\mathbb E[(y')^{\otimes r}]\big\rangle_{\mathcal X^{\otimes r}}
=
\langle M_{r,k},M_{r,s}\rangle_{\mathcal X^{\otimes r}}.
$$
Substituting this back into the binomial expansion yields
$$
I_{k,s}
=
\sum_{r=0}^{p}\binom{p}{r}c^{p-r}\,\langle M_{r,k},M_{r,s}\rangle_{\mathcal X^{\otimes r}},
$$
which completes the proof.
\end{proof}

\subsection{Proof of Proposition \ref{prop:projection_likelihood}}\label{sec:proof_projection_likelihood}
\begin{proof}
Let $\rho$ be as in the proposition and define $q=dQ/d\rho\in L^1(\rho)$, so that $\int q\diff \rho=Q(\mathcal{X})=1$.
Since $Q\ll\rho$, the projected measure $Q_M=Q\circ\Pi_M^{-1}$ satisfies $Q_M\ll\rho_M$, so $q_M=dQ_M/d\rho_M$ is well defined.
Let $X$ denote the identity random element on the probability space $(\mathcal{X},\mathcal{B}(\mathcal{X}),\rho)$ and set $X^{(M)}\coloneqq \Pi_M X$.
Write $\mathcal{F}_M\coloneqq \sigma(X^{(M)})$ and $\mathcal{F}_\infty\coloneqq \sigma(\cup_{M\ge 1}\mathcal{F}_M)$.
The same projection argument used in the proof of Proposition \ref{prop:projection_responsibilities} gives $\mathcal{F}_\infty=\sigma(X)=\mathcal{B}(\mathcal{X})$.

\medskip
\noindent\textbf{Step 1 (identification as a conditional expectation).}
By definition, $q_M=dQ_M/d\rho_M$ and for each $B\in\mathcal{B}(\mathcal{X}_M)$,
\[
\int_B q_M(u)\diff \rho_M(u)
= Q_M(B)
= Q(\Pi_M^{-1}(B))
= \int_{\Pi_M^{-1}(B)} q(x)\diff \rho(x).
\]
Since $\rho_M=\rho\circ \Pi_M^{-1}$, the left-hand side equals
\[
\int_{\mathcal{X}} \mathbf{1}_{\{X^{(M)}\in B\}}\,q_M(X^{(M)})\diff \rho.
\]
Combining these identities, we obtain
\[
\int_{\mathcal{X}} \mathbf{1}_{\{X^{(M)}\in B\}}\,q_M(X^{(M)})\diff \rho
= \int_{\mathcal{X}} \mathbf{1}_{\{X^{(M)}\in B\}}\,q(X)\diff \rho
\qquad\forall B\in\mathcal{B}(\mathcal{X}_M),
\]
so $q_M(X^{(M)})$ is a version of $\mathbb{E}_{\rho}[q(X)\mid \mathcal{F}_M]$.
Equivalently, $q_M(\Pi_M x)=\mathbb{E}_{\rho}[q(X)\mid \mathcal{F}_M](x)$ for $\rho$-a.e.\ $x$.

\medskip
\noindent\textbf{Step 2 (martingale convergence).}
Since $(\mathcal{F}_M)_{M\ge 1}$ is an increasing filtration and $q(X)\in L^1(\rho)$, the martingale convergence theorem yields
\[
\mathbb{E}_{\rho}[q(X)\mid \mathcal{F}_M]\longrightarrow \mathbb{E}_{\rho}[q(X)\mid \mathcal{F}_\infty]=q(X)
\qquad \rho\text{-a.s. and in }L^1(\rho).
\]
Combining with Step~1 shows $q_M(\Pi_M x)\to q(x)$ for $\rho$-a.e.\ $x$, and in $L^1(\rho)$.

\medskip
\noindent\textbf{Step 3 (log-likelihood convergence under $Q$).}
Since $Q\ll \rho$, the almost sure convergence in Step~2 also holds $Q$-a.s.
Moreover, $q>0$ $Q$-a.s.\ and $q_M>0$ $Q_M$-a.s.; hence $q_M(\Pi_M X)>0$ for $Q$-a.e.\ $X$, and $\log q(X)$ and $\log q_M(\Pi_M X)$ are well-defined $Q$-a.s.
Therefore, for i.i.d.\ $X_1,\dots,X_n\sim Q$,
\[
\log q_M(\Pi_M X_i)\longrightarrow \log q(X_i)
\qquad\text{a.s. for each }i,
\]
and summing over $i=1,\dots,n$ yields the claimed convergence of the log-likelihoods.
\end{proof}

\end{document}